\documentclass[a4paper]{article}


\usepackage{amsthm}
\usepackage{amssymb}
\usepackage{amsmath}
\usepackage{stmaryrd}

\usepackage{tikz}
\usetikzlibrary{arrows,automata}

\usepackage{hyperref}
\hypersetup{bookmarks = true, colorlinks = false}

\usepackage
{geometry}

\usepackage{color}
\usepackage{gb4e}
\noautomath

\parskip=0.5em

\setcounter{secnumdepth}{4}


\theoremstyle{plain}
\newtheorem{definition}{Definition}

\theoremstyle{plain}
\newtheorem{lemma}{Lemma}

\theoremstyle{plain}
\newtheorem{theorem}{Theorem}

\theoremstyle{plain}
\newtheorem{example}{Example}

\theoremstyle{plain}
\newtheorem{fact}{Fact}



\newcommand{\PL}{\mathsf{PL}}
\newcommand{\conwon}{\mathsf{ConWON}}


\newcommand{\AP}{\mathsf{AP}}

\newcommand{\AAA}{\mathbf{A}}
\newcommand{\EEE}{\mathbf{E}}

\newcommand{\cond}[1]{[#1]}

\newcommand{\dcon}[1]{\langle #1 \rangle}


\newcommand{\CC}{\mathtt{C}}
\newcommand{\DD}{D}

\newcommand{\SC}{\mathtt{C}}

\newcommand{\DDD}{\mathbb{D}}

\newcommand{\sete}[1]{\lVert #1 \rVert}

\newcommand{\updd}[2]{#1 \oplus #2}

\newcommand{\HI}[1]{\mathbf{HI}(#1)}

\newcommand{\EN}[1]{\bigcap #1}


\newcommand{\MM}{\mathtt{M}}


\newcommand{\defg}[1]{|#1|}

\newcommand{\updf}[2]{#1 + #2}


\newcommand{\logv}{\mathsf{V}}

\newcommand{\cona}{\rhd}

\newcommand{\stas}[1]{|#1|}


\newcommand{\exas}[1]{\textit{#1}}

\newcommand{\ecs}[1]{{\color{blue}(#1)}}

\newcommand{\fcut}[1]{}

\newcommand{\bee}{\begin{exe}}
\newcommand{\eee}{\end{exe}}


\newcommand{\defstyle}{\textbf}

\title{A logical theory for conditional weak ontic necessity \\ based on context update\footnote{During the development of this paper, I had many valuable meetings with and got many useful comments from Valentin Goranko. I would like to thank him for all the kind help. Thanks also go to the audience of seminars at Beijing Normal University and Southwest University and a conference at Nankai University. This research was supported by the National Social Science Foundation of China (No. 19BZX137).}}
\author{
Fengkui Ju \smallskip \\
{\small School of Philosophy, Beijing Normal University, Beijing, China} \\
{\small \emph{fengkui.ju@bnu.edu.cn}}
}
\date{}

\begin{document}

\maketitle

\begin{abstract}
\noindent Weak ontic necessity is the ontic necessity expressed by ``should'' or ``ought to'' in English. An example of it is ``I should be dead by now''. A feature of this necessity is whether it holds does not have anything to do with whether its prejacent holds. In this paper, we present a logical theory for conditional weak ontic necessity based on context update. A context is a set of ordered defaults, determining expected possible states of the present world. Sentences are evaluated with respect to contexts. When evaluating the conditional weak ontic necessity with respect to a context, we first update the context with the antecedent, then check whether the consequent holds with respect to the updated context. The logic is complete. Our theory combines premise semantics and update semantics for conditionals.

\medskip

\noindent \textbf{Keywords:} Conditional weak ontic necessity; Ordered defaults; Expected possible states; Update
\end{abstract}

\section{Introduction}
\label{sec:Introduction}

\subsection{Modalities}
\label{subsec:Modalities}


\emph{Modalities} locate their \emph{prejacents} in spaces of possibilities~\cite{vonFintel06}. Different modalities locate their prejacents in different kinds of spaces: \emph{epistemic}, \emph{deontic}, and so on, that is, different modalities have different \emph{flavors}~\cite{Matthewson16}. Different modalities locate their prejacents in different ways: \emph{universal}, \emph{existential}, and so on, that is, different modalities have different \emph{forces}~\cite{Matthewson16}. The modalities related to universal quantifications are called \emph{necessities} and those related to existential quantifications are called \emph{possibilities}\footnote{Note that there are two different senses for the word \emph{possibility} in this paragraph.}. We look at two examples:

\begin{exe}
\ex \exas{The man talking aloud \textbf{might} be drunk.}
\ex \exas{Bob \textbf{must} go to school.}
\end{exe}

\noindent The first sentence says that ``\exas{The man talking aloud is drunk}'' is true at \emph{some epistemic} possibility, and the second one says that ``\exas{Bob goes to school}'' is true at \emph{all deontic} possibilities.


Modalities are complex and there are many theories of them in the literature. We refer to \cite{vonFintel06} and \cite{Matthewson16} for some general discussions. The most influential theory in linguistics was proposed by Kratzer in \cite{Kratzer91}, among her other works. Its framework consists of a set of possible worlds and two functions, respectively called a \emph{modal base} and an \emph{ordering source}. For every possible world, a modal base specifies a set of propositions, determining the accessible worlds to this world, and an ordering source also specifies a set of propositions, ordering possible worlds from the perspective of this world. Many modalities can be interpreted in this framework.

\subsection{Conditional modalities}
\label{subsec:Conditional modalities}


Conditional modalities are those sentences claiming that a modalized sentence is the case in a proposed scenario, which may or may not be actual~\cite{vonFintel2011}. Here are two examples:

\bee
\ex \exas{If the baby crying is not hungry, he \textbf{must} be angry.} \label{exe:If the baby crying is not hungry}
\ex \exas{If I were there now, I \textbf{should} help the victims as well.} \label{exe:If I were there now}
\eee

What are called \emph{conditionals} are always conditional modalities and there are no genuinely bare conditionals~\cite{Kratzer86}. Those seemly bare conditionals have implicit modalities. For example, the following sentence has an implicit epistemic necessity:

\bee
\ex \exas{If the man approaching is not Jack, he is Zack.}
\eee

It is not always clear what the implicit modality is in seemly bare conditionals. Here is an example.

\bee
\ex \exas{If the match were struck, it would light.} \label{exe:If the match were struck}
\eee

\noindent Leitgeb \cite{Leitgeb12} claims that the implicit modality in this sentence is ``necessary'' but Wawer and Wro\'{n}ski \cite{WawerWronski15} disagree.

Conditional modalities can be classified into two classes from a semantic perspective: those whose antecedent is compatible with the speaker's beliefs and those whose antecedent is contrary to the speaker's beliefs. The latter is commonly called \emph{counterfactual} conditionals. No name for the former has been widely established in the literature yet. Following some sources, such as \cite{Rott99}, we call them \emph{open} conditionals. For example, Sentence \ref{exe:If the baby crying is not hungry} is an open conditional and Sentence \ref{exe:If I were there now} is a counterfactual conditional in typical situations.

There is a syntactic distinction between English conditionals: \emph{indicative} and \emph{subjunctive} ones. For example, Sentence \ref{exe:If the baby crying is not hungry} is an indicative conditional and Sentence \ref{exe:If I were there now} is a subjunctive one. The relation between this syntactic distinction and the abovementioned semantic distinction is complicated. Simply speaking, they are not identical. We refer to \cite{Anderson51} and \cite{vonFintel2011} for some counter-examples.
 
Here is a fact. Many other languages (if not all) do not have a syntactic characterization for open and counterfactual conditionals~\cite{Yong17}. For example, there are many (if not most) conditionals in Chinese and Vietnamese, which are open or counterfactual in a situation can only be decided semantically.
 

How do we formally make sense of conditionals? There have been many studies on that. Generally speaking, the main research lines include \emph{ordering semantics}, \emph{premise semantics}, \emph{probability semantics}, and \emph{belief revision approach}. All these lines are greatly influenced by Ramsey's Test, proposed in \cite{Ramsey29}, and there are all kinds of connections among them. We refer to \cite{EgreRott21} for general discussions.

Here we briefly mention some main references in ordering semantics and premise semantics closely related to our work. We will discuss some of them in detail later.

Ordering semantics was proposed by Stalnaker \cite{Stalnaker68} and Lewis \cite{Lewis73}, which is the most commonly accepted theory for counterfactual conditionals. The core idea of this theory is that a counterfactual is true in the present state of the world if its consequent is true in all the alternative states satisfying the antecedent, which are most similar to the present state. The ordering of \emph{being more similar} plays a crucial role in this theory, which is how it gets its name.

Premise semantics was proposed by Veltman \cite{Veltman76} and Kratzer \cite{Kratzer79}. Its core idea is that evaluation of a conditional in a situation involves evaluation of its consequent in situations where not only its antecedent but also some \emph{additional} premises are satisfied. This idea was already discussed in \cite{Chisholm46} and \cite{Goodman47}, among others.

The two approaches influenced many works. Here we mention two important ones. As said above, Kratzer \cite{Kratzer91} proposed a general framework to deal with modalities. This work also contains a way to handle conditionals, which combines the two approaches: Premises are used to induce orderings in this work. Veltman \cite{Veltman05} provided a semantics for counterfactual conditionals, based on an update mechanism and premise semantics.

\subsection{Weak ontic necessity}
\label{subsec:Weak ontic necessity}

\emph{Ontic possibilities} are possible states in which our world could be in the \emph{ontic} sense: That they are possible is independent of our knowledge about the present state of the world. Ontic possibilities can be easily confused with \emph{epistemic} ones. Epistemic possibilities are possible states in which our world could be in the \emph{epistemic} sense: That they are possible is due to our ignorance about the present state of the world.

The weak ontic necessity, explicitly identified and called a metaphysical necessity by Copley \cite{Copley06}, is the ontic necessity that can be expressed by ``should'' or ``ought to'' in English. Here are some examples:

\begin{exe}
\ex Suppose Jones is in a crowded office building when a severe earthquake hits. The building topples. By sheer accident, nothing falls upon Jones; the building just happens to crumble in such a way as not to touch the place where he is standing. He emerges from the rubble as the only survivor. Talking to the media, Jones says the following: \par \vspace{3pt} \exas{I \textbf{ought to} be dead right now.} \label{I ought to be dead right now} \ecs{From \cite{Yalcin16}}
\ex \exas{Our guests \textbf{ought to} be home by now. They left half-an-hour ago, have a fast car, and live only a few miles away. However, they are not home yet.} \ecs{Adapted from \cite{Leech71}}
\ex \exas{The beer \textbf{should} be cold by now. I put it into the refrigerator one hour ago. But I have absolutely no idea whether it is.} \ecs{Adapted from \cite{Copley06}}
\ex Consider Rasputin. He was hard to kill. First his assassins poisoned him, then they shot him, then they finally drowned him. Let us imagine that we were there. Let us suppose that the assassins fed him pastries dosed with a powerful, fast-acting poison, and then left him alone for a while, telling him they would be back in half an hour. Half an hour later, one of the assassins said to the others, confidently, ``\exas{He \textbf{ought to} be dead by now.}'' The others agreed, and they went to look. Rasputin opened his eyes and glared at them. ``\exas{He \textbf{ought to} be dead by now!}'' they said, astonished. \ecs{From \cite{Thomson08}}
\end{exe}

\noindent It can be seen that the weak necessity has nothing to do with whether its prejacent is true.

After Copley \cite{Copley06}, the weak ontic necessity was discussed by Swanson \cite{Swanson08}, Thomson \cite{Thomson08}, Finlay \cite{Finlay09}, Yalcin \cite{Yalcin16} and Ju \cite{Ju22}. Yalcin \cite{Yalcin16} gave a formal theory for the weak ontic necessity influenced by \cite{Veltman96}. Ju \cite{Ju22} presented a formal theory for the weak ontic necessity adopting a temporal perspective. Generally speaking, this modality has not received enough attention yet.

\subsection{Conditional weak ontic necessity}
\label{subsec:Conditional weak ontic necessity}

Conditional weak ontic necessity is common in reality. Here are some examples:

\begin{exe}
\ex In an early morning, a group of soldiers come to a villa to arrest a man. They enter this man's bedroom but do not find him. The leader says the following: \par \vspace{3pt} \exas{If his quilt is still warm, he \textbf{should} not be far yet}.
\ex \exas{My car \textbf{should} be parked on the street outside, but if it was stolen last night, it \textbf{should} be in a chop shop by now.} \ecs{Adapted from \cite{Yalcin16}} \label{exe:My car should be parked on the street outside}
\ex \exas{If her mother had taken a metro, she \textbf{should} be home by now.}
\ex \exas{If the alarm had sounded yesterday, I \textbf{should} have ignored it.} \ecs{From \cite{Dudman84}}
\end{exe}

How do we formally deal with conditional weak ontic necessity? This is not a trivial question. To see this, note that conditional weak ontic necessity is clearly not monotonic, as shown by Sentence \ref{exe:My car should be parked on the street outside}.

As far as we read, no work in the literature explicitly handles conditional weak ontic necessity yet. As said, there is not even enough work on the weak ontic necessity yet.

\subsection{Our work}
\label{subsec:Our work}

In this work, we present a logical theory for conditional weak ontic necessity. This theory combines three approaches. Firstly, following premise semantics, we think that additional propositions play a role in evaluating conditionals. Secondly, we think that when evaluating a conditional, we should first update something with its antecedent and then evaluate its consequent with respect to the update result, which follows the update semantics proposed by Veltman \cite{Veltman05}. Thirdly, our understanding of the weak ontic necessity follows Ju \cite{Ju22}.

Intuitively, we understand the conditional weak ontic necessity in the following way. The agent has a system of defaults, called a context, that determines which possible states of the present world are expected. The weak ontic necessity quantifies over the set of expected possible states. A conditional weak ontic necessity is true with respect to a context if its consequent is true with respect to the result of updating the context with its antecedent.

Conceptually, our logic differents from ordering semantics in dealing with conditionals. Technically, its flat fragment is equivalent to the flat fragment of the conditional logic $\logv$ proposed by Lewis \cite{Lewis73}, which is based on ordering semantics.

The rest of the paper is structured as follows. In Section \ref{sec:Our approach to conditional weak ontic necessity}, we state our approach to conditional weak ontic necessity in detail. In Section \ref{sec:A logical theory for conditional weak ontic necessity}, we present a logical theory for conditional weak ontic necessity. In Section \ref{sec:Comparisons}, we compare our theory to some closely related theories on general conditionals. In Section \ref{sec:Looking backward and forward}, we summarize our work and mention some future directions. Section \ref{sec:Proofs} contains proofs for some results.

\section{Our approach to conditional weak ontic necessity}
\label{sec:Our approach to conditional weak ontic necessity}


Fix an agent and a moment. The world can evolve in different ways from this moment and there are different possible futures. However, not all of the possible futures are \emph{expected} for the agent. For example, if it usually takes one hour by bus from your home to the airport, then those possible futures where it takes two hours are not expected.

The agent has some \emph{defaults} concerning which possible futures are expected. These defaults can be of many kinds: natural laws (\emph{Light is faster than sound}), common natural phenomena (\emph{It is cold during the winter}), planned events (\emph{Jack will go to a dinner with Jane tomorrow}), or simply customs (\emph{Jones wears a pink cap on sunny days}).

The defaults may conflict with each other and there is an \emph{order} among them. The order results from many factors. Here we mention two important ones. First, some kinds of defaults usually have higher priority than others. For example, natural laws usually have the highest priority. Second, special defaults usually have higher priority than general ones. Here is an example. Suppose there are two defaults: (1) \emph{It is cold in Beijing in winter}; (2) \emph{The El Ni\~{n}o condition causes warm winter in Beijing}. Suppose it is fall and the El Ni\~{n}o condition has occurred. In this case, we would expect a warm winter in Beijing.

The system of defaults determines which possible futures are expected and which are not.

The sentence ``\exas{If $\phi$ is the case in the future, then $\psi$ should be the case in the future}'' is true if and only if $\psi$ is true with respect to all expected possible futures determined by the result of putting $\phi$ to the system of defaults with the highest priority.

Suppose the world evolves and one possible future is realized, which might not be expected. Consequently, we have the following.

The world is in a state now, which has many alternatives. The agent has a system of defaults concerning what possible states are expected. According to it, the present state might not be expected.

The sentence ``\exas{If $\phi$ is the case now, then $\psi$ should be the case now}'' is true if and only if $\psi$ is true with respect to all expected possible states determined by the result of putting $\phi$ to the system of defaults with the highest priority.


The proposition $\phi$ can be true or not and the agent might know its truth value or not. Assume she does not know. Then ``\exas{If $\phi$ is the case now, then $\psi$ should be the case now}'' is an indicative conditional for her. Assume $\phi$ is true and she knows. Then ``\exas{If $\phi$ is the case now, then $\psi$ should be the case now}'' is an indicative conditional for her\footnote{In this case, the sentence seems odd. We do not have clear ideas about the reason. It is possible that ``should'' can be both an ontic modal and an epistemic modal and the oddness is related to the ambiguity. It is actually a controversial issue in the literature about whether ``should'' can be an epistemic modal. We refer to \cite{Copley06} and \cite{Yalcin16} for some discussions.}. Assume $\phi$ is false and she knows. Then ``\exas{If $\phi$ is the case now, then $\psi$ should be the case now}'' is a counterfactual conditional for her. Whether the agent knows the truth value of $\phi$ does not matter for whether she accepts the sentence ``\exas{If $\phi$ is the case now, then $\psi$ should be the case now}''.


The following example can illustrate our understanding.

\begin{example} \label{example:tiger}
A tribe captured many animals and put them in isolated cages in a room. Every evening they randomly open two cages, lock the room's door and leave. The next morning, they come and release the survivor. One day, three animals remain: a tiger, a dog, and a goat. The chief says:

\begin{exe}
\ex \exas{If the goat is still alive tomorrow, the dog \textbf{should} be dead.} \label{exe:If the goat is still alive tomorrow}
\end{exe}

\noindent The next morning they come to the front of the door. The chief says:

\begin{exe}
\ex \exas{If the goat is still alive now, the dog \textbf{should} be dead.} \label{exe:If the goat is still alive now}
\end{exe}

\noindent They open the door and see that the tiger killed the goat. The chief says:

\begin{exe}
\ex \exas{If the goat were still alive now, the dog \textbf{should} be dead.} \label{exe:If the goat were still alive now}
\end{exe}
\end{example}

Intuitively, the three sentences are true. How?

It seems natural to assume the following with the chief's default system. There are three defaults: \emph{Tigers kill dogs}, \emph{Tigers kill goats} and \emph{Dogs kill goats}. The second default has the highest priority, and the first and third ones are not comparable.

How is the first sentence true? Firstly, we put \emph{The goat will still be alive tomorrow} to the default system with the highest priority. According to the new default system, those possible futures where the goat is dead are not expected. For all expected possible futures, the dog will be dead. Then the dog should be dead tomorrow.

How are the second and third sentences true? Firstly, we put \emph{The goat is still alive now} to the default system with the highest priority. According to the new default system, those possible states of the present world where the tiger is dead are not expected. Then the dog should be dead now.

\section{A logical theory for conditional weak ontic necessity}
\label{sec:A logical theory for conditional weak ontic necessity}

\subsection{Languages}
\label{subsec:Languages}

\begin{definition}[The languages $\Phi_{\PL}$ and $\Phi_{\conwon}$] \label{def:The languages PL and ConWON}
Let $\AP$ be a countable set of atomic propositions. The language $\Phi_{\PL}$ of the Propositional Logic ($\PL$) is defined as follows, where $p$ ranges over $\AP$:

\[\alpha ::= p \mid \bot \mid \neg \alpha \mid (\alpha \land \alpha)\]

The language $\Phi_{\conwon}$ of the Logic for Conditional Weak Ontic Necessity ($\conwon$) is defined as follows, where $\alpha \in \Phi_{\PL}$:

\[\phi ::= p \mid \bot \mid \neg \phi \mid (\phi \land \phi) \mid \cond{\alpha} \phi\]
\end{definition}

\noindent The intuitive reading of $\cond{\alpha} \phi$ is \emph{if $\alpha$, $\phi$ should be true.}

What follow are some derivative expressions:

\begin{itemize}
\item The propositional connectives $\top, \lor, \rightarrow$ and $\leftrightarrow$ are defined as usual.
\item Define the dual $\dcon{\alpha} \phi$ of $\cond{\alpha} \phi$ as $\neg \cond{\alpha} \neg \phi$. This operator does not seem to have a natural meaning but we introduce it due to technical reasons.
\item Define $\Box \phi$ as $\cond{\top} \phi$, meaning \emph{$\phi$ should be true}.
\item Define the dual $\Diamond \phi$ of $\Box \phi$ as $\neg \Box \neg \phi$. This operator does not seem to have a natural meaning as well.
\item We define $\EEE \alpha$, where $\alpha \in \Phi_\PL$, as $\dcon{\alpha} \top$, meaning \emph{$\phi$ is possible}. Its dual $\AAA \alpha$ is defined as $\neg \EEE \neg \alpha$, meaning \emph{$\phi$ is necessary}.
\end{itemize}

We use $\Phi_{\conwon\text{-}\mathsf{1}}$ to denote the \emph{flat fragment} of $\Phi_\conwon$, which contains no nested conditionals.

\subsection{Semantic settings}
\label{subsec:Semantic settings}

\subsubsection{Models}
\label{subsubsec:Models}

\begin{definition}[Models] \label{def:Models}
A model is a tuple $\MM = (W, V)$, where $W$ is a nonempty set of states and $V: \mathsf{AP} \to \mathcal{P}(W)$ is a valuation.
\end{definition}

\noindent Intuitively, $W$ consists of all possible states of the world at an instant.

\subsubsection{Contexts}
\label{subsubsec:Contexts}

\begin{definition}[Contexts] \label{def:Contexts}
Let $\MM = (W, V)$ be a model. A pair $\CC = (\DDD, \succ)$ is called a \defstyle{context for $\MM$} if $\DDD$ is a finite (possibly empty) set of (possibly empty) subsets of $W$ and $\succ$ is an irreflexive and transitive relation on $\DDD$. The elements of $\DDD$ are called \defstyle{defaults}.
\end{definition}

\noindent Intuitively, $\DD_1 \succ \DD_2$ means that $\DD_1$ has higher priority than $\DD_2$.

\subsubsection{Expected states by contexts}
\label{subsubsec:Expected states by contexts}

\begin{definition}[Hierarchy of defaults in contexts] \label{def:Hierarchy of defaults in contexts}
Let $\CC = (\DDD, \succ)$ be a context for a model $\MM$. Define $\HI{\CC}$, \defstyle{the hierarchy of defaults in $\CC$}, as a sequence $(\DDD_0, \dots, \DDD_n)$ constructed as follows:

\begin{itemize}
\item Let $\DDD_0 = \{\DD \in \DDD \mid \text{$\DD$ is a maximal element of $\DDD$}\}$;
\item[\vdots]
\item If $\DDD_0 \cup \dots \cup \DDD_k \neq \DDD$, let $\DDD_{k+1} = \{\DD \in \DDD \mid \text{$\DD$ is a maximal element of $\DDD - (\DDD_0 \cup \dots \cup \DDD_k)$}\}$, or else stop.
\end{itemize}
\end{definition}

Here are some observations about $\HI{\CC}$. Firstly, $\HI{\CC}$ cannot be an empty sequence. Secondly, if $\DDD_0 = \emptyset$, then $n = 0$. Thirdly, $\DDD_0, \dots, \DDD_n$ are pairwise disjoint and their union is $\DDD$.

\begin{example}[Hierarchy of defaults in contexts] \label{example:Hierarchy of defaults in contexts}
~
\begin{itemize}
\item Let $\CC = (\DDD, \succ)$ be a context, where $\DDD = \emptyset$. Then $\HI{\CC} = \emptyset$. Here $\emptyset$ is not the empty sequence but the sequence with the empty set as its only element.
\item Let $\CC = (\DDD, \succ)$ be a context, where $\DDD = \{\DD_1, \DD_2, \DD_3\}$, $\DD_1 \succ \DD_2$ and $\DD_1 \succ \DD_3$. Then $\HI{\CC} = (\{\DD_1\},\{\DD_2,\DD_3\})$.
\end{itemize}
\end{example}

Similar ways of defining hierarchies with respect to an ordered set can also be found in some literature on social choice theory such as \cite{JiangEtal18}.

\begin{definition}[Expected states by contexts] \label{def:Expected states by contexts}
Let $\MM = (W,V)$ be a model, $\CC = (\DDD, \succ)$ be a context, and $\HI{\CC} = (\DDD_0, \dots, \DDD_n)$. Define the set $\sete{\CC}$ of \defstyle{expected states by $\CC$}, as follows:

\begin{itemize}
\item Suppose $\EN{\DDD_0} = \emptyset$. Then $\sete{\CC} := \EN{\DDD_0}$.
\item Suppose $\EN{\DDD_0} \neq \emptyset$. Then $\sete{\CC} := \EN{\DDD_0} \cap \dots \cap \EN{\DDD_k}$, where $(\DDD_0, \dots, \DDD_k)$ is the longest initial segment of $(\DDD_0, \dots, \DDD_n)$ such that $\EN{\DDD_0} \cap \dots \cap \EN{\DDD_k} \neq \emptyset$.
\end{itemize}
\end{definition}

\noindent Note that specially $\EN{\emptyset} = W$. This definition follows the following idea: From the top level of the hierarchy, consider as many levels as possible.

\begin{example}[Expected states by contexts] \label{example:Expected states by contexts}
Let $\MM = (W,V)$ be a model, where $W = \{w_1, w_2, w_3,$ $w_4\}$.

\begin{itemize}
\item Let $\CC = (\DDD, \succ)$ be a context such that $\HI{\CC} = \emptyset$. Then $\sete{\CC} = \EN{\emptyset} = W$.
\item Let $\CC = (\DDD, \succ)$ be a context such that $\HI{\CC} = (\{\emptyset\},\{W\})$. Then $\sete{\CC} = \EN{\{\emptyset\}} = \emptyset$.
\item Let $\CC$ be a context such that $\HI{\CC} = (\{\DD_1,\DD_2\},\{\DD_3\})$, where $\DD_1 = \{w_1, w_2\}, \DD_2 = \{w_2, w_3\}$ and $\DD_3 = \{w_3, w_4\}$. Then $\sete{\CC} = \EN{\{\DD_1, \DD_2\}} = \DD_1 \cap \DD_2 = \{w_2\}$.
\end{itemize}
\end{example}

\subsubsection{Context update}
\label{subsubsec:Context update}

\begin{definition}[Update of contexts] \label{def:Update of contexts}
Let $\CC = (\DDD, \succ)$ be a context for a model $\MM = (W,V)$. Let $\DD$ be a subset of $W$. We define the \defstyle{update of $\CC$ with $\DD$} as the context $\updd{\CC}{\DD} = (\DDD', \succ')$, where

\begin{itemize}
\item $\DDD' = \DDD \cup \{\DD\}$;
\item For all $\DD_1$ and $\DD_2$ in $\DDD'$, $\DD_1 \succ' \DD_2$ if and only if one of the following conditions is met:
\begin{itemize}
\item $\DD_1 \neq \DD$, $\DD_2 \neq \DD$ and $\DD_1 \succ \DD_2$;
\item $\DD_1 = \DD$ and $\DD_2 \neq \DD$.
\end{itemize}
\end{itemize}
\end{definition}

\noindent Intuitively, $\updd{\CC}{\DD}$ is got in the following way. First, we eliminate $\DD$ from $\CC$, if $\DD$ occurs in $\CC$. Second, we add $\DD$ to $\CC$ as the greatest element.

\subsubsection{Contextualized pointed models}
\label{subsubsec:Contextualized pointed models}

\begin{definition}[Contextualized pointed models] \label{def:Contextualized pointed models}
For every model $\MM$, context $\CC$ and state $w$, $(\MM, \CC, w)$ is called a \defstyle{contextualized pointed model}.
\end{definition}

\begin{example}[Contextualized pointed models] \label{example:A contextualized pointed model}
Figure \ref{fig:A contextualized pointed model} indicates a contextualized pointed model $(\MM, \CC, w_1)$, where

\begin{itemize}
\item $\DDD$ has three defaults: $\DD_1 = \{w_1, w_2\}, \DD_2 = \{w_1, w_2, w_3\}$ and $\DD_3 = \{w_4\}$;
\item $\DD_1 \succ \DD_2 \succ \DD_3$.
\end{itemize}

Then $\HI{\CC} = (\{\DD_1\},\{\DD_2\},\{\DD_3\})$ and $\sete{\CC} = \{w_1,w_2\}$.

The intuitive reading of $(\MM, \CC, w_1)$ is as follows. The present state is $w_1$, which has three alternatives: $w_2, w_3, w_4$. All four states evolve from an implicit past state $r$. By the context $\CC$, neither $w_3$ nor $w_4$ is expected.

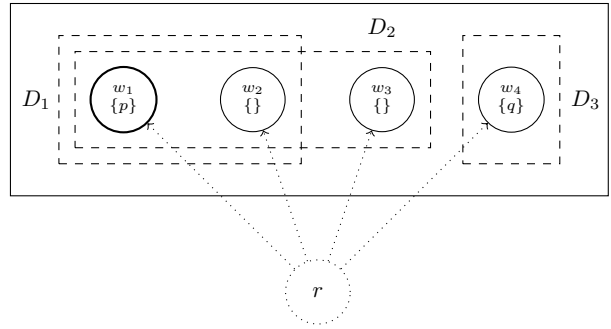
\begin{figure}[h]
\begin{center}
\begin{tikzpicture}[node distance=20mm,scale=0.85,every node/.style={transform shape}]
\tikzstyle{every state}=[draw=black,text=black,minimum size=10mm]

\draw[] (-17.5mm,-15mm) rectangle (75mm,15mm);
\draw[dashed] (-10mm,-10mm) rectangle (27.5mm,10mm);
\draw[dashed] (-7.5mm,-7.5mm) rectangle (47.5mm,7.5mm);
\draw[dashed] (52.5mm,-10mm) rectangle (67.5mm,10mm);

\node[state,thick] (s-1-1) [] {$w_1 \atop \{p\}$};
\node[state] (s-1-2) [right of=s-1-1] {$w_2 \atop \{\}$};
\node[state] (s-1-3) [right of=s-1-2] {$w_3 \atop \{\}$};
\node[state] (s-1-4) [right of=s-1-3] {$w_4 \atop \{q\}$};

\node [left=10mm] at (s-1-1) {$\DD_1$};
\node [above=8.5mm] at (s-1-3) {$\DD_2$};
\node [right=8mm] at (s-1-4) {$\DD_3$};

\node[state,dotted] (s-0-1) [below=30mm, right=5mm] at (s-1-2) {$r$};

\path
(s-0-1) edge [->,left,dotted] node {} (s-1-1)
(s-0-1) edge [->,left,dotted] node {} (s-1-2)
(s-0-1) edge [->,left,dotted] node {} (s-1-3)
(s-0-1) edge [->,left,dotted] node {} (s-1-4);

\end{tikzpicture}
\end{center}

\caption{A contextualized pointed model}
\label{fig:A contextualized pointed model}
\end{figure}
\end{example}

\subsection{Semantics}
\label{subsec:Semantics}


\begin{definition}[Semantics for $\Phi_\conwon$] \label{def:Semantics for Phi ConWON}

Consider a model $\MM$ and a context $\CC$.

\begin{itemize}
\item For every $\alpha \in \Phi_{\PL}$, we define the \defstyle{default generated by $\alpha$} as $\defg{\alpha} = \{w \mid \MM, \CC, w \Vdash \alpha\}$.
\item For every $\alpha \in \Phi_{\PL}$, we define the \defstyle{update of $\CC$ with $\alpha$} as $\updf{\CC}{\alpha} = \updd{\CC}{\defg{\alpha}}$.
\item \defstyle{Truth conditions for formulas of $\Phi_{\conwon}$ at contextualized pointed models} are defined as follows:

\medskip

\begin{tabular}{lll}
$\MM, \CC, w \Vdash p$ & $\Leftrightarrow$ & \parbox[t]{27em}{$w \in V(p)$} \\
$\MM, \CC, w \not \Vdash \bot$ & & \\
$\MM, \CC, w \Vdash \neg \phi$ & $\Leftrightarrow$ & \parbox[t]{27em}{$\MM, \CC, w \not \Vdash \phi$} \\
$\MM, \CC, w \Vdash \phi \land \psi$ & $\Leftrightarrow$ & \parbox[t]{27em}{$\MM, \CC, w \Vdash \phi$ and $\MM, \CC, w \Vdash \psi$} \\
$\MM, \CC, w \Vdash \cond{\alpha} \phi$ & $\Leftrightarrow$ & \parbox[t]{27em}{$\MM, \updf{\CC}{\alpha}, u \Vdash \phi$ for every $u \in \sete{\updf{\CC}{\alpha}}$}
\end{tabular}
\end{itemize}
\end{definition}

It can be verified that

\medskip

\begin{tabular}{lll}
$\MM, \CC, w \Vdash \dcon{\alpha} \phi$ & $\Leftrightarrow$ & \parbox[t]{27em}{$\MM, \updf{\CC}{\alpha}, u \Vdash \phi$ for some $u \in \sete{\updf{\CC}{\alpha}}$} \\
$\MM, \CC, w \Vdash \Box \phi$ & $\Leftrightarrow$ & \parbox[t]{27em}{$\MM, \CC, u \Vdash \phi$ for every $u \in \sete{\CC}$} \\
$\MM, \CC, w \Vdash \Diamond \phi$ & $\Leftrightarrow$ & \parbox[t]{27em}{$\MM, \CC, u \Vdash \phi$ for some $u \in \sete{\CC}$} \\
$\MM, \CC, w \Vdash \EEE \alpha$ & $\Leftrightarrow$ & \parbox[t]{27em}{$\MM, \CC, u \Vdash \alpha$ for some $u$} \\
$\MM, \CC, w \Vdash \AAA \alpha$ & $\Leftrightarrow$ & \parbox[t]{27em}{$\MM, \CC, u \Vdash \alpha$ for every $u$}
\end{tabular}

We say that a formula $\phi$ is \emph{valid} ($\models_\conwon \phi$) if $\MM,\CC,w \Vdash \phi$ for every contextualized pointed model $(\MM, \CC, w)$.


\begin{example} \label{example:a contextualized pointed model for the example of tiger}
We show how Example \ref{example:tiger} is analyzed in the formalization. We use $f_x$ to indicate \emph{$x$ is free} and use $a_x$ to indicate \emph{$x$ is alive}, where $x$ can be $t$ (the tiger), $d$ (the dog) or $g$ (the goat).

Figure \ref{fig:A contextualized pointed model for the example of tiger} indicates a contextualized pointed model $(\MM, \CC, w_3)$ for Example \ref{example:tiger}, where $\CC = (\DDD, \succ)$, where
\begin{itemize}
\item $\DDD$ has three defaults: $\DD_1 = \{w_1, w_3, w_4, w_5, w_6\}$, $\DD_2 = \{w_1, w_2, w_3, w_5, w_6\}$ and $\DD_3 = \{w_1, w_2, w_3, w_4, w_5\}$;
\item $\DD_2 \succ \DD_1$ and $\DD_2 \succ \DD_3$.
\end{itemize}

The default $\DD_1$ indicates that \emph{if the tiger and the dog are free, then the tiger is alive.} The defaults $\DD_2$ and $\DD_3$ are understood similarly.

The formula $\cond{a_g} \neg a_d$ means \emph{if the goat is still alive now, the dog should be dead}. It is true at $(\MM, \CC, w_3)$. How?

\begin{itemize}
\item The update of $\CC$ with $a_g$, $\updf{\CC}{a_g}$, is $(\DDD', \succ')$, where
\begin{itemize}
\item $\DDD'$ has four defaults: $\DD_1, \DD_2, \DD_3$, and $\defg{a_g} = \{w_1, w_2, w_4, w_6\}$;
\item $\defg{a_g} \succ \DD_1$, $\defg{a_g} \succ \DD_2$, $\defg{a_g} \succ \DD_3$, $\DD_2 \succ \DD_1$ and $\DD_2 \succ \DD_3$.
\end{itemize}

It can be verified that $\HI{\CC} = (\{\defg{a_g}\},\{\DD_2\},\{\DD_1, \DD_3\})$ and $\sete{\updf{\CC}{a_g}} = \{w_1\}$.
\item We can see $\MM, \updf{\CC}{a_g}, u \Vdash \neg a_d$ for every $u \in \sete{\updf{\CC}{a_g}}$. Thus, $\MM, \CC, w_3 \Vdash \cond{a_g} \neg a_d$.
\end{itemize}

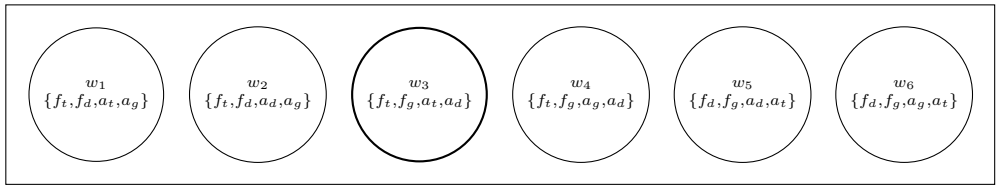
\begin{figure}[h]
\begin{center}
\begin{tikzpicture}[node distance=25mm,scale=0.85,every node/.style={transform shape}]
\tikzstyle{every state}=[draw=black,text=black,minimum size=18mm]

\draw[] (-14mm,-14mm) rectangle (139mm,14mm);

\node[state] (s-1-1) [] {$w_1 \atop \{f_t,f_d,a_t,a_g\}$};
\node[state] (s-1-2) [right of=s-1-1] {$w_2 \atop \{f_t,f_d,a_d,a_g\}$};
\node[state,thick] (s-1-3) [right of=s-1-2] {$w_3 \atop \{f_t,f_g,a_t,a_d\}$};
\node[state] (s-1-4) [right of=s-1-3] {$w_4 \atop \{f_t,f_g,a_g,a_d\}$};
\node[state] (s-1-5) [right of=s-1-4] {$w_5 \atop \{f_d,f_g,a_d,a_t\}$};
\node[state] (s-1-6) [right of=s-1-5] {$w_6 \atop \{f_d,f_g,a_g,a_t\}$};

\end{tikzpicture}
\end{center}

\caption{A contextualized pointed model for Example \ref{example:tiger}}
\label{fig:A contextualized pointed model for the example of tiger}
\end{figure}
\end{example}

\subsection{Remarks}
\label{subsec:Remarks}


It can be seen that the truth value of $\cond{\alpha} \phi$ at $(\MM,\CC,w)$ is not dependent on $w$. So our semantics is global. This results from that contexts are not dependent on states. This coincides with an important feature of the weak ontic necessity: Whether it holds at a state is not dependent on the state.


Conditionals are not monotonic in the semantics. Here is a counter-example. Let $(\MM, \CC, w_1)$ be a contextualized pointed model, where

\begin{itemize}
\item $\MM = (W,V)$, where $W = \{w_1, w_2\}$, $V(p) = \{w_1,w_2\}$ and $V(q) = \{w_2\}$;
\item $\CC = (\DDD,\succ)$, where $\DDD = \{\{w_2\}\}$.
\end{itemize}

\noindent It can be verified $\MM, \CC, w_1 \Vdash \cond{p} q$ but $\MM, \CC, w_1 \not \Vdash \cond{p \land \neg q} q$.

By closely observing this example, we can see that the failure of monotonicity is due to the following reason: When evaluating $\cond{p} q$, the default in $\CC$ plays a role, but when evaluating $\cond{p \land \neg q} q$, it is defeated by the default $\defg{p \land \neg q}$.

\subsection{An equivalent semantics}
\label{subsec:An equivalent semantics}


Technically, we can use nonempty sequences of sets of states as contexts without changing the set of formulas. Let us be precise.


\begin{definition}[Contexts] \label{def:Contexts S}
Let $\MM = (W, V)$ be a model. A nonempty finite sequence $\SC = (\DD_0,\dots,$ $\DD_n)$ of defaults is called a context for $\MM$.
\end{definition}

\begin{definition}[Expected states by contexts] \label{def:Expected states by contexts S}
Let $\MM = (W,V)$ be a model, $\SC = (\DD_0,\dots,\DD_n)$ be a context. Define the set $\sete{\SC}$ of expected states by $\SC$ as follows:

\begin{itemize}
\item Suppose $\DD_0 = \emptyset$. Then $\sete{\SC} := \DD_0$.
\item Suppose $\DD_0 \neq \emptyset$. Then $\sete{\SC} := \DD_0 \cap \dots \cap \DD_k$, where $(\DD_0, \dots, \DD_k)$ is the longest initial segment of $(\DD_0, \dots, \DD_n)$ such that $\DD_0 \cap \dots \cap \DD_k \neq \emptyset$.
\end{itemize}
\end{definition}

We use $\SC_1 ; \SC_2$ to indicate the \emph{concatenation} of two contexts $\SC_1$ and $\SC_2$.

\begin{definition}[Update of contexts] \label{def:Update of contexts S}
Let $\SC$ be a context for a model $\MM = (W,V)$. Let $\DD$ be a subset of $W$. We define the update of $\SC$ with $\DD$ as the context $\updd{\SC}{\DD} = \DD ; \SC$.
\end{definition}

Other ingredients of the semantics are given as before.


\begin{definition}[Cores of contexts] \label{def:Cores of contexts S}
Let $\SC = (\DD_0,\dots,\DD_n)$ be a context for a model $\MM$. Recursively define $\SC^\tau$, the core of $\SC$, as follows:

\begin{itemize}
\item Let $\DD_0^\tau = \DD_0$;
\item[$\vdots$]
\item If $\DD_{k+1}$ does not occur in $(\DD_0,\dots,\DD_k)^\tau$, let $(\DD_0,\dots,\DD_k,\DD_{k+1})^\tau = (\DD_0,\dots,\DD_k)^\tau ; \DD_{k+1}$, or else let $(\DD_0,\dots,\DD_k,\DD_{k+1})^\tau = (\DD_0,\dots,\DD_k)^\tau$.
\end{itemize}
\end{definition}

Here is an example for $\SC^\tau$: If $\SC = (\DD_0, \DD_1, \DD_0, \DD_2, \DD_1)$, then $\SC^\tau = (\DD_0, \DD_1, \DD_2)$. One default can occur in a context for more than one times. Intuitively, the function $\tau$ just keeps the occurrence of a default in a context that is closest to the beginning of the context.

\begin{lemma} \label{lemma:sequence}
Let $\SC$ be a context for a model $\MM = (W,V)$ and $\DD \subseteq W$. Then:

\begin{enumerate}
\item $\sete{\SC} = \sete{\SC^\tau}$; \label{lemma-sequence: core is core}
\item $(\updd{\SC}{\DD})^\tau = (\updd{\SC^\tau}{\DD})^\tau$. \label{lemma-sequence: one two core}
\end{enumerate}
\end{lemma}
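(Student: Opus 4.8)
The plan is to prove both parts by induction on the structure of the context $\SC$, using the recursive definition of $\SC^\tau$ (Definition \ref{def:Cores of contexts S}) and the definitions of $\sete{\cdot}$ and $\updd{\cdot}{\cdot}$. A convenient reformulation to keep in mind throughout: for any context $\SC = (\DD_0, \dots, \DD_n)$, the sequence $\SC^\tau$ is exactly $\SC$ with every repeated occurrence of a default deleted except the leftmost one, so $\SC$ and $\SC^\tau$ have the same set of entries and the same leftmost-occurrence order. The key elementary observation, which I would isolate as a sublemma, is that $\sete{\SC}$ depends only on this ordered list of first occurrences: deleting a later duplicate $\DD_{k+1}$ that already appeared among $\DD_0, \dots, \DD_k$ changes neither the running intersections $\DD_0 \cap \dots \cap \DD_j$ nor the point at which they first become (or fail to become) empty, because intersecting again with a set already in the intersection is idempotent. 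This immediately yields part \ref{lemma-sequence: core is core}.

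For part \ref{lemma-sequence: one two core}, recall $\updd{\SC}{\DD} = \DD ; \SC$ (Definition \ref{def:Update of contexts S}), so I must show $(\DD ; \SC)^\tau = (\DD ; \SC^\tau)^\tau$. Unwinding the recursive definition of $\tau$ applied to a sequence whose first entry is $\DD$: the core of $\DD ; (\EE_0, \dots, \EE_m)$ is $\DD$ followed by the core-construction run on $(\EE_0, \dots, \EE_m)$ but with $\DD$ already "seen", i.e. with every later occurrence of $\DD$ in the tail suppressed as well. Concretely, $(\DD ; \SC)^\tau = \DD ; (\SC \setminus \DD)^\tau$, where $\SC \setminus \DD$ denotes $\SC$ with all occurrences of $\DD$ removed; the same identity gives $(\DD ; \SC^\tau)^\tau = \DD ; (\SC^\tau \setminus \DD)^\tau$. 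So it suffices to show $(\SC \setminus \DD)^\tau = (\SC^\tau \setminus \DD)^\tau$, and since $\SC \setminus \DD$ and $\SC^\tau \setminus \DD$ both have the same ordered list of first occurrences (namely the first-occurrence list of $\SC$ with $\DD$ struck out), another application of the "$\tau$ depends only on the first-occurrence list" fact finishes it. I would make the bookkeeping precise by proving once and for all that $\SC^\tau$ is characterized by: (i) it is repetition-free, (ii) it has the same underlying set of entries as $\SC$, and (iii) for any two entries, their relative order in $\SC^\tau$ equals the order of their first occurrences in $\SC$; then show $(-)^\tau$ is idempotent and commutes appropriately with prepending and with deleting all copies of a fixed default.

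The main obstacle is purely notational rather than conceptual: the recursive definition of $\tau$ processes the sequence left to right and "remembers" which defaults have been emitted, so one has to be careful that the induction hypothesis is stated for the right generalization — in particular, part \ref{lemma-sequence: one two core} is really an instance of the more general claim that $(\SC_1 ; \SC_2)^\tau = (\SC_1 ; \SC_2^\tau)^\tau$ for the one-element prefix $\SC_1 = (\DD)$, and it may be cleanest to prove that general statement (or the "first-occurrence list" characterization) first and then specialize. Once the characterization of $\SC^\tau$ in terms of its first-occurrence list is in hand, both parts are short: part \ref{lemma-sequence: core is core} is the idempotence of running-intersection under duplicate deletion, and part \ref{lemma-sequence: one two core} is the compatibility of that deletion with prepending $\DD$.
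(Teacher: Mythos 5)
Your proposal is correct and follows essentially the same route as the paper's proof: part \ref{lemma-sequence: core is core} rests on the observation that a duplicate default does not change the running intersections (so $\sete{\cdot}$ depends only on the first-occurrence subsequence, i.e.\ on $\SC^\tau$), and part \ref{lemma-sequence: one two core} on the identity $(\DD ; \SC)^\tau = \DD ; (\SC - \DD)^\tau$ together with the compatibility of $\tau$ with deleting all occurrences of $\DD$ --- exactly the two facts the paper computes directly. The only difference is organizational: you package them as a characterization of $\SC^\tau$ as the repetition-free first-occurrence list, whereas the paper carries out the corresponding calculations inline.
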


\begin{proof}
~

1. Let $\SC = (\DD_0,\dots,\DD_n)$ and $\SC^\tau = (\DD_{i_0}, \dots, \DD_{i_m})$. Assume $\DD_0 \cap \dots \cap \DD_n = \emptyset$. Then $\sete{\SC} = \emptyset = \sete{\SC^\tau}$. Assume $\DD_0 \cap \dots \cap \DD_n \neq \emptyset$. Let $l$ be the greatest number such that $\DD_0 \cap \dots \cap \DD_l \neq \emptyset$. Assume $l = n$. Then $\sete{\SC} = \DD_0 \cap \dots \cap \DD_n = \DD_{i_0} \cap \dots \cap \DD_{i_m} = \sete{\SC^\tau}$. Assume $l < n$. Then $\DD_{l+1}$ does not occur in $(\DD_0,\dots,\DD_l)$. Then $(\DD_0,\dots,\DD_l)^\tau$ is a proper initial segment of $\SC^\tau$. Let $(\DD_0,\dots,\DD_l)^\tau = (\DD_{i_0}, \dots, \DD_{i_k})$. Then $\DD_0 \cap \dots \cap \DD_l = \DD_{i_0} \cap \dots \cap \DD_{i_k}$ and $\DD_{i_{k+1}} = \DD_{l+1}$. Then $\DD_{i_0} \cap \dots \cap \DD_{i_k} \neq \emptyset$ and $\DD_{i_0} \cap \dots \cap \DD_{i_k} \cap \DD_{i_{k+1}} = \emptyset$. Then $\sete{\SC} = \DD_0 \cap \dots \cap \DD_l = \DD_{i_0} \cap \dots \cap \DD_{i_k} = \sete{\SC^\tau}$.

2. Assume $\DD$ does not occur in $\SC$. Then $(\updd{\SC}{\DD})^\tau = (\DD ; \SC)^\tau = \DD ; \SC^\tau = (\DD ; \SC^\tau)^\tau$. Assume $\DD$ occurs in $\SC$. Then $(\updd{\SC}{\DD})^\tau = (\DD ; \SC)^\tau = \DD ; (\SC - D)^\tau = \DD ; (\SC^\tau - \DD) = (\DD ; \SC^\tau)^\tau$.
\end{proof}

\begin{theorem}
For all $\phi \in \Phi_\conwon$, pointed model $(\MM,w)$, and contexts $\SC_1$ and $\SC_2$ such that $\SC_1^\tau = \SC_2^\tau$, $\MM,\SC_1,w \Vdash \phi$ if and only if $\MM,\SC_2,w \Vdash \phi$.
\end{theorem}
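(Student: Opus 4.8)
The plan is to prove the statement by induction on the structure of $\phi$, with the induction hypothesis quantified over \emph{all} pairs of contexts having the same core (and over all states $w$). Before starting the induction I would record one preliminary fact: for every $\alpha \in \Phi_\PL$, the truth value of $\alpha$ at $(\MM, \SC, w)$ does not depend on $\SC$. This is an easy induction on $\alpha$, since the semantic clauses for atoms, $\bot$, $\neg$ and $\land$ never mention the context and $\Phi_\PL$ contains no conditional. Consequently the default $\defg{\alpha}$ generated by $\alpha$ is one and the same subset of $W$ whether it is computed with respect to $\SC_1$ or to $\SC_2$; writing $\DD$ for this set, we have $\updf{\SC_1}{\alpha} = \updd{\SC_1}{\DD}$ and $\updf{\SC_2}{\alpha} = \updd{\SC_2}{\DD}$.

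In the induction itself, the base cases $\phi = p$ and $\phi = \bot$ are immediate because their truth conditions do not refer to the context, and the cases $\phi = \neg\psi$ and $\phi = \psi_1 \land \psi_2$ follow directly from the induction hypothesis applied to the immediate subformulas with the same pair $\SC_1, \SC_2$. The only case that needs real work is $\phi = \cond{\alpha}\psi$, and here Lemma \ref{lemma:sequence} is the key tool. Assuming $\SC_1^\tau = \SC_2^\tau$, applying Lemma \ref{lemma:sequence}(\ref{lemma-sequence: one two core}) at both ends gives
\[(\updd{\SC_1}{\DD})^\tau = (\updd{\SC_1^\tau}{\DD})^\tau = (\updd{\SC_2^\tau}{\DD})^\tau = (\updd{\SC_2}{\DD})^\tau,\]
so $\updf{\SC_1}{\alpha}$ and $\updf{\SC_2}{\alpha}$ have the same core. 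By Lemma \ref{lemma:sequence}(\ref{lemma-sequence: core is core}) they then determine the same set of expected states, call it $E$; that is, $\sete{\updf{\SC_1}{\alpha}} = \sete{\updf{\SC_2}{\alpha}} = E$.

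To finish the conditional case, fix any $u \in E$. Since $\updf{\SC_1}{\alpha}$ and $\updf{\SC_2}{\alpha}$ have equal cores, the induction hypothesis, applied to the subformula $\psi$ with this new pair of contexts, yields $\MM, \updf{\SC_1}{\alpha}, u \Vdash \psi$ iff $\MM, \updf{\SC_2}{\alpha}, u \Vdash \psi$. Hence $\psi$ holds at every element of $E$ under $\updf{\SC_1}{\alpha}$ exactly when it holds at every element of $E$ under $\updf{\SC_2}{\alpha}$, which by the truth condition for $\cond{\alpha}\psi$ means precisely $\MM, \SC_1, w \Vdash \cond{\alpha}\psi$ iff $\MM, \SC_2, w \Vdash \cond{\alpha}\psi$.

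The main obstacle is simply getting the bookkeeping of the conditional case right: one has to notice that the antecedent $\alpha$ generates a single default $\DD$ shared by both contexts, so that Lemma \ref{lemma:sequence} can be invoked with a fixed $\DD$, and one has to apply the induction hypothesis one level down to the \emph{updated} contexts --- which is legitimate exactly because part (\ref{lemma-sequence: one two core}) of Lemma \ref{lemma:sequence} guarantees those updated contexts again share a core. Everything else is routine unfolding of the definitions.
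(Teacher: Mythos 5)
Your proof is correct and follows essentially the same route as the paper's: induction on $\phi$, with the conditional case handled by applying Lemma \ref{lemma:sequence}(\ref{lemma-sequence: one two core}) to show the updated contexts share a core and Lemma \ref{lemma:sequence}(\ref{lemma-sequence: core is core}) to conclude they have the same expected states. If anything you are slightly more careful than the paper, which leaves implicit both the observation that $\defg{\alpha}$ is context-independent and the appeal to the induction hypothesis for $\psi$ under the two updated contexts.
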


\begin{proof}
We put an induction on $\phi$. We only consider the case $\phi = \cond{\alpha} \psi$ and skip others. Assume $\MM,\SC_1,w \Vdash \cond{\alpha} \psi$. Then for every $u \in \sete{\updf{\SC_1}{\alpha}}$, $\MM,\updf{\SC_1}{\alpha},u \Vdash \psi$. By Item \ref{lemma-sequence: one two core} in Lemma \ref{lemma:sequence}, $(\updf{\SC_1}{\alpha})^\tau = (\updf{\SC_2}{\alpha})^\tau$. By Item \ref{lemma-sequence: core is core} in Lemma \ref{lemma:sequence}, $\sete{\updf{\SC_1}{\alpha}} = \sete{\updf{\SC_2}{\alpha}}$. Then for every $u \in \sete{\updf{\SC_2}{\alpha}}$, $\MM,\updf{\SC_2}{\alpha},u \Vdash \psi$. Then $\MM,\SC_2,w \Vdash \cond{\alpha} \psi$. The other direction is similar.
\end{proof}

Then the following result can be shown:

\begin{theorem}
The two semantics for $\Phi_\conwon$ determine the same set of valid formulas.
\end{theorem}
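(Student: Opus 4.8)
The plan is to establish a tight correspondence between the two notions of context — the ordered-set contexts $\CC = (\DDD, \succ)$ of Definition~\ref{def:Contexts} and the sequence contexts $\SC$ of Definition~\ref{def:Contexts S} — showing that each can be translated into the other in a way that preserves the set of expected states and commutes with update up to the core operation $\tau$. Concretely, for an ordered-set context $\CC$ with hierarchy $\HI{\CC} = (\DDD_0,\dots,\DDD_n)$, I would read off a sequence context by listing the defaults level by level: within each $\DDD_i$ fix some arbitrary enumeration, and concatenate these enumerations in order of descending priority, calling the result $\mathrm{seq}(\CC)$. Conversely, from a sequence context $\SC = (\DD_0,\dots,\DD_n)$ I would build an ordered-set context $\mathrm{set}(\SC) = (\DDD, \succ)$ by taking $\DDD$ to be the underlying set of entries and setting $\DD_i \succ \DD_j$ whenever the first occurrence of $\DD_i$ in $\SC$ precedes the first occurrence of $\DD_j$ — i.e.\ reading $\tau$-order as priority order. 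One checks that $\succ$ so defined is irreflexive and transitive (it is the strict order induced by first-occurrence position on distinct entries).

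The key lemma I would prove is: for every ordered-set context $\CC$, the set $\bigcap\DDD_0 \cap \dots \cap \bigcap\DDD_k$ appearing in Definition~\ref{def:Expected states by contexts} equals $\DD_{j_0} \cap \dots \cap \DD_{j_l}$ for the corresponding initial segment of $\mathrm{seq}(\CC)$, level by level; hence $\sete{\CC} = \sete{\mathrm{seq}(\CC)}$. The point is that intersecting all defaults in a level $\DDD_i$ is exactly intersecting the consecutive block of $\mathrm{seq}(\CC)$ coming from that level, and the ``longest initial segment with nonempty intersection'' clauses line up because intersection is monotone decreasing: once a level's block drops the intersection to $\emptyset$, some prefix within that block already did, and Definition~\ref{def:Expected states by contexts S} stops exactly there — while Definition~\ref{def:Expected states by contexts} stops at the previous complete level. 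A small subtlety is that within a level the defaults are mutually incomparable, so the order of the enumeration inside a block is irrelevant to the intersection; this is why the arbitrary choice in $\mathrm{seq}$ does no harm. Symmetrically, for a sequence context $\SC$ one shows $\HI{\mathrm{set}(\SC)}$ has levels that are exactly the singleton blocks $\{\DD_{i_0}\},\dots,\{\DD_{i_m}\}$ of $\SC^\tau$ (since $\succ$ is a linear order on distinct entries, each level is a singleton), so $\sete{\mathrm{set}(\SC)} = \sete{\SC^\tau} = \sete{\SC}$ by Lemma~\ref{lemma:sequence}(\ref{lemma-sequence: core is core}).

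Next I would show the translations commute with update in the relevant sense. For the ordered-set side, $\updd{\CC}{\DD}$ removes $\DD$ and re-adds it on top; under $\mathrm{seq}$ this corresponds to deleting the (unique-up-to-level) occurrence of $\DD$ and prepending $\DD$ — which is precisely $(\DD ; \mathrm{seq}(\CC))^\tau$ up to reordering within levels, i.e.\ $\mathrm{seq}(\updd{\CC}{\DD})$ and $\updd{\mathrm{seq}(\CC)}{\DD} = \DD ; \mathrm{seq}(\CC)$ have the same core after the level-block identification. Combined with the already-proved theorem that $\SC_1^\tau = \SC_2^\tau$ implies truth-equivalence, and with the observation that $\sete{}$ agrees, a routine induction on $\phi \in \Phi_\conwon$ — the only interesting case being $\cond{\alpha}\psi$, where $\defg{\alpha}$ is computed the same way in both semantics because the lower-complexity clauses agree by induction hypothesis — gives $\MM,\CC,w \Vdash \phi$ iff $\MM,\mathrm{seq}(\CC),w \Vdash \phi$, and likewise $\MM,\SC,w \Vdash \phi$ iff $\MM,\mathrm{set}(\SC),w \Vdash \phi$. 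Validity in one semantics then transfers to the other: if $\phi$ is valid in the ordered-set semantics, then for any $(\MM,\SC,w)$ we have $\MM,\SC,w \Vdash \phi$ iff $\MM,\mathrm{set}(\SC),w \Vdash \phi$, which holds by assumption; and conversely any ordered-set contextualized pointed model $(\MM,\CC,w)$ is matched by $(\MM,\mathrm{seq}(\CC),w)$.

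The main obstacle I anticipate is bookkeeping around the ``longest initial segment'' clauses: one must verify carefully that the stopping point in Definition~\ref{def:Expected states by contexts} (which advances one whole priority level at a time) coincides with the stopping point in Definition~\ref{def:Expected states by contexts S} (which advances one default at a time). The content is that dropping to $\emptyset$ is determined by the \emph{set} of defaults accumulated, not the order, so both definitions select the intersection of the same maximal downward-closed family of levels; but writing this so that the within-level enumeration genuinely plays no role — and handling the degenerate cases $\DDD = \emptyset$ (where $\HI{\CC} = (\emptyset)$ and $\sete{\CC} = W$, matched by the one-element sequence $(W)$, noting $\EN{\emptyset} = W$) and $\bigcap\DDD_0 = \emptyset$ (where both semantics immediately return $\emptyset$) — requires care. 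Everything else is a routine structural induction leaning on the two theorems already established in the excerpt.
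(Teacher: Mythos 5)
The direction you handle via $\mathrm{set}(\SC)$ is fine: reading the first-occurrence order of $\SC^\tau$ as a linear priority order makes every hierarchy level a singleton, the two computations of expected states then coincide literally, and updates commute; this gives ``valid in the ordered-set semantics $\Rightarrow$ valid in the sequence semantics''. The gap is in the converse direction. Your key lemma, $\sete{\CC}=\sete{\mathrm{seq}(\CC)}$ for the level-by-level flattening, is false, and the within-level enumeration does matter. Take $W=\{a,b\}$, $\DD_1=\{a,b\}$, $\DD_2=\{a\}$, $\DD_3=\{b\}$, with $\DD_1\succ\DD_2$ and $\DD_1\succ\DD_3$, so that $\HI{\CC}=(\{\DD_1\},\{\DD_2,\DD_3\})$. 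Since $\EN{\{\DD_2,\DD_3\}}=\emptyset$, Definition~\ref{def:Expected states by contexts} discards the entire second level and yields $\sete{\CC}=\{a,b\}$, whereas $\sete{(\DD_1,\DD_2,\DD_3)}=\{a\}$ and $\sete{(\DD_1,\DD_3,\DD_2)}=\{b\}$. The two ``longest initial segment'' clauses do not line up as you claim: the ordered-set clause backs off to the previous \emph{complete} level, while the sequence clause keeps a proper, enumeration-dependent prefix of the offending block. The discrepancy is visible to formulas: with $V(p)=\{a\}$, $\Box p$ is false at $(\MM,\CC,w)$ but true at $(\MM,\mathrm{seq}(\CC),w)$.

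Worse, the converse direction cannot be repaired by a smarter translation (e.g.\ collapsing each level $\DDD_i$ to the single default $\EN{\DDD_i}$), because updating with a set that already belongs to $\DDD$ restructures the hierarchy: it can demote a formerly maximal default into a jointly inconsistent level, thereby erasing its contribution. Concretely, let $W=\{a,b,c\}$, $\DD_1=\{a,b\}$, $\DD_2=\{a\}$, $\DD_3=\{b,c\}$, $\succ=\{(\DD_1,\DD_2)\}$, $V(p)=\{a,b\}$, $V(q)=\{b\}$. Then $\HI{\CC}=(\{\DD_1,\DD_3\},\{\DD_2\})$ and $\sete{\updf{\CC}{\top}}=\{b\}$, so $\dcon{\top}p$ and $\cond{\top}q$ hold; but $\defg{\top\land p}=\DD_1$, and $\updd{\CC}{\DD_1}$ has hierarchy $(\{\DD_1\},\{\DD_2,\DD_3\})$ with $\DD_2\cap\DD_3=\emptyset$, so $\sete{\updf{\CC}{\top\land p}}=\{a,b\}\not\subseteq V(q)$ and $\cond{\top\land p}q$ fails. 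This falsifies an instance of Axiom 3(c) of $\conwon$ at an ordered-set context, although that schema is valid in the sequence semantics (where $\sete{\updd{\SC}{\DD}}$ is always $\DD$ intersected with the smallest nonempty prefix-intersection meeting $\DD$, a genuine sphere system). So no truth-preserving map from ordered-set contexts to sequence contexts exists, and the stated equivalence of validities cannot be proved as the definitions stand; one needs either an added constraint on contexts (e.g.\ each hierarchy level jointly consistent with the levels above it) or a revised clause for $\sete{\cdot}$ on ordered-set contexts before attempting this theorem.
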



We will use this semantics when technical points are involved. We will use $\theta$ to indicate the special context $W$ for a model $\MM = (W,V)$.

\subsection{Expressivity}
\label{subsec:Expressivity}

Recall $\Phi_{\conwon\text{-}\mathsf{1}}$ is the flat fragment of $\Phi_\conwon$. In fact, $\Phi_\conwon$ is as expressive as $\Phi_{\conwon\text{-}\mathsf{1}}$.

\begin{definition}[Closed formulas] \label{def:Closed formulas}
Closed formulas of $\Phi_\conwon$ are defined as follows, where $\alpha \in \Phi_\PL$ and $\phi \in \Phi_\conwon$:

\[\chi ::= \cond{\alpha} \phi \mid \neg \chi \mid (\chi \land \chi)\]
\end{definition}

By the following fact, which is easy to verify, the truth value of a closed formula at a contextualized pointed model $(\MM,\CC,w)$ is independent of $w$.

\begin{fact} \label{fact:closed formulas}
Let $\chi$ be a closed formula. Let $\MM$ be a model and $\CC$ be a context. Then $\MM,\CC,w \Vdash \chi$ if and only if $\MM,\CC,u \Vdash \chi$ for all $w$ and $u$.
\end{fact}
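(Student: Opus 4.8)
The plan is to prove Fact~\ref{fact:closed formulas} by a straightforward structural induction on the closed formula $\chi$, using the observation already highlighted in Section~\ref{subsec:Remarks}: the truth value of a conditional $\cond{\alpha}\phi$ at $(\MM,\CC,w)$ does not depend on $w$, because the context update $\updf{\CC}{\alpha}$ and the set $\sete{\updf{\CC}{\alpha}}$ are defined purely from $\MM$, $\CC$, and $\alpha$, with no reference to the evaluation point. The only subtlety is that the default $\defg{\alpha} = \{w \mid \MM,\CC,w \Vdash \alpha\}$ itself is defined via truth of $\alpha$ at arbitrary states, so I first need to be sure that generating this default is itself a $w$-independent operation — which it is, since $\defg{\alpha}$ is by definition the set of all states satisfying $\alpha$ under the fixed context $\CC$, not a value that varies with a distinguished point.

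First I would set up the induction on the clause $\chi ::= \cond{\alpha}\phi \mid \neg\chi \mid (\chi \land \chi)$. For the base case $\chi = \cond{\alpha}\phi$: by the semantic clause, $\MM,\CC,w \Vdash \cond{\alpha}\phi$ iff $\MM,\updf{\CC}{\alpha},u \Vdash \phi$ for every $u \in \sete{\updf{\CC}{\alpha}}$. The right-hand side mentions $\MM$, $\CC$, $\alpha$, $\phi$ but not $w$; hence the condition holds for some $w$ iff it holds for all $w$, giving the base case. For the inductive steps, $\MM,\CC,w \Vdash \neg\chi$ iff $\MM,\CC,w \not\Vdash \chi$, and by the induction hypothesis $\MM,\CC,w \Vdash \chi$ iff $\MM,\CC,u \Vdash \chi$ for all $w,u$, so the same biconditional transfers to $\neg\chi$; the conjunction case is identical, splitting $\MM,\CC,w \Vdash \chi_1 \land \chi_2$ into the two conjuncts and applying the induction hypothesis to each.

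I do not expect any genuine obstacle here: the statement is essentially a restatement of the ``global semantics'' remark in Section~\ref{subsec:Remarks}, and the induction is mechanical. The one point worth stating carefully is why the base case is legitimate despite $\alpha$ being evaluated at many states inside the definitions of $\defg{\alpha}$ and $\updf{\CC}{\alpha}$: the truth of the inner propositional formula $\alpha$ at a state $v$ is computed with respect to the unchanged context $\CC$ (since $\alpha \in \Phi_\PL$ contains no conditionals, and even if one unfolds the $\PL$ semantics, no conditional subformula is ever reached), so $\defg{\alpha}$ is a well-defined subset of $W$ determined by $\MM$ and $\CC$ alone, and likewise $\updf{\CC}{\alpha}$ and $\sete{\updf{\CC}{\alpha}}$. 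I would phrase the base case as a single sentence noting that the defining condition on the right-hand side of the semantic clause for $\cond{\alpha}\phi$ simply has no free occurrence of the evaluation point, and therefore $\{w \mid \MM,\CC,w \Vdash \cond{\alpha}\phi\}$ is either all of $W$ or empty.
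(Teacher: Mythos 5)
Your proof is correct and is exactly the argument the paper has in mind (the paper omits it, calling the fact ``easy to verify''): the semantic clause for $\cond{\alpha}\phi$ makes no reference to the evaluation point, since $\defg{\alpha}$, $\updf{\CC}{\alpha}$, and $\sete{\updf{\CC}{\alpha}}$ are determined by $\MM$, $\CC$, and $\alpha$ alone, and the $\neg$ and $\land$ cases follow by the induction hypothesis. Your care in noting that the base case needs no induction hypothesis even though $\phi$ may contain further conditionals, and that $\defg{\alpha}$ is well defined independently of any distinguished point, is exactly the right level of rigor here.
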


\begin{lemma} \label{lemma:partial reduction conditionals valid}
The following formulas are valid, where $\alpha, \beta$ and $\gamma$ are in $\Phi_\PL$:
\begin{enumerate}
\item $\cond{\alpha} (\phi \land \psi) \leftrightarrow (\cond{\alpha} \phi \land \cond{\alpha} \psi)$ \label{validity:conditionals distribution conjunction}
\item $\cond{\alpha} (\phi \lor \chi) \leftrightarrow (\cond{\alpha} \phi \lor \cond{\alpha} \chi)$, where $\chi$ is a closed formula \label{validity:conditionals distribution conditionally distribution}
\item $\cond{\alpha} \cond{\beta} \gamma \leftrightarrow (\EEE\alpha \rightarrow ((\EEE(\alpha \land \beta) \land \cond{\alpha \land \beta} \gamma) \lor (\neg \EEE(\alpha \land \beta) \land \AAA(\beta \rightarrow \gamma))))$ \label{validity:conditionals conditionals}
\item $\cond{\alpha} \dcon{\beta} \gamma \leftrightarrow (\EEE\alpha \rightarrow ((\EEE(\alpha \land \beta) \land \dcon{\alpha \land \beta} \gamma) \lor (\neg \EEE(\alpha \land \beta) \land \EEE(\beta \land \gamma))))$ \label{validity:conditionals dual conditionals}
\end{enumerate}
\end{lemma}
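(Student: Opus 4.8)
The plan is to prove each of the four validities by unwinding the semantics of the conditional operator $\cond{\cdot}\cdot$ directly, using the equivalent sequence-based semantics and Lemma \ref{lemma:sequence} wherever the bookkeeping of repeated defaults matters. The common engine is the observation that $\updf{\CC}{\alpha}$ puts $\defg{\alpha}$ at the top of the hierarchy, so $\sete{\updf{\CC}{\alpha}} = \defg{\alpha} \cap \EN{\DDD_0} \cap \dots \cap \EN{\DDD_k}$ for the longest such nonempty initial segment — in particular $\sete{\updf{\CC}{\alpha}} \subseteq \defg{\alpha}$, and $\sete{\updf{\CC}{\alpha}} = \emptyset$ exactly when $\defg{\alpha} = \emptyset$, i.e. when $\MM,\CC,w \not\Vdash \EEE\alpha$. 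I would record this last fact as a small preliminary observation, since all four right-hand sides are guarded by $\EEE\alpha \rightarrow \cdots$, reflecting that $\cond{\alpha}\phi$ is vacuously true when $\alpha$ is impossible.

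For item \ref{validity:conditionals distribution conjunction}, I would simply note that ``$\MM,\updf{\CC}{\alpha},u \Vdash \phi\land\psi$ for every $u \in \sete{\updf{\CC}{\alpha}}$'' is logically equivalent to the conjunction of the two ``for every $u$'' statements — pure first-order distribution of $\forall$ over $\land$, no context manipulation needed. Item \ref{validity:conditionals distribution conditionally distribution} is the only place disjunction distributes, and it works precisely because $\chi$ is closed: by Fact \ref{fact:closed formulas}, the truth of $\chi$ at $(\MM,\updf{\CC}{\alpha},u)$ does not depend on $u$, so either $\chi$ holds at every $u \in \sete{\updf{\CC}{\alpha}}$ or at none; hence $\forall u\,(\phi(u) \lor \chi) \leftrightarrow (\forall u\,\phi(u)) \lor \chi \leftrightarrow (\forall u\,\phi(u)) \lor (\forall u\,\chi(u))$, and I would package this carefully, flagging that the argument breaks without closedness (consistent with the failure of monotonicity discussed in the Remarks).

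Items \ref{validity:conditionals conditionals} and \ref{validity:conditionals dual conditionals} are the heart of the matter and where I expect the real work. Here the consequent itself is a conditional (or its dual), so evaluating $\cond{\alpha}\cond{\beta}\gamma$ at $(\MM,\CC,w)$ means: for every $u \in \sete{\updf{\CC}{\alpha}}$, check $\MM,(\updf{\CC}{\alpha})+\beta,u' \Vdash \gamma$ for every $u' \in \sete{(\updf{\CC}{\alpha})+\beta}$. The key claim to establish is that $\updf{(\updf{\CC}{\alpha})}{\beta}$ has the same core, hence the same expected set, as $\updf{\CC}{\alpha\land\beta}$ \emph{when} $\defg{\alpha\land\beta} \neq \emptyset$, because stacking $\defg{\beta}$ on top of $\defg{\alpha}$ on top of $\CC$ yields expected states in $\defg{\alpha}\cap\defg{\beta} = \defg{\alpha\land\beta}$ and then as much of $\CC$ as is consistent — which is exactly what $\updf{\CC}{\alpha\land\beta}$ computes (using that $\defg{\alpha\land\beta}\subseteq\defg{\alpha}$, so the $\defg{\alpha}$ layer is absorbed). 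This handles the $\EEE(\alpha\land\beta)$ disjunct. When $\defg{\alpha\land\beta} = \emptyset$ but $\defg{\alpha} \neq \emptyset$, the inner context $\updf{(\updf{\CC}{\alpha})}{\beta}$ has $\sete{\cdot} = \emptyset$, so $\cond{\beta}\gamma$ holds vacuously at \emph{every} $u$, making $\cond{\alpha}\cond{\beta}\gamma$ true; meanwhile $\dcon{\beta}\gamma$ is false at every $u$, so $\cond{\alpha}\dcon{\beta}\gamma$ is... also true vacuously if $\sete{\updf{\CC}{\alpha}} = \emptyset$, but here $\defg{\alpha}\neq\emptyset$ so $\sete{\updf{\CC}{\alpha}}\neq\emptyset$, forcing $\cond{\alpha}\dcon{\beta}\gamma$ false — which must match $\EEE(\beta\land\gamma)$, i.e. $\AAA(\beta\rightarrow\gamma)$ is the wrong guard for the dual and $\EEE(\beta\land\gamma)$ is right. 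I would need to check the polarity in each branch against the global (state-independent) readings of $\EEE, \AAA, \Box, \Diamond$ given after Definition \ref{def:Semantics for Phi ConWON}. The main obstacle is precisely this case analysis on emptiness of $\defg{\alpha}$ and $\defg{\alpha\land\beta}$ combined with getting the universal-versus-existential force correct in the vacuous branches; the ``core'' machinery of Lemma \ref{lemma:sequence} is what makes the non-vacuous identification $\sete{(\updf{\CC}{\alpha})+\beta} = \sete{\updf{\CC}{\alpha\land\beta}}$ rigorous despite the duplicated $\defg{\alpha}$-layer, and I would invoke it there rather than reproving it inline.
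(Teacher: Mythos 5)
Your items \ref{validity:conditionals distribution conjunction} and \ref{validity:conditionals distribution conditionally distribution} are fine and match the paper's argument (distribution of the universal quantifier over $\land$, and state-independence of closed formulas for the disjunction case). The identification $\sete{\updf{(\updf{\CC}{\alpha})}{\beta}} = \sete{\updf{\CC}{(\alpha \land \beta)}}$ in the case $\defg{\alpha\land\beta}\neq\emptyset$ is also correct and is exactly the paper's Item \ref{item:basic alpha plus beta alpha and beta} of Lemma \ref{lemma:basic}.

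However, your treatment of the branch $\EEE\alpha \land \neg\EEE(\alpha\land\beta)$ contains a genuine error that would sink items \ref{validity:conditionals conditionals} and \ref{validity:conditionals dual conditionals}. You claim that when $\defg{\alpha\land\beta}=\emptyset$ the inner context $\updf{(\updf{\CC}{\alpha})}{\beta}$ has an empty expected set, so that $\cond{\beta}\gamma$ holds vacuously and $\dcon{\beta}\gamma$ fails everywhere. That misreads Definition \ref{def:Expected states by contexts}: the expected set is empty only when the \emph{top} default is empty. Here the top default of $\updf{(\updf{\CC}{\alpha})}{\beta}$ is $\defg{\beta}$, which may well be nonempty even though $\defg{\alpha}\cap\defg{\beta}=\emptyset$; in that case the longest consistent initial segment is just $(\defg{\beta})$ itself (the $\defg{\alpha}$ layer and everything below it is dropped), so $\sete{\updf{(\updf{\CC}{\alpha})}{\beta}} = \defg{\beta} = \sete{\updf{\theta}{\beta}}$. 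This is the paper's Item \ref{item:basic alpha plus beta beta} of Lemma \ref{lemma:basic}, and it is precisely what makes the disjuncts $\AAA(\beta\rightarrow\gamma)$ and $\EEE(\beta\land\gamma)$ the correct ones: the inner conditional reverts to a quantification over all of $\defg{\beta}$. Under your computation, $\cond{\alpha}\cond{\beta}\gamma$ would be unconditionally true and $\cond{\alpha}\dcon{\beta}\gamma$ unconditionally false in this branch, which contradicts the stated equivalences (take any $\MM,\CC$ with $\defg{\beta}\neq\emptyset$, $\defg{\alpha}\cap\defg{\beta}=\emptyset$, and $\defg{\beta}\not\subseteq\defg{\gamma}$); the fact that your argument ``proves'' something inconsistent with the lemma should have flagged the miscomputation. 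One further small point: in the non-vacuous branch you should also justify that the truth of the inner $\cond{\beta}\gamma$ at $(\MM,\updf{\CC}{\alpha},u)$ is independent of $u$ (the paper isolates this as Lemma \ref{lemma:wuyu}), so that ``for every $u\in\sete{\updf{\CC}{\alpha}}$'' collapses to a single check guarded by $\EEE\alpha$.
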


\noindent The proof for this result can be found in Section \ref{subsec:Proofs about expressivity of ConWON} in the appendix.

\begin{theorem} \label{theorem:reduction conwon validity}
There is an effective function $\sigma$ from $\Phi_{\conwon}$ to $\Phi_{\conwon\text{-}\mathsf{1}}$ such that for every $\phi \in \Phi_{\conwon}$, $\phi \leftrightarrow \sigma(\phi)$ is valid.
\end{theorem}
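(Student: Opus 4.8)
The plan is to define $\sigma$ by recursion on the structure of $\phi$, using the valid equivalences in Lemma~\ref{lemma:partial reduction conditionals valid} to push conditionals inward and eventually flatten all nesting. The base cases ($p$, $\bot$) map to themselves, and the Boolean cases ($\neg\phi$, $\phi\land\psi$) are handled by applying $\sigma$ componentwise, since validity is preserved under these connectives. The interesting case is $\phi = \cond{\alpha}\psi$. Here $\alpha \in \Phi_\PL$ already, so the only obstacle is the complexity of $\psi$, which may itself contain nested conditionals.

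For the case $\phi = \cond{\alpha}\psi$, I would first recursively reduce $\psi$ to a flat formula $\psi' = \sigma(\psi) \in \Phi_{\conwon\text{-}\mathsf{1}}$, so $\cond{\alpha}\psi \leftrightarrow \cond{\alpha}\psi'$ is valid. A flat formula $\psi'$ is a Boolean combination of atoms and formulas of the shape $\cond{\beta}\gamma$ with $\beta,\gamma \in \Phi_\PL$. Now I push $\cond{\alpha}$ through this Boolean structure: item~\ref{validity:conditionals distribution conjunction} of the lemma handles conjunctions, and item~\ref{validity:conditionals distribution conditionally distribution} handles disjunctions whose disjuncts are closed formulas. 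To use these I first rewrite $\psi'$ into a suitable normal form — e.g. a disjunction of conjunctions where each conjunct is either a $\PL$-literal or a closed formula $\cond{\beta}\gamma$ or its negation (note $\neg\cond{\beta}\gamma$ is $\dcon{\beta}\neg\gamma$, still closed) — so that Fact~\ref{fact:closed formulas} applies and item~\ref{validity:conditionals distribution conditionally distribution} is available. The purely propositional conjunct of literals, say $\alpha'$, is absorbed: $\cond{\alpha}\alpha'$ is itself already flat, or one observes $\cond{\alpha}(\alpha' \land \chi) \leftrightarrow \cond{\alpha}\alpha' \land \cond{\alpha}\chi$ by item~\ref{validity:conditionals distribution conjunction} and then $\cond{\alpha}\chi$ with $\chi$ closed is dealt with separately. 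What remains is to eliminate formulas of the form $\cond{\alpha}\cond{\beta}\gamma$ and $\cond{\alpha}\dcon{\beta}\gamma$, and for these items~\ref{validity:conditionals conditionals} and~\ref{validity:conditionals dual conditionals} rewrite them into Boolean combinations of $\EEE(\cdots)$, $\AAA(\cdots)$, $\cond{\alpha\land\beta}\gamma$, $\dcon{\alpha\land\beta}\gamma$ — all of which are flat, since $\EEE$, $\AAA$ are defined from $\dcon{\cdot}\top$ and the arguments $\alpha\land\beta$, $\beta\rightarrow\gamma$, $\beta\land\gamma$ are in $\Phi_\PL$. Thus one full layer of nesting is removed at a cost of introducing only flat conditionals.

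Termination needs a measure argument: I would define a well-founded complexity on $\Phi_\conwon$-formulas — for instance the maximal nesting depth of $\cond{\cdot}{\cdot}$ operators, with ties broken by formula length — and check that each rewrite step in the $\cond{\alpha}\psi$ case strictly decreases it. The recursive call $\sigma(\psi)$ is on a strictly simpler argument; pushing $\cond{\alpha}$ through Booleans does not increase nesting depth; and items~\ref{validity:conditionals conditionals}--\ref{validity:conditionals dual conditionals} replace a depth-$2$ pattern $\cond{\alpha}\cond{\beta}\gamma$ by depth-$1$ material. Effectivity is then immediate since every step is a syntactic substitution by a fixed scheme. The main obstacle I anticipate is purely bookkeeping: getting the normal-form transformation of the flat $\psi'$ precise enough that items~\ref{validity:conditionals distribution conjunction} and~\ref{validity:conditionals distribution conditionally distribution} genuinely apply (in particular, ensuring every relevant subformula is closed so Fact~\ref{fact:closed formulas} licenses the disjunction distribution), and organizing the induction/measure so that it visibly terminates rather than looping between "distribute over Booleans" and "eliminate a nesting layer." Once the measure is pinned down, correctness of $\sigma$ follows by an easy induction: each clause replaces a subformula by a provably (semantically) equivalent one, and validity of equivalence is a congruence for $\neg$, $\land$, and $\cond{\alpha}{\cdot}$ under the global semantics.
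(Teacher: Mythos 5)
Your strategy is essentially the paper's: reduce nesting by (i) putting the consequent of a conditional into a Boolean normal form, (ii) distributing $\cond{\alpha}$ over that structure via items \ref{validity:conditionals distribution conjunction} and \ref{validity:conditionals distribution conditionally distribution} of Lemma \ref{lemma:partial reduction conditionals valid}, and (iii) eliminating the resulting depth-$2$ patterns $\cond{\alpha}\cond{\beta}\gamma$ and $\cond{\alpha}\dcon{\beta}\gamma$ via items \ref{validity:conditionals conditionals} and \ref{validity:conditionals dual conditionals}. The paper organizes this as an innermost-first rewriting loop on depth-$2$ subformulas, whereas you organize it as a structural recursion that first flattens $\psi$ and then pushes $\cond{\alpha}$ through $\sigma(\psi)$; both terminate, and your extra step of replacing $\psi$ by the valid equivalent $\sigma(\psi)$ inside $\cond{\alpha}\,\cdot$ is sound for validity under the global semantics. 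Your remarks about $\neg\cond{\beta}\gamma$ remaining closed and about the termination measure are also correct.

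One concrete point needs fixing: your normal form is the wrong way around. You propose a \emph{disjunction of conjunctions} whose conjuncts are $\PL$-literals or closed formulas; but then a top-level disjunct $\delta \land \kappa$ (with $\delta$ a $\PL$ part and $\kappa$ closed) is not itself a closed formula, so item \ref{validity:conditionals distribution conditionally distribution} --- which only distributes $\cond{\alpha}$ over a disjunction when a disjunct is closed --- does not apply at the top level, and item \ref{validity:conditionals distribution conjunction} gives you nothing there. What works, and what the paper does, is the \emph{conjunctive} normal form: write the consequent as $\chi_1 \land \dots \land \chi_n$ with each clause $\chi_i$ of the shape $(\beta_1 \lor \dots \lor \beta_k) \lor (\cond{\gamma_1}\lambda_1 \lor \dots) \lor (\dcon{\eta_1}\theta_1 \lor \dots)$; item \ref{validity:conditionals distribution conjunction} splits the outer conjunction, the $\PL$ disjuncts are collected into a single $\PL$ formula, and item \ref{validity:conditionals distribution conditionally distribution} peels off the closed disjuncts one at a time. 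This is exactly the bookkeeping obstacle you flagged; with that correction the rest of your argument goes through.
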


\noindent The proof for this result can be found in Section \ref{subsec:Proofs about expressivity of ConWON} in the appendix.

\subsection{Axiomatization}
\label{subsec:Axiomatization}

\begin{definition}[Axiomatic system $\conwon$] \label{def:Axiomatic system ConWON}

Define an axiomatic system $\conwon$ as follows:

\noindent Axioms:

\begin{enumerate}
\item Axioms for the Propositional Logic
\item Axioms for partial reduction of $\cond{\cdot}$, where $\alpha, \beta$ and $\gamma$ are in $\Phi_\PL$:
\begin{enumerate}
\item $\cond{\alpha} (\phi \land \psi) \leftrightarrow (\cond{\alpha} \phi \land \cond{\alpha} \psi)$ \label{axiom:conditionals distribution conjunction}
\item $\cond{\alpha} (\phi \lor \chi) \leftrightarrow (\cond{\alpha} \phi \lor \cond{\alpha} \chi)$, where $\chi$ is a closed formula \label{axiom:conditionals distribution conditionally distribution}
\item $\cond{\alpha} \cond{\beta} \gamma \leftrightarrow (\EEE\alpha \rightarrow ((\EEE(\alpha \land \beta) \land \cond{\alpha \land \beta} \gamma) \lor (\neg \EEE(\alpha \land \beta) \land \AAA(\beta \rightarrow \gamma))))$ \label{axiom:conditionals conditionals}
\item $\cond{\alpha} \dcon{\beta} \gamma \leftrightarrow (\EEE\alpha \rightarrow ((\EEE(\alpha \land \beta) \land \dcon{\alpha \land \beta} \gamma) \lor (\neg \EEE(\alpha \land \beta) \land \EEE(\beta \land \gamma))))$ \label{axiom:conditionals dual conditionals}
\end{enumerate}
\item Axioms for $\cond{\alpha} \beta$, where $\alpha$ and $\beta$ are in $\Phi_\PL$:
\begin{enumerate}
\item $\cond{\alpha} \alpha$
\item $\cond{\alpha} \gamma \rightarrow \cond{\alpha} (\gamma \lor \delta)$
\item $(\cond{\alpha} \beta \land \cond{\alpha} \gamma) \rightarrow \cond{\alpha \land \beta} \gamma$
\item $(\cond{\alpha} \gamma \land \cond{\beta} \gamma) \rightarrow \cond{\alpha \lor \beta} \gamma$
\item $(\dcon{\alpha} \beta \land \cond{\alpha} \gamma) \rightarrow \cond{\alpha \land \beta} \gamma$
\end{enumerate}
\end{enumerate}

\noindent Inference rules:

\begin{enumerate}
\item Modus ponens: From $\phi$ and $\phi \rightarrow \psi$, we can get $\psi$;
\item Replacement of equivalent antecedents: From $\alpha \leftrightarrow \beta$, we can get $\cond{\alpha} \gamma \leftrightarrow \cond{\beta} \gamma$, where $\alpha, \beta$ and $\gamma$ are in $\Phi_\PL$;
\item Replacement of equivalent consequents: From $\gamma \leftrightarrow \delta$, we can get $\cond{\alpha} \gamma \leftrightarrow \cond{\alpha} \delta$, where $\gamma$ and $\delta$ are in $\Phi_\PL$.
\end{enumerate}
\end{definition}

Define \emph{derivability} with respect to $\conwon$ as usual. We use $\vdash_\conwon \phi$ to indicate that $\phi$ is derivable in $\conwon$.

Let $\sigma$ be the function from $\Phi_{\conwon}$ to $\Phi_{\conwon\text{-}\mathsf{1}}$ defined in the proof for Theorem \ref{theorem:reduction conwon validity}. From the proof for Theorem \ref{theorem:reduction conwon validity} and the definition of $\conwon$, we can get the following result:

\begin{lemma} \label{lemma:reduction conwon}
For every $\phi \in \Phi_{\conwon}$, $\phi \leftrightarrow \sigma(\phi)$ is derivable in $\conwon$.
\end{lemma}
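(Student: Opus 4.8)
The plan is to prove Lemma \ref{lemma:reduction conwon} by unpacking what the function $\sigma$ actually does in the proof of Theorem \ref{theorem:reduction conwon validity} and showing that every rewriting step it performs corresponds to a theorem of $\conwon$, not merely a validity. Recall that $\sigma$ reduces nested conditionals: given a formula $\phi$, it locates an innermost-but-one occurrence of a conditional whose consequent still contains a conditional (or a Boolean combination of conditionals), and applies one of the valid equivalences from Lemma \ref{lemma:partial reduction conditionals valid} (items \ref{validity:conditionals distribution conjunction}--\ref{validity:conditionals dual conditionals}) to push the outer $\cond{\cdot}$ inward, past $\land$, past $\lor$ (when the other disjunct is closed), and finally through a nested $\cond{\cdot}$ or $\dcon{\cdot}$, eventually replacing it by a formula built only from $\EEE$, $\AAA$, Boolean connectives, and flat conditionals $\cond{\beta}\gamma$ with $\beta,\gamma \in \Phi_\PL$. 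Since each of these equivalences is literally Axiom \ref{axiom:conditionals distribution conjunction}, \ref{axiom:conditionals distribution conditionally distribution}, \ref{axiom:conditionals conditionals}, or \ref{axiom:conditionals dual conditionals} of $\conwon$, the corresponding rewriting step is a derivable biconditional.

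The key steps, in order, are as follows. First I would state precisely (quoting the construction in the proof of Theorem \ref{theorem:reduction conwon validity}) that $\sigma$ is obtained by iterating a one-step reduction $\sigma_0$ that replaces a designated subformula by a provably equivalent one using one of the four reduction axioms, together with Boolean normalization (putting consequents of conditionals into the shape $\phi \land \psi$ or $\phi \lor \chi$ as needed so that axioms \ref{axiom:conditionals distribution conjunction} and \ref{axiom:conditionals distribution conditionally distribution} apply), the latter handled by propositional reasoning plus the two replacement rules. Second, I would prove a "replacement lemma'' for $\conwon$: if $\vdash_\conwon \phi \leftrightarrow \phi'$, then $\vdash_\conwon \psi \leftrightarrow \psi[\phi'/\phi]$ for any formula $\psi$ and any occurrence of $\phi$ as a subformula. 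The Boolean cases are immediate from propositional logic; the only nontrivial case is when $\phi$ sits inside a conditional, i.e.\ $\psi = \cond{\alpha}\theta$ with $\phi$ occurring in $\theta$ --- but here $\theta \in \Phi_\PL$ so $\phi$ and $\phi'$ are propositional, and the inference rule "replacement of equivalent consequents'' (together with "replacement of equivalent antecedents'' if $\phi$ occurs in the antecedent position) gives exactly what is needed. Third, I would run an induction on the number of one-step reductions $\sigma$ performs: the base case $\sigma(\phi) = \phi$ is trivial, and the inductive step chains $\vdash_\conwon \phi \leftrightarrow \sigma_0(\phi)$ (a reduction axiom instance, lifted by the replacement lemma) with the induction hypothesis $\vdash_\conwon \sigma_0(\phi) \leftrightarrow \sigma(\phi)$, using transitivity of $\leftrightarrow$ in $\conwon$.

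The main obstacle I anticipate is purely bookkeeping: verifying that the termination measure used for $\sigma$ in the proof of Theorem \ref{theorem:reduction conwon validity} (some multiset or lexicographic ordering on nesting depths of conditionals) genuinely decreases under each one-step reduction, and that each intermediate Boolean-normalization step is itself a $\conwon$-theorem. The latter requires noting that axioms \ref{axiom:conditionals distribution conjunction} and \ref{axiom:conditionals distribution conditionally distribution} are stated for specific syntactic shapes ($\phi \land \psi$, resp.\ $\phi \lor \chi$ with $\chi$ closed), so before applying them one must rewrite an arbitrary consequent into such a shape using only the replacement-of-equivalent-consequents rule and propositional tautologies --- and one must check this is always possible, which is where the definition of closed formulas (Definition \ref{def:Closed formulas}) and Fact \ref{fact:closed formulas} do their work. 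None of this is deep, but it is where the proof actually lives; the high-level structure (each validity used by $\sigma$ is an axiom, axioms are derivable, replacement and transitivity propagate derivability) is routine. In fact, since the lemma says "From the proof for Theorem \ref{theorem:reduction conwon validity} and the definition of $\conwon$ we can get the following result,'' the intended argument is precisely this observation that $\conwon$ was designed to internalize every step of $\sigma$, so the proof can be stated quite briefly once the replacement lemma for $\conwon$ is in hand.
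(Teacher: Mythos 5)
Your proposal is correct and is essentially the paper's own (largely implicit) argument: the paper gives no explicit proof of this lemma, remarking only that it follows from the proof of Theorem \ref{theorem:reduction conwon validity} together with the definition of $\conwon$, precisely because every equivalence used by $\sigma$ is verbatim one of the axioms 2(a)--2(d), so your elaboration via a replacement lemma and an induction on the number of one-step reductions is exactly what is intended. One caution: your claim that replacement inside a conditional only ever involves a propositional consequent $\theta$ is not literally true for formulas of modal depth $\geq 3$, where a depth-$2$ subformula sits inside the consequent of an outer conditional, so your replacement lemma also needs the congruence from $\vdash_\conwon \phi \leftrightarrow \phi'$ to $\vdash_\conwon \cond{\alpha}\phi \leftrightarrow \cond{\alpha}\phi'$ for non-propositional consequents --- a point the paper's one-line justification glosses over as well.
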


Later we will show that the flat fragment of $\Phi_\conwon$ shares the same valid formulas with the flat fragment of the conditional logic $\logv$ proposed in Lewis \cite{Lewis73}. Burgess \cite{Burgess81} provided a complete axiomatic system for $\logv$. It can be shown that the system restricted to the flat fragment of $\logv$ is complete. The restricted system is contained in $\conwon$. Then we can get the completeness of $\conwon$.

\begin{theorem} \label{theorem:soundness completeness conwon}
The axiomatic system $\conwon$ is sound and complete with respect to the set of valid formulas of $\Phi_\conwon$.
\end{theorem}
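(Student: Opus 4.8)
The plan is to reduce completeness of $\conwon$ for the full language $\Phi_\conwon$ to completeness for the flat fragment $\Phi_{\conwon\text{-}\mathsf{1}}$, and then to transfer a known completeness result for Lewis's logic $\logv$ to that fragment. For the reduction step, I would use Lemma \ref{lemma:reduction conwon}: given an arbitrary $\phi \in \Phi_\conwon$ with $\models_\conwon \phi$, soundness of $\conwon$ together with Theorem \ref{theorem:reduction conwon validity} gives $\models_\conwon \sigma(\phi)$, where $\sigma(\phi) \in \Phi_{\conwon\text{-}\mathsf{1}}$; if the flat fragment is complete then $\vdash_\conwon \sigma(\phi)$, and since $\vdash_\conwon \phi \leftrightarrow \sigma(\phi)$ by Lemma \ref{lemma:reduction conwon}, propositional reasoning inside $\conwon$ yields $\vdash_\conwon \phi$. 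Soundness itself is the routine direction: check that each axiom schema is valid (Axiom group 2 is exactly Lemma \ref{lemma:partial reduction conditionals valid}, and Axiom group 3 consists of the standard premise-semantics validities, each verified directly from Definition \ref{def:Semantics for Phi ConWON} by unwinding $\sete{\updf{\CC}{\alpha}}$) and that the three inference rules preserve validity, which is immediate for modus ponens and follows from $\defg{\alpha} = \defg{\beta}$ when $\models_\conwon \alpha \leftrightarrow \beta$ for the replacement rules.

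The substantive work is establishing completeness for the flat fragment. Here the intended route, as flagged in the text, is: (i) show that the flat fragment of $\Phi_\conwon$ and the flat fragment of Lewis's $\logv$ have exactly the same valid formulas — i.e. the two semantics agree on formulas with no nested conditionals — which presumably is proved in the ``Comparisons'' section by exhibiting, for each contextualized pointed model, a Lewis sphere model (or a comparative-similarity preorder) giving the same truth values on flat formulas, and conversely; (ii) invoke Burgess's \cite{Burgess81} complete axiomatization of $\logv$, restrict it to the flat fragment, and argue that the restricted system is still complete for flat $\logv$-validities (this needs a small argument that Burgess's completeness proof, or a canonical-model / filtration argument for it, can be carried out using only flat formulas, or alternatively that every flat $\logv$-validity is derivable using only flat instances of the axioms); (iii) verify that every axiom and rule of this restricted Burgess system is derivable in $\conwon$ — the group-3 axioms of $\conwon$ are precisely chosen to match the standard conditional-logic axioms (ID, weakening the consequent, cautious monotonicity / CM, disjunction of antecedents / OR, and rational monotonicity / RM via $\dcon{\cdot}$), and the two replacement rules match Lewis's congruence rules. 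Combining (i)–(iii): any flat $\phi$ valid in $\conwon$-semantics is a flat $\logv$-validity, hence derivable in the restricted Burgess system, hence derivable in $\conwon$.

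The main obstacle I anticipate is step (i) together with the flat-restriction claim in step (ii). Matching semantics: one must be careful that Lewis's $\logv$ does \emph{not} assume the Limit Assumption while the $\conwon$ semantics, being built from a \emph{finite} set of defaults, always has a well-defined nonempty ``closest'' set $\sete{\updf{\CC}{\alpha}}$ whenever $\EEE\alpha$ holds — so the correspondence cannot be a naive ``same order'' translation; rather one must check that the class of orderings realizable by finite default-hierarchies generates exactly the flat theory of $\logv$ (this is plausible because $\logv$ is complete for its full class of sphere models and finiteness can be recovered on the flat fragment by a filtration / finite-model-property argument for $\logv$, but it needs to be stated carefully). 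The flat-restriction of Burgess's system also requires attention: one must confirm that proofs of flat validities never genuinely need nested-conditional instances of the axioms, or else supply a conservativity argument. Once those two points are nailed down, the rest is bookkeeping: checking one-by-one that $\conwon$'s axioms 3(a)–3(e) and its replacement rules derive (and are derived by) the restricted Burgess axioms, and then assembling the reduction via Lemmas \ref{lemma:reduction conwon} and \ref{lemma:partial reduction conditionals valid} and Theorem \ref{theorem:reduction conwon validity}.
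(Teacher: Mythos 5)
Your proposal is correct and follows essentially the same route as the paper: soundness plus the reduction of an arbitrary valid $\phi$ to the flat $\sigma(\phi)$ via Theorem \ref{theorem:reduction conwon validity} and Lemma \ref{lemma:reduction conwon}, then transfer through the flat-fragment equivalence with $\logv$ (Theorem \ref{theorem:valid in acl valid in conwon}) and the flat restriction of Burgess's system (Theorem \ref{theorem:completeness acl-1}), which is contained in $\conwon$. The obstacles you flag — the model correspondence despite the Limit Assumption, handled via the finite model property and the transformation lemmas, and the conservativity of the flat restriction of Burgess's system — are exactly the points the paper addresses in its appendix and in the remark preceding Theorem \ref{theorem:completeness acl-1}.
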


\begin{proof}
By Lemma \ref{lemma:partial reduction conditionals valid}, Theorems \ref{theorem:valid in acl valid in conwon} and \ref{theorem:completeness acl}, we can get the soundness of $\conwon$. Consider the completeness of $\conwon$. Let $\phi$ be a valid formula in $\Phi_\conwon$. By Theorem \ref{theorem:reduction conwon validity}, $\sigma(\phi)$ is valid in $\conwon$. By Theorem \ref{theorem:valid in acl valid in conwon}, $\sigma(\phi)$ valid in $\logv$. By Theorem \ref{theorem:completeness acl-1}, $\sigma(\phi)$ is derivable in $\logv\text{-}\mathsf{1}$. Note $\logv\text{-}\mathsf{1}$ is contained in $\conwon$. Then $\sigma(\phi)$ is derivable in $\conwon$. By Lemma \ref{lemma:reduction conwon}, $\phi$ is derivable in $\conwon$.
\end{proof}

\section{Comparisons}
\label{sec:Comparisons}

As mentioned before, there are not theories explicitly handling conditional weak ontic necessity in the literature yet. In this section, we compare our theory to the following works on general conditionals, which are closely related to our theory: Stalnaker and Lewis's ordering semantics on counterfactual conditionals, Lewis's conditional logic $\logv$, Kratzer's premise semantics for counterfactual conditionals, Kratzer's semantics for conditional modalities, and Veltman's update semantics for counterfactual conditionals. The comparison is mainly from the following perspective: Where do they differ in handling conditional weak ontic necessity?

Conceptually, our theory is close to Kratzer's premise semantics and Veltman's update semantics but different from Stalnaker and Lewis's ordering semantics. Technically, the flat fragment of our logic is identical to the flat fragment of the logic $\logv$. They are different when nested conditionals are involved. As mentioned above, our semantics is global and this coincides with whether the weak ontic necessity holds at a state is not dependent on the state. All these theories are local.

\subsection{Conceptual comparisons to Stalnaker and Lewis's ordering semantics on counterfactual conditionals}
\label{subsec:Conceptual comparisons to Stalnaker and Lewis's ordering semantics on counterfactual conditionals}


Stalnaker and Lewis's ordering semantics was proposed in \cite{Stalnaker68} and \cite{Lewis73}. In both the ordering semantics and ours, evaluation of ``\exas{If $\phi$ then $\Box \psi$}'' at a possibility $x$ involves a check of satisfaction of $\psi$ at the possibilities in \emph{certain domain} with respect to $x$ and $\phi$. We call this domain the \emph{counterfactual domain} with respect to $x$ and $\phi$.

By the ordering semantics, a conditional ``\exas{If $\phi$ then $\Box \psi$}'' is true at a possibility $x$ if and only if $\psi$ is true at all the elements of $\Delta$, which consists of the possibilities that are most similar to $x$ among the alternatives of $x$ where $\phi$ is true. Here $\Delta$ is the counterfactual domain with respect to $x$ and $\phi$.

In our theory, a conditional ``\exas{If $\phi$ then $\Box \psi$}'' is true at a possibility $x$ relative to a context $\CC$ if and only if $\psi$ is true at all the possibilities in $\sete{\updf{\CC}{\phi}}$ relative to the context $\updf{\CC}{\phi}$. Here $\sete{\updf{\CC}{\phi}}$ is the counterfactual domain with respect to $x$ and $\phi$.

Conceptually, our work is different from Stalnaker and Lewis's theory in two aspects. Firstly, counterfactual domains are determined in different ways in the two theories. Secondly, our theory uses contexts, while Stalnaker and Lewis's theory does not.


We consider the first difference. Fix a counterfactual conditional ``\exas{If $\phi$ then $\Box \psi$}'' and a possibility $x$.

According to Stalnaker and Lewis's theory, the counterfactual domain with respect to $x$ and $\phi$ is determined as follows. Firstly, we get the set of all alternatives of $x$ where $\phi$ is true. Secondly, we choose from this set those elements that are most similar to $x$. The chosen elements form the counterfactual domain with respect to $x$ and $\phi$. Note what $x$ is like matters for determining the counterfactual domain.

In our theory, the counterfactual domain with respect to $x$ and $\phi$ is determined as follows. Firstly, we add the following default to the context and give it the highest priority: \emph{$\phi$ should be the case}. Secondly, we get the possibilities expected by the new context, which form the counterfactual domain with respect to $x$ and $\phi$. Note that what $x$ is like plays no role here.

The two different ways of determining counterfactual domains have different consequences in handling counterfactual conditionals. We illustrate this by an example adapted from \cite{Fine75}.

\begin{example} \label{example:Nixon}
Suppose now it is the time of Nixon as the president of the United States. There has been no nuclear holocaust in our world (up to the Nixon time). Look at the following sentence:

\bee
\ex \exas{If Nixon had pressed the button yesterday, then there \textbf{should} have been a nuclear holocaust.} \label{exe:If Nixon had pressed the button yesterday}
\eee
\end{example}

\noindent Intuitively, this sentence is true.

It seems that among the alternatives to the present state where Nixon pressed the button yesterday, those where there has been no nuclear holocaust resemble the present state the most. Therefore, by Stalnaker and Lewis's theory, the counterfactual domain with respect to the present state and ``\exas{Nixon pressed the button yesterday}'' consists of the alternatives to the present state where Nixon pressed the button yesterday but there has been no nuclear holocaust. As a result, Sentence \ref{exe:If Nixon had pressed the button yesterday} is false.

By our theory, that sentence is true. How? It seems reasonable to think that the context contains the following default: \emph{The command system of the United States works well}. Add \emph{Nixon pressed the button yesterday} as a default to the context. The set of expected states by the new context is the counterfactual domain with respect to the present state and ``\exas{Nixon pressed the button yesterday}''. At all the elements of the domain, there has been a nuclear holocaust. Thus, Sentence \ref{exe:If Nixon had pressed the button yesterday} is true.


The second difference makes a difference for nested conditionals. We consider an example adapted from \cite{McGee85}.

\begin{example} \label{example:Reagan not won}
Suppose now it is just after the 1980 election of the United States and Reagan has won. Before the election, Reagan and Anderson were the only two Republic candidates and Carter was a Democratic candidate. Opinion polls showed that Reagan was decisively ahead of the other candidates and Carter was far ahead of the other candidates except Reagan. Here is a nested conditional:

\bee
\ex \exas{If Reagan hadn't won the election, then if a Republican had won, it \textbf{should} have been Reagan.}
\eee
\end{example}

\noindent This sentence is clearly false.

However, by Stalnaker and Lewis's theory, this sentence is true. We cite an argument for this from \cite{McGee85}. 

\begin{quote}
However, the possible world most similar to the actual world in which Reagan did not win the election will be a world in which Carter finished first and Reagan second, with Anderson again a distant third, and so a world in which ``\exas{If a Republican had won it would have been Reagan}'' is true. Thus Stalnaker's theory wrongly predicts that, in the actual world, ``\exas{If Reagan hadn't won the election, then if a Republican had won, it would have been Reagan}'' will be true.
\end{quote}

Our theory has no problem with this example.

\begin{example} \label{example:Reagan not won modeling}
This example explains how the sentence ``\exas{If Reagan hadn't won the election, then if a Republican had won, it should have been Reagan}'' in Example \ref{example:Reagan not won} is analyzed in our formalization.

Note the result of opinion polls was that Reagan was decisively ahead of the other candidates and Carter was far ahead of the other candidates except Reagan. Two defaults can represent the result: \emph{Reagan or Carter will win} and \emph{Reagan will win}, where the former is prior to the latter.

We use $r,a$ and $c$ to respectively indicate ``\exas{Reagan wins the election}'', ``\exas{Anderson wins the election}'' and ``\exas{Carter wins the election}''.

Figure \ref{fig:after election} indicates a contextualized pointed model $(\MM, \CC, w_1)$ for Example \ref{example:Reagan not won}, where $\CC = (\DDD, \succ)$, where $\DDD$ has two defaults: $\DD_1 = \{w_1, w_3\}$ and $\DD_2 = \{w_1\}$, and $\DD_1 \succ \DD_2$.

\begin{figure}[h]
\begin{center}
\begin{tikzpicture}[node distance=20mm,scale=0.85,every node/.style={transform shape}]
\tikzstyle{every state}=[draw=black,text=black,minimum size=10mm]

\draw[] (-15mm,-15mm) rectangle (75mm,15mm);

\node[state,thick] (s-1-1) [] {$w_1 \atop \{r\}$};
\node[state] (s-1-2) [right of=s-1-1] {$w_2 \atop \{a\}$};
\node[state] (s-1-3) [right of=s-1-2] {$w_3 \atop \{c\}$};
\node[state] (s-1-4) [right of=s-1-3] {$w_4 \atop \{\}$};

\end{tikzpicture}
\end{center}
\caption{A contextualized pointed model for Example \ref{example:Reagan not won}}
\label{fig:after election}
\end{figure}
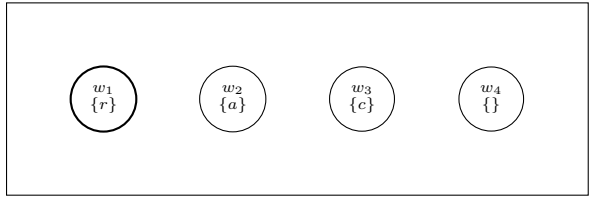

The nested conditional is translated as $\cond{\neg r} \cond{r \lor a} r$. It is false at $(\MM, \CC, w_1) $. How?

\begin{itemize}
\item $\updf{\CC}{\neg r}$, the update of $\CC$ with $\neg r$, is such that $\sete{\updf{\CC}{\neg r}}$, the set of expected states by $\updf{\CC}{\neg r}$, equals to $\{w_3\}$.
\item $\updf{(\updf{\CC}{\neg r})}{(r \lor a)}$, the update of $\updf{\CC}{\neg r}$ with $r \lor a$, is such that $\sete{\updf{(\updf{\CC}{\neg r})}{(r \lor a)}}$, the set of expected states by $\updf{(\updf{\CC}{\neg r})}{(r \lor a)}$, equals to $\{w_2\}$.
\item $\MM, \updf{(\updf{\CC}{\neg r})}{(r \lor a)}, w_2 \not \Vdash r$. Then $\MM, \CC, w_1 \not \Vdash \cond{\neg r} \cond{r \lor a} r$.
\end{itemize}
\end{example}

\subsection{Technical comparisons to Lewis's conditional logic $\logv$}
\label{subsec:Technical comparisons to Lewis's conditional logic V}


Lewis \cite{Lewis73} presented a class of conditional logics based on the ordering semantics and the smallest one is called $\logv$, standing for ``variable strictness''. In what follows, we first present this logic briefly and then compare it with ours.


\begin{definition}[Language $\Phi_\logv$] \label{def:Language V}
The language $\Phi_\logv$ is defined as follows:

\[\phi ::= p \mid \bot \mid \neg \phi \mid (\phi \land \phi) \mid (\phi \cona \phi)\]
\end{definition}

\noindent The formula $\phi \cona \psi$ is for conditionals. Define $\mathit{A} \phi$ as $\neg \phi \cona \bot$, meaning \emph{$\phi$ is necessary}. Define $\mathit{E} \phi$ as $\neg \mathit{A} \neg \phi$, meaning \emph{$\phi$ is possible}.

\begin{definition}[Relational models for $\Phi_\logv$] \label{def:Relational models for Phi V}
A tuple $M = (W, \Gamma, V)$ is a relational model for $\Phi_\logv$ if

\begin{itemize}
\item $W$ and $V$ are as usual;
\item For every $w \in W$, $\Gamma(w)$ is a tuple $(W_w,<_w)$, where $W_w \subseteq W$ and $<_w$ is an irreflexive, transitive, and almost connected binary relation on $W_w$\footnote{A binary relation $<$ is almost connected if for all $w,u$ and $v$, if $w < u$, then $w< v$ or $v < u$.}.
\end{itemize}
\end{definition}

\begin{definition}[Relational semantics for $\Phi_\logv$] \label{def:Relational semantics for Phi V}
Let $M = (W, \Gamma, V)$ be a relational model.

\begin{tabular}{lll}
$M,w \Vdash \phi \cona \psi$ & $\Leftrightarrow$ & \parbox[t]{27em}{for every $u \in W_w$, if $M,u \Vdash \phi$, then there is $v \in W_w$ such that $M,v \Vdash \phi$ and for every $z \in W_w$, if $z <_w v$, then $M,z \Vdash \psi$}
\end{tabular}
\end{definition}

\begin{definition}[Axiomatic system $\logv$] \label{def:Axiomatic system V}
Define an axiomatic system $\logv$ as follows:

\noindent Axioms:

\begin{enumerate}
\item Axioms for the Propositional Logic
\item Axioms for $\cona$:
\begin{enumerate}
\item $\phi \cona \phi$
\item $((\phi \cona \chi) \land (\phi \cona \xi)) \rightarrow (\phi \cona (\chi \land \xi))$
\item $(\phi \cona \chi) \rightarrow (\phi \cona (\chi \lor \xi))$
\item $((\phi \cona \psi) \land (\phi \cona \chi)) \rightarrow ((\phi \land \psi) \cona \chi)$
\item $((\phi \cona \chi) \land (\psi \cona \chi)) \rightarrow ((\phi \lor \psi) \cona \chi)$
\item $(\neg (\phi \cona \neg \psi) \land (\phi \cona \chi)) \rightarrow ((\phi \land \psi) \cona \chi)$
\end{enumerate}
\end{enumerate}

\noindent Inference rules:

\begin{enumerate}
\item Modus ponens: From $\phi$ and $\phi \rightarrow \psi$, we can get $\psi$;
\item Replacement of equivalent antecedents: From $\phi \leftrightarrow \psi$, we can get $(\phi \cona \chi) \leftrightarrow (\psi \cona \chi)$;
\item Replacement of equivalent consequents: From $\chi \leftrightarrow \xi$, we can get $(\phi \cona \chi) \leftrightarrow (\phi \cona \xi)$.
\end{enumerate}
\end{definition}

\begin{theorem} \label{theorem:completeness acl}
The axiomatic system $\logv$ is sound and complete with respect to the set of valid formulas in $\Phi_\logv$.
\end{theorem}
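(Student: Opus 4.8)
The statement is the classical Burgess–Lewis result, so the plan is to follow the standard route for conditional logics of the Lewis family rather than to reinvent it. First I would treat soundness, which is the routine direction: check that each axiom schema for $\cona$ is valid on the class of relational models of Definition~\ref{def:Relational models for Phi V}, and that the three inference rules preserve validity. The only nontrivial checks are the axioms (d), (e), (f) — the ones governing how $\cona$ interacts with conjunction and disjunction in the antecedent — and for these one unwinds Definition~\ref{def:Relational semantics for Phi V} using the almost-connectedness of $<_w$. Almost-connectedness is exactly what makes the ``limit-free'' truth condition behave well: it guarantees that if $\phi$ is satisfied somewhere in $W_w$ then the set of $\phi$-worlds has arbitrarily $<_w$-low members cofinally, so the quantifier pattern ``for every $\phi$-world there is a lower-or-equal $\phi$-world all of whose strict predecessors satisfy $\psi$'' collapses to the expected set-theoretic comparison of spheres. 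I would not grind through all six; I would do (f) carefully as the representative case and remark the others are analogous.

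For completeness I would run a canonical-model construction. Build the canonical model $M^c = (W^c, \Gamma^c, V^c)$ where $W^c$ is the set of maximal $\logv$-consistent sets, $V^c$ is standard, and for each $w \in W^c$ one defines $W^c_w$ and $<^c_w$ from the $\cona$-formulas in $w$. The usual device is a comparative-plausibility reading: put $u \le^c_w v$ when for every $\phi$ with $\phi \in u$ … (more precisely, define $u <^c_w v$ via the derived ``at-least-as-possible'' connective $\phi \preccurlyeq \psi := \neg((\phi \lor \psi) \cona \neg\phi)$ or equivalently via nested $\cona$-statements in $w$), and let $W^c_w$ be the worlds that are $\le^c_w$-comparable to something, i.e. the worlds $w$ regards as entertained. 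Then prove a Truth Lemma by induction on formula complexity; the $\cona$-case is where the axioms (a)–(f) are consumed, each one ensuring one closure property of $<^c_w$ (irreflexivity from (a)-style reasoning, transitivity, and almost-connectedness from the disjunction axioms (d)–(e)). Existence of the required witness world $v$ inside $W^c_w$ for the ``there is $v$'' clause is a Lindenbaum-style existence lemma: from $\phi \cona \chi \notin w$ one extracts a consistent set containing $\phi$, the relevant $\neg\chi$-information, and enough $\cona$-formulas to place it correctly in the ordering.

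The step I expect to be the genuine obstacle is verifying that $<^c_w$ on the canonical model is \emph{almost connected}, and relatedly that the ``limit-free'' semantic clause of Definition~\ref{def:Relational semantics for Phi V} is matched by the canonical ordering — in effect, that one never needs an actual $<^c_w$-minimal $\phi$-world, only cofinally low ones. This is precisely the point at which Lewis's and Burgess's treatments diverge from the naive sphere-semantics intuition, and it is where the disjunction axioms (d) and (e) must be invoked in exactly the right combination; a careless canonical ordering will be transitive and irreflexive but fail almost-connectedness, breaking the $\cona$-case of the Truth Lemma. Since the excerpt explicitly attributes a complete axiomatization to Burgess~\cite{Burgess81} and states that the flat-fragment restriction remains complete, I would cite Burgess for the full result and then argue the flat-fragment completeness (Theorem~\ref{theorem:completeness acl-1}) separately by noting that the canonical construction, when restricted to flat formulas, only ever needs $W^c_w$ and $<^c_w$ data definable from flat $\cona$-formulas, so the same Truth Lemma goes through with the flat axioms alone.
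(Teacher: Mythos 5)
The paper does not actually prove this theorem: it states it together with the finite model property and says only that ``the proofs for the two results can be found in Burgess~\cite{Burgess81}.'' Your proposal ends in the same place --- deferring to Burgess --- so in substance you take the same approach as the paper; the canonical-model outline you add on top is the standard route for Lewis-family conditional logics and correctly isolates almost-connectedness of the canonical ordering as the delicate step, but it remains a plan rather than a proof, which is acceptable here only because, like the paper, you ultimately cite Burgess for the details.
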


\begin{theorem} \label{theorem:finite model property}
The language $\Phi_\logv$ has the finite model property.
\end{theorem}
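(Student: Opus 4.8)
The plan is to prove the finite model property by filtration on the relational models of Definitions~\ref{def:Relational models for Phi V} and~\ref{def:Relational semantics for Phi V}. Fix a satisfiable $\phi \in \Phi_\logv$ and choose a relational model $M = (W, \Gamma, V)$ with a world $w_0$ such that $M, w_0 \Vdash \phi$. Let $\Sigma$ be the (finite) set of subformulas of $\phi$, and put $u \approx v$ iff $u$ and $v$ verify exactly the same members of $\Sigma$; then $W/{\approx}$ is finite, of size at most $2^{|\Sigma|}$, and I write $[u]$ for the $\approx$-class of $u$.

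I would then build a finite model $M^\dagger = (W^\dagger, \Gamma^\dagger, V^\dagger)$ on $W^\dagger = W/{\approx}$, with $V^\dagger$ induced by $V$ on the atoms occurring in $\Sigma$. For each class $c$ fix a representative $u_c$; set $W^\dagger_c = \{[z] : z \in W_{u_c}\}$, and let $<^\dagger_c$ be a filtration of $<_{u_c}$ onto these finitely many surviving classes, defined so as to respect the cluster structure that almost-connectedness imposes on $<_{u_c}$ (worlds that are $<_{u_c}$-incomparable sit in a common cluster, and $<_{u_c}$ linearly orders the clusters; one compares classes by the clusters their members fall into). A routine check then shows that each $<^\dagger_c$ is again irreflexive, transitive, and almost connected, so that $M^\dagger$ is a legitimate relational model, and it is finite.

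Next I would prove the filtration lemma: for every $\psi \in \Sigma$ and every $u \in W$, $M^\dagger, [u] \Vdash \psi$ iff $M, u \Vdash \psi$, by induction on $\psi$, the atomic and Boolean cases being immediate. For $\psi = \chi \cona \xi \in \Sigma$: since $u \approx u_{[u]}$ and $\psi \in \Sigma$, it suffices to compare $M, u_{[u]} \Vdash \chi \cona \xi$ with $M^\dagger, [u] \Vdash \chi \cona \xi$; unfolding the clause of Definition~\ref{def:Relational semantics for Phi V} over $(W_{u_{[u]}}, <_{u_{[u]}})$ and over $(W^\dagger_{[u]}, <^\dagger_{[u]})$, the induction hypothesis converts ``is a $\chi$-world'' and ``is a $\xi$-world'' between worlds and classes, and the cluster-based definition of $<^\dagger$ is exactly what lets one transport a witnessing $\phi$-world $v$ and the worlds lying strictly below it. Applying the lemma at $w_0$ yields $M^\dagger, [w_0] \Vdash \phi$, so $\phi$ has a finite model, which is what we want.

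The main obstacle is the design of the filtrated orderings $<^\dagger_c$: one definition must simultaneously keep every $<^\dagger_c$ irreflexive, transitive, and almost connected --- the structural constraint built into Definition~\ref{def:Relational models for Phi V} --- and retain precisely the ``nearest $\chi$-world'' information that the clause for $\cona$ reads off, so that the conditional case of the filtration lemma goes through. The cluster comparison above is the natural candidate, but making it fully precise takes care, because $<_w$ in the original model need not be well founded (Lewis drops the Limit Assumption, and the clause in Definition~\ref{def:Relational semantics for Phi V} is the corresponding limit-free one), so one cannot simply speak of ``the closest $\chi$-world'' and must instead argue about $<_w$-minimal clusters among the relevant ones. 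If one prefers to avoid this bookkeeping, a cleaner route is to filtrate the canonical model supplied by Theorem~\ref{theorem:completeness acl}, where the orderings are already definable from maximal consistent sets, or simply to appeal to the classical decidability results for $\logv$ due to Lewis \cite{Lewis73} and Burgess \cite{Burgess81}.
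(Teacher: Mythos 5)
First, be aware that the paper does not actually prove this theorem: it states it together with Theorem~\ref{theorem:completeness acl} and refers the reader to Burgess~\cite{Burgess81} for both. So the fallback you offer in your last sentence --- ``simply appeal to the classical decidability results for $\logv$ due to Lewis~\cite{Lewis73} and Burgess~\cite{Burgess81}'' --- is precisely what the paper does, and as a citation-level answer it matches the source.

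Your primary argument, the filtration, has a genuine gap, and it sits exactly where you flag it. Everything hinges on the definition of the filtrated orderings $<^\dagger_c$, and that definition is never actually fixed. The two obvious liftings of $<_{u_c}$ to $\approx$-classes both fail: the existential lifting ($c_1 <^\dagger_c c_2$ iff some member of $c_1$ is $<_{u_c}$-below some member of $c_2$) need not be irreflexive or transitive once a class straddles several clusters, while the universal lifting destroys the witnesses needed in the $\cona$ case of the filtration lemma. Your cluster-based comparison is underspecified at the decisive point: a single $\approx$-class will in general meet many of the $<_{u_c}$-clusters, so ``comparing classes by the clusters their members fall into'' does not determine a relation; and since $<_{u_c}$ need not be well founded, there is no least cluster met by a class to which one could default --- which is exactly the Limit-Assumption issue you yourself raise. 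Consequently the claims that ``a routine check'' gives irreflexivity, transitivity and almost-connectedness, and that the conditional case of the filtration lemma ``goes through,'' are not routine verifications but the entire content of the theorem. To turn the sketch into a proof one must either specify $<^\dagger_c$ by comparing classes via a carefully chosen family of definable sets (the classical filtration for sphere semantics) or, as in Burgess's treatment, build the finite model directly out of the finitely many maximal consistent subsets of a closure set in the completeness argument rather than filtrating an arbitrary model. As written, the sketch identifies the right strategy and the right obstacle but does not overcome it.
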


\noindent The proofs for the two results can be found in Burgess \cite{Burgess81}.

Since $\Phi_\logv$ has the finite model property, we can assume $<_w$ is well-founded without changing the set of valid formulas. Then the semantics could be simplified as follows:

\medskip

\begin{tabular}{lll}
$M,w \Vdash \phi \cona \psi$ & $\Leftrightarrow$ & \parbox[t]{27em}{$M,u \Vdash \psi$ for every $<_w$-minimal element $u$ of $\stas{\phi}$}
\end{tabular}

\noindent where $\stas{\phi} = \{x \mid M,x \Vdash \phi\}$.

\medskip


We use $\Phi_{\logv\text{-}\mathsf{1}}$ to indicate the flat fragment of $\Phi_\logv$ and use $\logv\text{-}\mathsf{1}$ to indicate the restriction of the system $\logv$ to $\Phi_{\logv\text{-}\mathsf{1}}$.

Fix a formula $\phi$ in $\Phi_{\logv\text{-}\mathsf{1}}$ and a proof $\psi_0, \dots, \psi_n$ for $\phi$ with respect to $\logv$. It can be shown that there is a proof $\psi_{i_0}, \dots, \psi_{i_l}$ for $\phi$ such that its elements are in $\{\psi_0, \dots, \psi_n\} \cap \Phi_{\logv\text{-}\mathsf{1}}$. Then we can get the following result:

\begin{theorem} \label{theorem:completeness acl-1}
The axiomatic system $\logv\text{-}\mathsf{1}$ is sound and complete with respect to the set of valid formulas in $\Phi_{\logv\text{-}\mathsf{1}}$.
\end{theorem}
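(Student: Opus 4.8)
\textbf{Proof proposal for Theorem \ref{theorem:completeness acl-1}.}

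The plan is to transfer soundness and completeness of the full system $\logv$ (Theorem \ref{theorem:completeness acl}) down to the flat fragment. Soundness of $\logv\text{-}\mathsf{1}$ is immediate: every axiom instance and every rule application allowed in $\logv\text{-}\mathsf{1}$ is already an axiom instance or rule application of $\logv$, so every $\logv\text{-}\mathsf{1}$-theorem is a $\logv$-theorem, hence valid by Theorem \ref{theorem:completeness acl}; and since the semantics of $\Phi_{\logv\text{-}\mathsf{1}}$ is just the restriction of that of $\Phi_\logv$, validity in $\Phi_{\logv\text{-}\mathsf{1}}$ coincides with validity in $\Phi_\logv$ on flat formulas. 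So the work is entirely on the completeness direction.

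For completeness, fix a valid $\phi \in \Phi_{\logv\text{-}\mathsf{1}}$. By Theorem \ref{theorem:completeness acl}, $\vdash_\logv \phi$, so there is a proof $\psi_0, \dots, \psi_n$ of $\phi$ in $\logv$. As the excerpt indicates, the key step is to extract from this a proof all of whose formulas lie in $\Phi_{\logv\text{-}\mathsf{1}}$. The natural approach is a syntactic transformation: define a map that pushes any formula of $\Phi_\logv$ into $\Phi_{\logv\text{-}\mathsf{1}}$ by treating maximal occurrences of subformulas of nesting depth $\geq 1$ (i.e.\ occurrences of $\chi \cona \xi$ that sit inside another $\cona$) as fresh propositional atoms --- a ``flattening by atomization'' of nested conditionals. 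First I would show this map is the identity on $\Phi_{\logv\text{-}\mathsf{1}}$, so it fixes $\phi$. Then I would verify that it sends each axiom of $\logv$ to an axiom of $\logv$ (indeed of $\logv\text{-}\mathsf{1}$, since the schemata of $\logv$ are closed under substituting atoms for the $\phi,\psi,\chi,\xi$ that appear in them, and after atomization the antecedents and outer conditionals are flat) and commutes with the three inference rules --- modus ponens is trivial, and the two replacement rules survive because a proof of $\alpha \leftrightarrow \beta$ used to license replacement of equivalent antecedents/consequents can itself be flattened. Applying the map termwise to $\psi_0, \dots, \psi_n$ then yields a proof in $\logv\text{-}\mathsf{1}$ of $\phi$.

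An alternative, arguably cleaner route avoids bookkeeping about which schema instances survive: show directly that $\logv\text{-}\mathsf{1}$ is complete by building, for any $\logv\text{-}\mathsf{1}$-consistent flat formula $\phi$, a finite relational model satisfying it. Here one reuses the finite model property (Theorem \ref{theorem:finite model property}) together with the Burgess-style canonical-model construction for $\logv$, restricting the set of maximal consistent sets to those generated by the finite set of subformulas of $\phi$ (all flat), and checking that the truth lemma goes through using only the flat axioms. I would present the first (proof-pruning) approach as primary, since it is shorter and is exactly what the surrounding text sketches, and mention the model-theoretic route only if the pruning argument runs into trouble.

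\textbf{Main obstacle.} The delicate point in the pruning approach is the replacement rules. When the original $\logv$-proof uses ``replacement of equivalent consequents'' to pass from $\chi \leftrightarrow \xi$ to $(\phi \cona \chi) \leftrightarrow (\phi \cona \xi)$ where $\chi,\xi$ themselves contain nested conditionals, one must ensure that after atomizing the nested subformulas \emph{consistently across the whole proof} the premise $\chi \leftrightarrow \xi$ remains a theorem of $\logv\text{-}\mathsf{1}$ (of its flattened form) --- i.e.\ the atomization must be a single global renaming, not chosen per line, and one must check no two genuinely different nested conditionals get identified in a way that breaks a step, while genuinely equal ones get the same atom. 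Handling this carefully (e.g.\ by fixing an injective assignment of atoms to nested-conditional subformulas occurring anywhere in the proof, extending $\AP$ if necessary) is the one place where care is required; everything else is routine induction on proof length.
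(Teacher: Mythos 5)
Your soundness argument and your overall strategy for completeness (transfer Theorem \ref{theorem:completeness acl} to the flat fragment by converting a $\logv$-proof of a flat formula into a proof inside $\logv\text{-}\mathsf{1}$) agree with the paper, but the paper takes a different route through the key step: it claims that from a $\logv$-proof $\psi_0,\dots,\psi_n$ of $\phi\in\Phi_{\logv\text{-}\mathsf{1}}$ one can extract a proof of $\phi$ all of whose lines lie in $\{\psi_0,\dots,\psi_n\}\cap\Phi_{\logv\text{-}\mathsf{1}}$ --- i.e.\ it \emph{prunes} the given proof down to (a subsequence of) its already-flat lines --- whereas you \emph{transform} every line by atomizing nested conditionals. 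Neither argument is actually carried out in the paper (the pruning claim is only asserted with ``it can be shown''), so the comparison is between two sketches.

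The obstacle you flag in your primary route is real, but it is worse than a bookkeeping issue, and the repair you propose (a single global \emph{injective} assignment of fresh atoms to nested-conditional subformulas) does not fix it: injective atomization sends provably equivalent but syntactically distinct formulas to distinct expressions, and the replacement rules then fail to commute with the map. Concretely, $p\cona p$ is an axiom, so $\vdash_\logv (p\cona p)\leftrightarrow\top$, and replacement of equivalent antecedents yields $((p\cona p)\cona q)\leftrightarrow(\top\cona q)$. Under your map the nested occurrence of $p\cona p$ becomes a fresh atom $a$ while the premise $(p\cona p)\leftrightarrow\top$ (being already flat) is left untouched, so the transformed conclusion is $(a\cona q)\leftrightarrow(\top\cona q)$; this is not valid (take one sphere $W=\{w,u\}$ with $a,q$ true only at $w$), hence by soundness not derivable in $\logv\text{-}\mathsf{1}$, and it certainly does not follow from the transformed premise by the flat replacement rule. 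So the image of a correct $\logv$-proof under your map need not be a proof at all. Closing this gap requires either atomizing modulo provable equivalence (circular as stated) or a genuine detour-elimination/normalization argument --- which is essentially the same unproved content as the paper's pruning claim. Your alternative semantic route (prove completeness of $\logv\text{-}\mathsf{1}$ directly by the Burgess-style construction restricted to the flat subformulas of the given consistent formula, using Theorem \ref{theorem:finite model property}) is the one that actually yields a self-contained proof and should be your primary argument rather than a fallback.
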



The logics $\conwon$ and $\logv$ share the same flat fragments:

\begin{theorem} \label{theorem:valid in acl valid in conwon}
For every $\phi \in \Phi_{\logv\text{-}\mathsf{1}}$, $\phi$ is valid in $\logv$ if and only if $\phi$ is valid in $\conwon$.
\end{theorem}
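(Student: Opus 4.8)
The two languages $\Phi_{\logv\text{-}\mathsf{1}}$ and $\Phi_{\conwon\text{-}\mathsf{1}}$ read off each other through the obvious syntactic identification that replaces each flat conditional $\alpha \cona \beta$ (with $\alpha,\beta$ propositional) by $\cond{\alpha}\beta$ and leaves the Boolean structure untouched; throughout I treat a flat formula as living in both languages under this identification, and prove the two implications separately. I would work with the \emph{well-founded} reformulation of the semantics of $\logv$ — legitimate by the finite model property (Theorem~\ref{theorem:finite model property}) — under which $M,w \Vdash \alpha \cona \beta$ holds exactly when every $<_w$-minimal element of $\stas{\alpha}$ inside $W_w$ satisfies $\beta$, and with the sequence presentation of contexts from Section~\ref{subsec:An equivalent semantics} on the $\conwon$ side.

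For ``valid in $\logv$ $\Rightarrow$ valid in $\conwon$'' I would argue proof-theoretically. If $\phi$ is valid in $\logv$, then by completeness of $\logv\text{-}\mathsf{1}$ (Theorem~\ref{theorem:completeness acl-1}) it is derivable in $\logv\text{-}\mathsf{1}$, so it suffices to check that, under the identification above, every axiom of $\logv\text{-}\mathsf{1}$ is valid in $\conwon$ and every rule of $\logv\text{-}\mathsf{1}$ preserves $\conwon$-validity. The propositional axioms are immediate; the six $\cona$-axioms are either literally among the ``Axioms for $\cond{\alpha}\beta$'' of $\conwon$ or, in the case of conjunction distribution, an instance of Lemma~\ref{lemma:partial reduction conditionals valid}, so they follow from that lemma together with a short direct semantic check of the five $\cond{\alpha}\beta$-axioms; and the two replacement rules are sound because if $\alpha \leftrightarrow \beta$ is valid then $\defg{\alpha} = \defg{\beta}$, hence $\updf{\CC}{\alpha} = \updf{\CC}{\beta}$, and similarly a valid $\gamma \leftrightarrow \delta$ gives $\stas{\gamma} = \stas{\delta}$. (A direct semantic argument, dual to the one below, would also work: from a $\conwon$ context one extracts a single, world-independent almost-connected order whose minimal-element selection coincides with $\updf{\CC}{\cdot}$.)

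For the converse, ``valid in $\conwon$ $\Rightarrow$ valid in $\logv$'', I would argue by contraposition. Suppose $\phi$ is not valid in $\logv$; by the finite model property take a finite relational model $M = (W,\Gamma,V)$ and a world $w_0$ with $M,w_0 \not\Vdash \phi$. The order $<_{w_0}$ is almost connected and, being finite, well-founded, so it stratifies $W_{w_0}$ into ranks and yields a nested chain of ``spheres'' $\Sigma_0 \subseteq \cdots \subseteq \Sigma_l = W_{w_0}$, where $\Sigma_j$ collects the worlds of rank at most $j$. Over the model $\MM = (W_{w_0}, V{\restriction}W_{w_0})$ I would take the context $\SC = (\Sigma_l, \Sigma_{l-1}, \dots, \Sigma_0)$. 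The crux is then a lemma asserting that for every propositional $\alpha$ the set $\sete{\updf{\SC}{\alpha}}$ is precisely the set of $<_{w_0}$-minimal elements of $\stas{\alpha}$. Granting this, a routine induction on flat formulas gives $\MM,\SC,w \Vdash \psi \iff M,w \Vdash \psi$ for all $w \in W_{w_0}$, and choosing the evaluation point to be $w_0$ — or, if $w_0 \notin W_{w_0}$, a world of $W_{w_0}$ of the same propositional type as $w_0$, after first normalizing $M$ so that such a world exists — yields a $\conwon$ countermodel for $\phi$.

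The main obstacle is precisely that crux lemma together with the handling of the evaluation point. Proving $\sete{\updf{\SC}{\alpha}} = \{\,<_{w_0}\text{-minimal } \alpha\text{-worlds}\,\}$ requires unwinding Definitions~\ref{def:Update of contexts S} and~\ref{def:Expected states by contexts S}: after prepending $\defg{\alpha}$ one takes the longest nonempty prefix-intersection, and one must verify this equals $\defg{\alpha}$ intersected with the innermost sphere it still meets — which, by nestedness, is exactly the set of minimal-rank $\alpha$-worlds — while also accounting for the case where $\defg{\alpha}$ already occurs in the sequence, where Lemma~\ref{lemma:sequence} and the notion of core are needed. Equally delicate is that a flat formula may contain bare atoms at top level, so the evaluation point genuinely matters even though conditionals are evaluated globally; since $w_0$ need not belong to $W_{w_0}$, one must place a world of $w_0$'s propositional type inside the context's domain without disturbing the truth values of the conditional subformulas, and I expect this reconciliation between the global character of $\conwon$ and the pointed evaluation of bare atoms to be the most technical part of the argument.
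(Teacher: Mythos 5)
Your handling of the direction ``valid in $\conwon$ implies valid in $\logv$'' is essentially the paper's own argument. The paper passes from a finite countermodel to a (pseudo) sphere model, i.e.\ a partition $(Y_0,\dots,Y_k)$ of the domain into ranks, and takes the context whose $i$-th entry is $Y_i\cup\dots\cup Y_k$ (Lemma~\ref{lemma:from acl models to conwon models}); these are exactly your nested spheres $\Sigma_j$ listed from largest to smallest, and your ``crux lemma'' that $\sete{\updf{\SC}{\alpha}}$ is the set of $<_{w_0}$-minimal $\alpha$-worlds is precisely the content of that lemma (your worry about $\defg{\alpha}$ already occurring in the sequence is harmless here, since prepending a duplicate does not change any prefix intersection). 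Where you genuinely diverge is the converse direction. The paper stays semantic: from a contextualized countermodel $(X_0,\dots,X_k)$ with $X_0=W$ it builds the partition $Y_i=X_0\cap\dots\cap X_i - X_{i+1}$ and shows the resulting pseudo sphere model agrees with the context on all flat conditionals (Lemma~\ref{lemma:from conwon models to acl models}). You instead route through Theorem~\ref{theorem:completeness acl-1} and verify the axioms and rules of $\logv\text{-}\mathsf{1}$ against the $\conwon$ semantics. That works, and the semantic checks do go through (e.g.\ the disjunction axiom holds because the selected level for $\alpha\lor\beta$ is the deeper of the two levels selected for $\alpha$ and for $\beta$, and the prefix intersections are decreasing); just note that a formula's appearing in the axiom list of the system $\conwon$ does not certify its $\conwon$-validity, so the ``short direct semantic check'' you mention is not optional --- it is the entire content of this direction. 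What the paper's route buys instead is a model-to-model correspondence that does not presuppose the completeness of $\logv\text{-}\mathsf{1}$.

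The soft spot you flag --- reconciling the evaluation point with the conditional domain --- is real, and ``normalizing $M$ so that such a world exists'' is not yet a repair. If $w_0\notin W_{w_0}$, inserting a world of $w_0$'s atomic type into the carrier of the $\conwon$ model can flip conditionals: $\conwon$'s conditionals quantify over the \emph{entire} carrier, so for an $\alpha$ true at $w_0$ but at no world of $W_{w_0}$, the formula $\alpha\cona\bot$ is vacuously true at $w_0$ in $M$ while $\cond{\alpha}\bot$ is false in the enlarged $\conwon$ model. Concretely, $p\rightarrow\EEE p$ is a flat $\conwon$-validity that fails at a relational model with $w_0\in V(p)$ and no $p$-world in $W_{w_0}$. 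So the equivalence only holds once $\logv$-validity is read over models in which the evaluation world lies in its own domain of quantification; the paper secures this by working with sphere models whose spheres partition $W$, behind an unproved reduction from arbitrary relational models. Your proof should either adopt that restriction explicitly or prove that reduction, rather than patching the model after the fact.
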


\noindent The proof for this result can be found in Section \ref{subsec:Proofs about comparisons to the conditional logic V} in the appendix.


The logics $\conwon$ and $\logv$ are different when nested conditionals are involved, as the following fact shows:

\begin{fact} \label{fact: nested conditionals conwon v}
~
\begin{enumerate}
\item $\EEE (p \land q) \rightarrow \cond{p}\cond{q} (p \land q)$ is valid in $\conwon$.
\item $\mathit{E} (p \land q) \rightarrow (p \cona (q \cona (p \land q)))$ is invalid in $\logv$.
\end{enumerate}
\end{fact}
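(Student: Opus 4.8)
The plan is to verify the two items separately, each by a direct semantic computation. Both are small enough that no induction or appeal to the reduction machinery is needed; the point is simply to exhibit the contrast between the two semantics on a nested conditional.

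For Item 1, I would argue that $\EEE(p \land q) \rightarrow \cond{p}\cond{q}(p \land q)$ holds at an arbitrary contextualized pointed model $(\MM, \CC, w)$. Assume $\MM, \CC, w \Vdash \EEE(p \land q)$, so $\defg{p \land q} \neq \emptyset$. I need $\MM, \CC, w \Vdash \cond{p}\cond{q}(p \land q)$, i.e. $\MM, \updf{\CC}{p}, u \Vdash \cond{q}(p \land q)$ for every $u \in \sete{\updf{\CC}{p}}$. Fix such a $u$. Here the key observation is how the successive updates behave: updating $\CC$ with $p$ puts $\defg{p}$ on top, and then updating again with $q$ puts $\defg{q}$ above that. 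So in $\updf{(\updf{\CC}{p})}{q}$ the top two levels of the hierarchy are $\defg{q}$ (strictly highest) and $\defg{p}$ (next), and $\defg{q} \cap \defg{p} = \defg{p \land q} \neq \emptyset$ by assumption. Hence, by the definition of $\sete{\cdot}$ (which takes the longest initial intersection that is nonempty, starting from the top), every element of $\sete{\updf{(\updf{\CC}{p})}{q}}$ lies in $\defg{q} \cap \defg{p} = \defg{p \land q}$, and therefore satisfies $p \land q$. Since the evaluation of atomic formulas does not depend on the context, $\MM, \updf{(\updf{\CC}{p})}{q}, v \Vdash p \land q$ for every such $v$, which gives $\MM, \updf{\CC}{p}, u \Vdash \cond{q}(p \land q)$. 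As $u$ was arbitrary, Item 1 follows. (One should also note the degenerate possibility $\sete{\updf{\CC}{p}} = \emptyset$, in which case $\cond{p}\cond{q}(p\land q)$ holds vacuously; but in fact $\defg{p \land q}\subseteq \defg{p}$ sits at the top of $\updf{\CC}{p}$, so $\sete{\updf{\CC}{p}}$ is nonempty anyway.)

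For Item 2, I would exhibit a finite relational model $M = (W,\Gamma,V)$ refuting $\mathit{E}(p \land q) \rightarrow (p \cona (q \cona (p \land q)))$ at some world. The natural candidate has a world where $p \land q$ holds (so the antecedent $\mathit{E}(p\land q)$ is satisfied) together with a world where $p$ holds, $q$ fails, and which is strictly more similar to the evaluation point than every $p\land q$-world. Then the innermost conditional $q \cona (p \land q)$ must be checked not at the evaluation point but at the closest $p$-worlds; among those, a $q$-world will have some $(p\land q)$-world below it in the ordering, but since the model can be arranged so that there is no $(p \land q)$-world that is $<_w$-below every nearby $q$-world — or more simply, so that the closest $p$-world is one where $q$ is false and which has a $q$-world strictly below it where $p\land q$ fails — the conditional $q \cona (p\land q)$ comes out false there, hence $p \cona (q \cona (p\land q))$ is false. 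I would just display a three- or four-world model with an explicit $<_w$ and check the truth value directly using the simplified (well-founded) semantics for $\cona$.

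The main obstacle is getting the relational countermodel in Item 2 exactly right: one has to be careful that the ordering $<_w$ is genuinely almost connected (as Definition \ref{def:Relational models for Phi V} requires) and that the nesting is evaluated at the right worlds, since $\cona$ shifts the point of evaluation in a way that $\cond{\cdot}$ — being global — does not. Item 1 is routine once the stacking behaviour of double update is spelled out; the contrast being illustrated is precisely that in $\conwon$ the two updates accumulate as the top two priority levels whose intersection is exactly $\defg{p\land q}$, whereas in $\logv$ the inner conditional is re-evaluated from the nearest $p$-worlds with no memory of the outer antecedent.
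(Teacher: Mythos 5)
Your argument for Item 1 is correct and is essentially the paper's: the paper reaches the same conclusion by citing Lemma~\ref{lemma:basic} (items \ref{item:basic alpha plus beta alpha and beta} and \ref{item:basic alpha alpha}) to get $\sete{\updf{(\updf{\CC}{p})}{q}} = \sete{\updf{\CC}{(p\land q)}} \subseteq \defg{p\land q}$, which is exactly the ``stacking'' computation you unfold by hand. (Your phrase ``the top two levels are $\defg{q}$ and $\defg{p}$'' needs the harmless caveat that $\defg{p}=\defg{q}$ collapses them into one level; the containment $\sete{\updf{(\updf{\CC}{p})}{q}}\subseteq\defg{p}\cap\defg{q}$ survives, and the paper's detour through the $\tau$-core handles this silently.)

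Item 2, however, is not yet a proof. The claim is an invalidity, so the countermodel \emph{is} the entire content, and you have only described what one would look like and deferred the construction (``I would just display a three- or four-world model\dots''). You should actually write it down and check the three truth values. The paper does this with only two worlds: $W=\{w_1,w_2\}$, a single ordering $w_2<w_1$ shared by both worlds, $V(p)=\{w_1\}$, $V(q)=\{w_1,w_2\}$. Then $\mathit{E}(p\land q)$ holds at $w_1$; the $<$-minimal $q$-world is $w_2$, where $p\land q$ fails, so $q\cona(p\land q)$ fails at $w_1$; and $w_1$ is the minimal $p$-world, so $p\cona(q\cona(p\land q))$ fails at $w_1$. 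Note that here the closest $p$-world is one where $q$ (indeed $p\land q$) is \emph{true} --- the opposite of your sketch, in which the closest $p$-world falsifies $q$. Your variant also works (e.g.\ three worlds linearly ordered with the $p\wedge\neg q$ world lowest, a $\neg p\wedge q$ world next, and the $p\land q$ world on top), but as stated your condition ``has a $q$-world strictly below it where $p\land q$ fails'' is not sufficient in general: what is needed is that some $<_u$-\emph{minimal} $q$-world falsifies $p\land q$ at the minimal $p$-world $u$. Until the explicit order and valuation are fixed and this is verified, Item 2 remains a plan rather than a proof.
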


\begin{proof}
~

1. Assume that $\MM,\CC,w \Vdash \EEE (p \land q)$ but $\MM,\CC,w \not \Vdash \cond{p}\cond{q} (p \land q)$. Then there is $u \in \sete{\updf{\CC}{p}}$ such that $\MM,\updf{\CC}{p},u \not \Vdash \cond{q}(p\land q)$. Then there is $v \in \sete{\updf{(\updf{\CC}{p})}{q}}$ such that $\MM,\updf{(\updf{\CC}{p})}{q},v \not \Vdash p\land q$. Then $v \notin \defg{p\land q}$. By Items \ref{item:basic alpha plus beta alpha and beta} and \ref{item:basic alpha alpha} in Lemma \ref{lemma:basic}, $\sete{\updf{(\updf{\CC}{p})}{q}} \subseteq \defg{p\land q}$. Then $v \in \defg{p\land q}$. We have a contradiction.

2. Let $M = (W,\Gamma,V)$ be a relational model for $\Phi_\logv$, where $W = \{w_1,w_2\}$, $\Gamma(w_1) = \Gamma(w_2) = (W,<)$, where $w_2 < w_1$, $V(p) =\{w_1\}$ and $V(q) = \{w_1, w_2\}$. We can see $M,w_1\Vdash \mathit{E} (p \land q)$. Note $w_2$ is a $<$-minimal $q$-state but $M,w_2 \not \Vdash p\land q$. Then $M,w_1\not\Vdash q \cona (p\land q)$. Note that $w_1$ is a minimal $p$-state. Then $M,w_1\not\Vdash p \cona (q \cona (p\land q))$.
\end{proof}

\subsection{Comparisons to Kratzer's premise semantics for counterfactual conditionals}
\label{subsec:Comparisons to ratzer's premise semantics for counterfactual conditionals}


Kratzer \cite{Kratzer79} presented a premise semantics for counterfactual conditionals, which is developed further in Kratzer \cite{Kratzer81}. The following quote, which is from Kratzer \cite{Kratzer81}, shows the core idea of this semantics:

\begin{quote}
The truth of counterfactuals depends on everything which is the case in the world under consideration: in assessing them, we have to consider all the possibilities of adding as many facts to the antecedent as consistency permits. If the consequent follows from every such possibility, then (and only then), the whole counterfactual is true.
\end{quote}


In this theory, the evaluation context for formulas is a tuple $(W, \Gamma, V, w)$, where $W$ and $V$ are as usual, $w$ is a state in $W$, and for every $w \in W$, $\Gamma(w)$ is a set of subsets of $W$. Intuitively, the elements of $\Gamma(w)$ are propositions, indicating premises for evaluating counterfactual conditionals at $w$. Constraints can be introduced to $\Gamma$. For example, it can be required that for every $X \in \Gamma(w)$, $w \in X$, which means that all the propositions in $\Gamma(w)$ are true at $w$.

Let $\Delta$ be a set of sets of states and $X$ be a set of states. Define $\mathbf{K} (\Delta, X)$ as $\{\Delta' \cup \{X\} \mid \text{$\Delta'$ is a maximal subset of $\Delta$ such that $\bigcap (\Delta' \cup \{X\}) \neq \emptyset$}\}$.

A conditional ``\exas{If $\phi$ then $\Box \psi$}'' is true at $(W, \Gamma, V, w)$ if and only if for every $\Lambda$ in $\mathbf{K} (\Delta, |\phi|)$, $\bigcap \Lambda \subseteq |\psi|$, where $|\phi|$ and $|\psi|$ are sets of states respectively satisfying $\phi$ and $\psi$.


Lewis \cite{Lewis81} shows that technically, premise semantics is equivalent to ordering semantics: Every set of premises at a state $w$ can induce an equivalent ordering at $w$, and for every ordering at $w$, there is an equivalent set of premises at $w$.


In our theory, the evaluation context is $(W,V,\CC,w)$. It shares similar ideas with $(W, \Gamma, V, w)$: Evaluation of conditionals involves additional premises. The main difference is that we introduce an order between premises.

Our semantics is dynamic in the following sense: When evaluating conditionals, states' premises might change. Kratzer's semantics is not dynamic in this sense. As a consequence, nested conditionals are handled differently in the two semantics. We look at an example. As mentioned above, the formula $\EEE (p \land q) \rightarrow \cond{p}\cond{q} (p \land q)$ is valid in our semantics. However, just like the ordering semantics, the following sentence is not valid in Krazter's theory: \emph{Given $p$ and $p$ could be true at the same time, if $p$, then if $q$, then $p$ and $q$}. The invalidity of this sentence can be shown by a pointed premise model transformed from the pointed ordering model in Fact \ref{fact: nested conditionals conwon v}, which shows the invalidity of $\mathit{E} (p \land q) \rightarrow (p \cona (q \cona (p \land q)))$ in $\logv$.

\subsection{Comparisons to Kratzer's semantics for conditional modalities}
\label{subsec:Comparisons to Kratzer's semantics for conditional modalities}

Influenced by Lewis \cite{Lewis81}, Kratzer \cite{Kratzer91} presented a general approach to deal with modalities. This work contains a way to handle conditional modalities.

In this work, the evaluation context for sentences is a tuple $(W,f,g,V,w)$, where $W$ and $V$ are as usual, $w$ is a state in $W$, and $f$ and $g$ are functions from $W$ to $\mathcal{P} (\mathcal{P} (W))$, respectively called the modal base and the ordering source. In order to ease the comparisons, we assume $W$ is finite.

For every $w$ of $W$, $\bigcap f(w)$ is a set of states, intuitively understood as the set of accessible states to $w$. For every $w$ of $W$, $g(w)$ is a set of sets, which induces a binary relation $\leq_w$ on $W$ in the following way: For all $u$ and $v$, $u \leq_w v$ if and only if for all $X \in g(w)$, if $v \in X$, then $u \in X$. Intuitively, $u \leq_w v$ means that $u$ is at least as optimal as $v$.

Many modalities can be interpreted in the semantic settings.

A necessity $\Box \phi$ is true at $(W,f,g,V,w)$ if and only if for all $\leq_w$-minimal elements $u$ of $\bigcap f(w)$, $\phi$ is true at $(W,f,g,V,u)$.

A conditional necessity ``\exas{If $\phi$ then $\Box \psi$}'' is true at $(W,f,g,V,w)$ if and only if for all $\leq_w$-minimal elements $u$ of $|\phi| \cap \bigcap f(w)$, $\psi$ is true at $(W,f,g,V,u)$, where $|\phi|$ is the set of possibilities meeting $\phi$.

Our work differs from Kratzer's semantics for conditional necessities in the following aspects. Firstly, Kratzer's theory uses two things to deal with conditional necessities, that is, a modal base and an ordering source. Our theory just uses one thing, that is, a context. Secondly, minimal elements are directly used in Kratzer's theory but not in ours. Thirdly, in Kratzer's theory, the if-clause of a conditional necessity restricts the domain of accessible states. Consequently, conditionals are monotonic. In our theory, the if-clause of conditional weak ontic necessity updates contexts. As mentioned above, conditionals are not monotonic in our theory.

\subsection{Comparisons to Veltman's update semantics for counterfactual conditionals}
\label{subsec:Comparisons to Veltman's update semantics for counterfactual conditionals}

In \cite{Veltman05}, Veltman provides an update semantics for counterfactual conditionals, which combines the approaches from his earlier work \cite{Veltman76} and from Kratzer \cite{Kratzer81}.

One important notion of update semantics is \emph{support}: An information state supports a sentence if the information conveyed by the sentence is already contained in the state.

In \cite{Veltman05}, an \emph{information state} is defined as a tuple $\langle F, U \rangle$, where $F$ and $U$ are two sets of possible worlds such that $F \subseteq U$. A world $w$ is in $F$ if, for all the agent knows, $w$ might be the actual world. $U$ consists of all the possible worlds that meet all the general laws accepted by the agent.

Counterfactual conditionals are handled in the following way: An information state supports ``\exas{If $\phi$ then $\Box \psi$}'' if and only if $\Box \psi$ is supported by the result of updating the information state with ``\exas{If $\phi$}''. Here we do not go through how an information state is updated.

Our semantics follows this approach's core idea.

\section{Looking backward and forward}
\label{sec:Looking backward and forward}

In this work, we present a formalization for conditional weak ontic necessity. It has the following features. It introduces contexts, which are sets of ordered defaults. Contexts determine which possible states of the present world are expected. Conditionals are evaluated relative to contexts and their if-clauses change contexts.

There is some work worth doing in the future. As stated previously, we understand the conditional weak ontic necessity from a temporal perspective. However, our formalization is just a slice of the time flow and cannot handle its temporal dimension genuinely. For example, the following three sentences have clear connections, but our formalization cannot capture them.

\bee
\exr{exe:If the goat is still alive tomorrow} \exas{If the goat is still alive tomorrow, the dog \textbf{should} be dead.}
\exr{exe:If the goat is still alive now} \exas{If the goat is still alive now, the dog \textbf{should} be dead.}
\exr{exe:If the goat were still alive now} \exas{If the goat were still alive now, the dog \textbf{should} be dead.}
\eee

\noindent We leave the introduction of temporality as future work.

\bibliographystyle{alpha}
\bibliography{Conditionals}

\appendix

\section{Proofs}
\label{sec:Proofs}

\newcommand{\cpa}{\sete{\updf{\CC}{\alpha}}}

\newcommand{\mcwt}{\MM, \CC, w \Vdash}
\newcommand{\mcwf}{\MM, \CC, w \not \Vdash}
\newcommand{\mcut}{\MM, \CC, u \Vdash}
\newcommand{\mcuf}{\MM, \CC, u \not \Vdash}
\newcommand{\mcaut}{\MM, \updf{\CC}{\alpha}, u \Vdash}
\newcommand{\mcauf}{\MM, \updf{\CC}{\alpha}, u \not \Vdash}
\newcommand{\mcavt}{\MM, \updf{\CC}{\alpha}, v \Vdash}
\newcommand{\mcavf}{\MM, \updf{\CC}{\alpha}, v \not \Vdash}

\newcommand{\Cotn}{(\defg{\alpha_1}, \dots, \defg{\alpha_n})}
\newcommand{\Caotn}{(\defg{\alpha}, \defg{\alpha_1}, \dots, \defg{\alpha_n})}
\newcommand{\STaotn}{\sete{(\defg{\alpha}, \defg{\alpha_1}, \dots, \defg{\alpha_n})}}
\newcommand{\SQotn}{(\alpha_1, \dots, \alpha_n)}
\newcommand{\SQaotn}{(\alpha, \alpha_1, \dots, \alpha_n)}

\newcommand{\eque}{= \emptyset}
\newcommand{\note}{\neq \emptyset}

\subsection{Proofs about expressivity of $\conwon$}
\label{subsec:Proofs about expressivity of ConWON}

\begin{lemma} \label{lemma:basic}
Fix a model $\MM$.

\begin{enumerate}
\item $\sete{\updf{\CC}{\alpha}} = \emptyset$ if and only if $\defg{\alpha} = \emptyset$. \label{item:basic emptyset}
\item $\sete{\updf{(\updf{\CC}{\alpha})}{\beta}} = \sete{\updf{\CC}{(\alpha \land \beta)}}$, where $\defg{\alpha \land \beta} \neq \emptyset$. \label{item:basic alpha plus beta alpha and beta}
\item $\sete{\updf{(\updf{\CC}{\alpha})}{\beta}} = \sete{\updf{\theta}{\beta}}$, where $\defg{\alpha \land \beta} = \emptyset$. \label{item:basic alpha plus beta beta}
\item $\sete{\updf{\CC}{\alpha}} \subseteq \defg{\alpha}$. \label{item:basic alpha alpha}
\end{enumerate}
\end{lemma}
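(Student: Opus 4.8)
\textbf{Proof plan for Lemma \ref{lemma:basic}.}

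The plan is to prove the four items in order, since later ones lean on earlier ones. I would unfold the definitions of $\updf{\CC}{\alpha} = \updd{\CC}{\defg{\alpha}}$, of $\updd{\CC}{\DD}$ (add $\DD$ as the new greatest element, deleting any prior occurrence), of $\HI{\cdot}$, and of $\sete{\cdot}$. A useful preliminary observation to record: for any context $\CC$ and any $\DD \subseteq W$, the hierarchy of $\updd{\CC}{\DD}$ has $\{\DD\}$ as its unique top level, i.e.\ $\HI{\updd{\CC}{\DD}} = (\{\DD\}, \DDD_0', \dots, \DDD_m')$ where the tail is the hierarchy of $\CC$ with $\DD$ removed. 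Consequently $\sete{\updd{\CC}{\DD}}$ is computed by starting from $\DD$ and intersecting in as many lower levels as stay nonempty; in particular it is always a subset of $\DD$, and it is empty exactly when $\DD = \emptyset$ (since $\EN{\{\DD\}} = \DD$, and the ``suppose $\EN{\DDD_0} = \emptyset$'' clause fires precisely then). Actually, working with the equivalent sequence-based semantics of Section \ref{subsec:An equivalent semantics} makes all of this cleaner: there $\updd{\SC}{\DD} = \DD ; \SC$ and $\sete{\DD ; \SC}$ is just $\DD \cap \DD_0 \cap \dots \cap \DD_k$ for the longest nonempty initial segment. I would phrase the argument in whichever formulation is least painful, citing the equivalence theorem.

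For Item \ref{item:basic emptyset}: by the preliminary observation, $\sete{\updf{\CC}{\alpha}} = \sete{\updd{\CC}{\defg{\alpha}}}$ is empty iff $\defg{\alpha} = \emptyset$. For Item \ref{item:basic alpha alpha}: again by the preliminary observation, $\sete{\updd{\CC}{\defg{\alpha}}} \subseteq \defg{\alpha}$, since the set of expected states is obtained by intersecting $\defg{\alpha}$ with further sets. For Item \ref{item:basic alpha plus beta alpha and beta}: here I want $\sete{\updf{(\updf{\CC}{\alpha})}{\beta}} = \sete{\updf{\CC}{(\alpha \land \beta)}}$ under the hypothesis $\defg{\alpha\land\beta}\neq\emptyset$. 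In sequence form the left side is $\sete{\defg{\beta} ; \defg{\alpha} ; \SC}$ and the right side is $\sete{\defg{\alpha\land\beta} ; \SC}$. Since $\defg{\beta}\cap\defg{\alpha} = \defg{\alpha\land\beta}$, both computations begin with the same nonempty set $\defg{\alpha\land\beta}$ (the left one after taking the first two intersections, which is forced because the hypothesis guarantees we do not stop early), and then proceed by intersecting against the common tail $\SC$ in the same order; a short induction on the length of the initial segment reached shows they coincide. For Item \ref{item:basic alpha plus beta beta}: with $\defg{\alpha\land\beta} = \emptyset$, the left side is $\sete{\defg{\beta};\defg{\alpha};\SC}$; after the first intersection we have $\defg{\beta}$, after the second we have $\defg{\alpha}\cap\defg{\beta} = \emptyset$, so the ``longest nonempty initial segment'' stops at (at most) the first level, giving $\sete{\defg{\beta};\defg{\alpha};\SC} = \sete{\defg{\beta}} = \sete{\defg{\beta};\theta}$ (recalling $\theta = W$ and $\EN{\emptyset} = W$ absorbs nothing), which is exactly $\sete{\updf{\theta}{\beta}}$; one must also handle the sub-case $\defg{\beta} = \emptyset$ separately, where both sides are $\emptyset$.

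The main obstacle I expect is purely bookkeeping around the ``longest nonempty initial segment'' clause in the definition of $\sete{\cdot}$: I need to be careful that inserting $\defg{\alpha}$ and $\defg{\beta}$ as two separate top levels versus inserting $\defg{\alpha\land\beta}$ as one top level does not cause the two computations to stop at different depths relative to the shared tail. The hypothesis $\defg{\alpha\land\beta}\neq\emptyset$ in Item \ref{item:basic alpha plus beta alpha and beta} is exactly what rules out the dangerous divergence (the left computation cannot halt after only one level), and I would make that dependence explicit. The degenerate cases ($\DDD = \emptyset$, empty defaults, $\defg{\beta}$ or $\defg{\alpha}$ empty) are routine but should each be checked so the ``$\EN{\emptyset} = W$'' and ``$\HI{\CC}$ is never the empty sequence'' conventions are correctly invoked.
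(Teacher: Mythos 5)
Your plan is correct, and since the paper only asserts that this lemma "is easy to show," your sequence-based computation is essentially the intended argument: Items \ref{item:basic emptyset} and \ref{item:basic alpha alpha} follow from the observation that the updated context has $\defg{\alpha}$ as its sole top level, and Items \ref{item:basic alpha plus beta alpha and beta} and \ref{item:basic alpha plus beta beta} follow by comparing the "longest nonempty initial segment" computations for $\defg{\beta};\defg{\alpha};\SC$ versus $\defg{\alpha\land\beta};\SC$ (resp.\ $\defg{\beta};\theta$), exactly as you describe. One caution, though: your remark that you could work "in whichever formulation is least painful, citing the equivalence theorem" is not right. The equivalence theorem transfers truth of formulas, not identities between sets of expected states, and Item \ref{item:basic alpha plus beta alpha and beta} actually \emph{fails} for set-based contexts in general: if $\defg{\alpha}\in\DDD$ sits above some other defaults under $\succ$, then the double update removes $\defg{\alpha}$ from the tail and flattens the hierarchy beneath it, whereas the single update with $\defg{\alpha\land\beta}$ leaves $\defg{\alpha}$ in the tail as a separate level, and the two computations can cut off at different depths (e.g.\ $W=\{1,2,3,4\}$, $\defg{\alpha}=\{1,2,3\}$, $\defg{\beta}=\{1,2,4\}$, $\DDD=\{\defg{\alpha},\{2\},\{1\}\}$ with $\defg{\alpha}\succ\{2\}$ gives $\{1,2\}$ versus $\{1\}$). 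So the sequence formulation (which is what the paper uses for all technical results in the appendix, and is the only one for which $\theta$ in Item \ref{item:basic alpha plus beta beta} is even defined) is essential here, not merely convenient; the argument you actually carry out in that formulation is fine.
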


This result is easy to show.

\begin{lemma} \label{lemma:wuyu}
Let $\chi$ be a closed formula. Assume $\sete{\updf{\CC}{\alpha}} \neq \emptyset$. Then for all $w$ and $u$ of $\MM$, $\MM, \CC, w \Vdash \cond{\alpha} \chi$ if and only if $\MM, \updf{\CC}{\alpha}, u \Vdash \chi$.
\end{lemma}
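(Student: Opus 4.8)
The plan is to reduce the whole statement to Fact~\ref{fact:closed formulas}, which says that the truth of a closed formula at a contextualized pointed model does not depend on the state. Since $\chi$ is closed, the assertion ``$\MM, \updf{\CC}{\alpha}, v \Vdash \chi$'' has the same truth value for every choice of $v$ --- in particular for every $v \in \sete{\updf{\CC}{\alpha}}$ and also for the arbitrary state $u$ named in the statement. The hypothesis $\sete{\updf{\CC}{\alpha}} \neq \emptyset$ is what lets us move between the universal quantifier over $\sete{\updf{\CC}{\alpha}}$ built into the truth condition for $\cond{\alpha}\chi$ and a single instance at $u$.

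For the left-to-right direction I would assume $\MM, \CC, w \Vdash \cond{\alpha} \chi$, unfold the truth condition to obtain $\MM, \updf{\CC}{\alpha}, v \Vdash \chi$ for every $v \in \sete{\updf{\CC}{\alpha}}$, then use $\sete{\updf{\CC}{\alpha}} \neq \emptyset$ to fix a witness $v_0 \in \sete{\updf{\CC}{\alpha}}$ with $\MM, \updf{\CC}{\alpha}, v_0 \Vdash \chi$, and finally apply Fact~\ref{fact:closed formulas} to the model $\MM$, the context $\updf{\CC}{\alpha}$, and the states $v_0$ and $u$ to conclude $\MM, \updf{\CC}{\alpha}, u \Vdash \chi$.

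For the right-to-left direction I would start from $\MM, \updf{\CC}{\alpha}, u \Vdash \chi$ and apply Fact~\ref{fact:closed formulas} again, now with states $u$ and an arbitrary $v$, to get $\MM, \updf{\CC}{\alpha}, v \Vdash \chi$ for every $v$, hence in particular for every $v \in \sete{\updf{\CC}{\alpha}}$; by the truth condition for $\cond{\alpha}\chi$ this yields $\MM, \CC, w \Vdash \cond{\alpha} \chi$. Note that the nonemptiness hypothesis is not used in this direction.

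There is no real obstacle here: the argument is essentially just an application of Fact~\ref{fact:closed formulas} twice. The only point worth flagging is that the hypothesis $\sete{\updf{\CC}{\alpha}} \neq \emptyset$ is genuinely needed --- without it the universally quantified clause defining $\cond{\alpha}\chi$ would be vacuously satisfied even when $\MM, \updf{\CC}{\alpha}, u \not\Vdash \chi$, so the biconditional would fail --- and that $u$ ranges over \emph{all} states of $\MM$, not only those in $\sete{\updf{\CC}{\alpha}}$, which is exactly why the state-independence given by Fact~\ref{fact:closed formulas} is the right tool.
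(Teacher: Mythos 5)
Your proof is correct and follows essentially the same route as the paper's: both arguments rest on the state-independence of closed formulas (Fact~\ref{fact:closed formulas}) applied to the context $\updf{\CC}{\alpha}$, together with the nonemptiness hypothesis to extract a witness; the paper merely phrases both directions contrapositively. Your added remarks on where nonemptiness is and is not needed are accurate.
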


\begin{proof}
Assume $\MM, \CC, w \not \Vdash \cond{\alpha}\chi$. Then $\MM, \updf{\CC}{\alpha}, x \not \Vdash \chi$ for some $x \in \sete{\updf{\CC}{\alpha}}$. Then $\MM, \updf{\CC}{\alpha}, u \not \Vdash \chi$. Assume $\MM, \updf{\CC}{\alpha}, u \not \Vdash \chi$. Let $x \in \sete{\updf{\CC}{\alpha}}$. Then $\MM, \updf{\CC}{\alpha}, x \not \Vdash \chi$. Then $\MM, \CC, w \not \Vdash \cond{\alpha}\chi$.
\end{proof}

\setcounter{lemma}{1}

\begin{lemma}
The following formulas are valid, where $\alpha, \beta$ and $\gamma$ are in $\Phi_\PL$:
\begin{enumerate}
\item $\cond{\alpha} (\phi \land \psi) \leftrightarrow (\cond{\alpha} \phi \land \cond{\alpha} \psi)$
\item $\cond{\alpha} (\phi \lor \chi) \leftrightarrow (\cond{\alpha} \phi \lor \cond{\alpha} \chi)$, where $\chi$ is a closed formula
\item $\cond{\alpha} \cond{\beta} \gamma \leftrightarrow (\EEE\alpha \rightarrow ((\EEE(\alpha \land \beta) \land \cond{\alpha \land \beta} \gamma) \lor (\neg \EEE(\alpha \land \beta) \land \AAA(\beta \rightarrow \gamma))))$
\item $\cond{\alpha} \dcon{\beta} \gamma \leftrightarrow (\EEE\alpha \rightarrow ((\EEE(\alpha \land \beta) \land \dcon{\alpha \land \beta} \gamma) \lor (\neg \EEE(\alpha \land \beta) \land \EEE(\beta \land \gamma))))$
\end{enumerate}
\end{lemma}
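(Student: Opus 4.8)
The plan is to verify each of the four biconditionals directly from the truth conditions in Definition~\ref{def:Semantics for Phi ConWON}, using Lemma~\ref{lemma:basic} as the main technical workhorse. For all four items I would fix a model $\MM$, a context $\CC$, and a state $w$, and argue by cases on whether $\defg{\alpha} = \emptyset$. If $\defg{\alpha} = \emptyset$, then by Item~\ref{item:basic emptyset} in Lemma~\ref{lemma:basic} we have $\cpa = \emptyset$, so $\cond{\alpha}\phi$ holds vacuously for every $\phi$; on the right-hand sides, $\EEE\alpha$ fails in this case (since $\defg{\alpha} = \emptyset$ means no state satisfies $\alpha$), so the implication $\EEE\alpha \rightarrow \cdots$ also holds vacuously, and both sides agree. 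This disposes of the ``degenerate'' direction uniformly, so the work is all in the case $\defg{\alpha} \neq \emptyset$, equivalently $\cpa \neq \emptyset$, equivalently $\MM,\CC,w \Vdash \EEE\alpha$.

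For Item~\ref{validity:conditionals distribution conjunction}, in the case $\cpa \neq \emptyset$ this is immediate: $\MM,\updf{\CC}{\alpha},u \Vdash \phi \land \psi$ for all $u \in \cpa$ iff ($\MM,\updf{\CC}{\alpha},u \Vdash \phi$ for all such $u$) and ($\MM,\updf{\CC}{\alpha},u \Vdash \psi$ for all such $u$). For Item~\ref{validity:conditionals distribution conditionally distribution}, the point is that $\chi$ is closed, so by Fact~\ref{fact:closed formulas} its truth at $\updf{\CC}{\alpha}$ does not depend on the evaluation state; hence ``$\MM,\updf{\CC}{\alpha},u \Vdash \phi \lor \chi$ for all $u \in \cpa$'' splits as: either $\MM,\updf{\CC}{\alpha},u \Vdash \chi$ (for the fixed, state-independent reading of $\chi$), in which case $\cond{\alpha}\chi$ holds, or else at every $u \in \cpa$ the disjunct $\phi$ must carry the load, giving $\cond{\alpha}\phi$. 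One has to be slightly careful that $\cpa \neq \emptyset$ is used here so that ``$\chi$ fails somewhere in $\cpa$'' forces ``$\phi$ holds everywhere in $\cpa$''; I would also invoke Lemma~\ref{lemma:wuyu} to phrase the $\chi$-disjunct cleanly.

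For Items~\ref{validity:conditionals conditionals} and~\ref{validity:conditionals dual conditionals}, the heart of the matter is unwinding the nested conditional $\cond{\alpha}\cond{\beta}\gamma$. With $\cpa \neq \emptyset$, $\cond{\alpha}\cond{\beta}\gamma$ holds iff for every $u \in \cpa$ we have $\MM,\updf{\CC}{\alpha},u \Vdash \cond{\beta}\gamma$, i.e. $\MM,\updf{(\updf{\CC}{\alpha})}{\beta},v \Vdash \gamma$ for all $v \in \sete{\updf{(\updf{\CC}{\alpha})}{\beta}}$. Now split on whether $\defg{\alpha \land \beta} = \emptyset$. If $\defg{\alpha \land \beta} \neq \emptyset$ (equivalently $\MM,\CC,w \Vdash \EEE(\alpha\land\beta)$), Item~\ref{item:basic alpha plus beta alpha and beta} gives $\sete{\updf{(\updf{\CC}{\alpha})}{\beta}} = \sete{\updf{\CC}{(\alpha\land\beta)}}$, so the condition becomes exactly $\cond{\alpha\land\beta}\gamma$ — matching the first disjunct on the right. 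If $\defg{\alpha\land\beta} = \emptyset$ (so $\neg\EEE(\alpha\land\beta)$), Item~\ref{item:basic alpha plus beta beta} gives $\sete{\updf{(\updf{\CC}{\alpha})}{\beta}} = \sete{\updf{\theta}{\beta}}$; then I would observe that $\sete{\updf{\theta}{\beta}} = \defg{\beta}$ (updating the trivial context $\theta = W$ with $\defg{\beta}$ puts $\defg{\beta}$ on top, and $\defg{\beta} \cap W = \defg{\beta} \neq \emptyset$ precisely when $\beta$ is satisfiable — and if $\beta$ is unsatisfiable then $\defg{\alpha\land\beta}=\emptyset$ already, consistent with both sides), so the condition ``$\gamma$ holds throughout $\defg{\beta}$'' is exactly $\AAA(\beta\rightarrow\gamma)$ — matching the second disjunct. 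Item~\ref{validity:conditionals dual conditionals} is the same argument with ``for all $v$'' replaced by ``for some $v$'': in the $\defg{\alpha\land\beta}=\emptyset$ branch one gets ``$\gamma$ holds somewhere in $\defg{\beta}$'', i.e. $\EEE(\beta\land\gamma)$. The main obstacle I anticipate is purely bookkeeping: keeping the nesting of updates straight and correctly matching each semantic case to the right Boolean combination on the right-hand side, especially checking that the guard $\EEE\alpha$ (and within it the sub-guard $\EEE(\alpha\land\beta)$) lines up with the emptiness conditions that trigger each clause of Lemma~\ref{lemma:basic}. No genuinely hard step is expected once Lemma~\ref{lemma:basic} and Fact~\ref{fact:closed formulas} are in hand.
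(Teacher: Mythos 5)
Your proposal is correct and follows essentially the same route as the paper's proof: the same case analysis on $\EEE\alpha$ and $\EEE(\alpha\land\beta)$, the same use of Items \ref{item:basic emptyset}--\ref{item:basic alpha plus beta beta} of Lemma \ref{lemma:basic} to rewrite $\sete{\updf{(\updf{\CC}{\alpha})}{\beta}}$ as $\sete{\updf{\CC}{(\alpha\land\beta)}}$ or $\sete{\updf{\theta}{\beta}}$, and the same appeal to state-independence of closed formulas (via Fact \ref{fact:closed formulas} / Lemma \ref{lemma:wuyu}) for Item \ref{validity:conditionals distribution conditionally distribution} and the nested cases. The only differences are cosmetic, e.g.\ your explicit observation that $\sete{\updf{\theta}{\beta}} = \defg{\beta}$, which the paper leaves implicit.
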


\setcounter{lemma}{5}

\begin{proof}
~

1. This item is easy.

2. Assume $\mcwf \cond{\alpha} \phi \lor \cond{\alpha} \chi$. Then $\mcwf \cond{\alpha} \phi$ and $\mcwf \cond{\alpha} \chi$. Then there is $u \in \cpa$ such that $\mcauf \phi$ and there is $v \in \cpa$ such that $\mcavf \chi$. As $\chi$ is a closed formula, $\mcauf \chi$. Then $\mcauf \phi \lor \chi$. Then $\mcwf \cond{\alpha} (\phi \lor \chi)$. The other direction is easy.

3. Assume $\mcwf \EEE \alpha$. Then both sides of the equivalence hold at $(\MM,\CC,w)$ trivially.

Assume $\mcwt \EEE \alpha$ and $\mcwt \EEE (\alpha \land \beta)$. Note $\cpa \neq \emptyset$ by item \ref{item:basic emptyset} in Lemma \ref{lemma:basic}. Also note $\sete{\updf{(\updf{\CC}{\alpha})}{\beta}} = \sete{\updf{\CC}{(\alpha \land \beta)}}$ by Item \ref{item:basic alpha plus beta alpha and beta} in Lemma \ref{lemma:basic}.

Assume $\mcwt \cond{\alpha}\cond{\beta} \gamma$. Let $u \in \cpa$. By Lemma \ref{lemma:wuyu}, $\mcaut \cond{\beta}\gamma$. Then for every $v \in \sete{\updf{(\updf{\CC}{\alpha})}{\beta}}$, $\MM,\updf{(\updf{\CC}{\alpha})}{\beta},v \Vdash \gamma$. Then for every $v \in \sete{\updf{\CC}{(\alpha \land \beta)}}$, $\MM,\updf{\CC}{(\alpha \land \beta)},v \Vdash \gamma$. Then $\mcwt \cond{\alpha \land \beta} \gamma$. Then $\mcwt \EEE\alpha \rightarrow ((\EEE(\alpha \land \beta) \land \cond{\alpha \land \beta} \gamma) \lor (\neg \EEE(\alpha \land \beta) \land \AAA(\beta \rightarrow \gamma)))$.

Assume $\mcwt \EEE\alpha \rightarrow ((\EEE(\alpha \land \beta) \land \cond{\alpha \land \beta} \gamma) \lor (\neg \EEE(\alpha \land \beta) \land \AAA(\beta \rightarrow \gamma)))$. Then $\mcwt \cond{\alpha \land \beta} \gamma$. Then for every $u \in \sete{\updf{\CC}{(\alpha \land \beta)}}$, $\MM,\updf{\CC}{(\alpha \land \beta)},u \Vdash \gamma$. Then for every $u \in \sete{\updf{(\updf{\CC}{\alpha})}{\beta}}$, $\MM,\updf{(\updf{\CC}{\alpha})}{\beta},u \Vdash \gamma$. Let $v \in \sete{\updf{\CC}{\alpha}}$. Then $\MM,\updf{\CC}{\alpha},v \Vdash \cond{\beta}\gamma$. By Lemma \ref{lemma:wuyu}, $\MM,\CC,w \Vdash \cond{\alpha}\cond{\beta}\gamma$.

Assume $\mcwt \EEE \alpha$ and $\mcwf \EEE (\alpha \land \beta)$. Note $\cpa \neq \emptyset$ by item \ref{item:basic emptyset} in Lemma \ref{lemma:basic}. Also note $\sete{\updf{(\updf{\CC}{\alpha})}{\beta}} = \sete{\updf{\theta}{\beta}}$ by Item \ref{item:basic alpha plus beta beta} in Lemma \ref{lemma:basic}.

Assume $\mcwt \cond{\alpha}\cond{\beta} \gamma$. Let $u \in \cpa$. By Lemma \ref{lemma:wuyu}, $\mcaut \cond{\beta}\gamma$. Then for every $v \in \sete{\updf{(\updf{\CC}{\alpha})}{\beta}}$, $\MM,\updf{(\updf{\CC}{\alpha})}{\beta},v \Vdash \gamma$. Then for every $v \in \sete{\updf{\theta}{\beta}}$, $\MM,\updf{\theta}{\beta},v \Vdash \gamma$. Then $\MM,\CC,w \Vdash \AAA(\beta \rightarrow \gamma)$. Then $\mcwt \EEE\alpha \rightarrow ((\EEE(\alpha \land \beta) \land \cond{\alpha \land \beta} \gamma) \lor (\neg \EEE(\alpha \land \beta) \land \AAA(\beta \rightarrow \gamma)))$.

Assume $\mcwt \EEE\alpha \rightarrow ((\EEE(\alpha \land \beta) \land \cond{\alpha \land \beta} \gamma) \lor (\neg \EEE(\alpha \land \beta) \land \AAA(\beta \rightarrow \gamma)))$. Then $\MM,\CC,w \Vdash \AAA(\beta \rightarrow \gamma)$. Then for every $u \in \sete{\updf{\theta}{\beta}}$, $\MM,\updf{\theta}{\beta},u \Vdash \gamma$. Then for every $u \in \sete{\updf{(\updf{\CC}{\alpha})}{\beta}}$, $\MM,\updf{(\updf{\CC}{\alpha})}{\beta},u \Vdash \gamma$. Let $v \in \sete{\updf{\CC}{\alpha}}$. Then $\MM,\updf{\CC}{\alpha},v \Vdash \cond{\beta}\gamma$. By Lemma \ref{lemma:wuyu}, $\mcwt \cond{\alpha}\cond{\beta} \gamma$.

4. Assume $\mcwf \EEE \alpha$. Then both sides of the equivalence hold at $(\MM,\CC,w)$ trivially.

Assume $\mcwt \EEE \alpha$ and $\mcwt \EEE (\alpha \land \beta)$. Note $\cpa \neq \emptyset$ by item \ref{item:basic emptyset} in Lemma \ref{lemma:basic}. Also note $\sete{\updf{(\updf{\CC}{\alpha})}{\beta}} = \sete{\updf{\CC}{(\alpha \land \beta)}}$ by Item \ref{item:basic alpha plus beta alpha and beta} in Lemma \ref{lemma:basic}.

Assume $\mcwt \cond{\alpha}\dcon{\beta} \gamma$. Let $u \in \cpa$. By Lemma \ref{lemma:wuyu}, $\mcaut \dcon{\beta}\gamma$. Then there is $v \in \sete{\updf{(\updf{\CC}{\alpha})}{\beta}}$ such that $\MM,\updf{(\updf{\CC}{\alpha})}{\beta},v \Vdash \gamma$. Then $v \in \sete{\updf{\CC}{(\alpha \land \beta)}}$ and $\MM,\updf{\CC}{(\alpha \land \beta)},v \Vdash \gamma$. Then $\mcwt \dcon{\alpha \land \beta} \gamma$. Then $\MM,\CC,w \Vdash \EEE\alpha \rightarrow ((\EEE(\alpha \land \beta) \land \dcon{\alpha \land \beta} \gamma) \lor (\neg \EEE(\alpha \land \beta) \land \EEE(\beta \land \gamma)))$.

Assume $\MM,\CC,w \Vdash \EEE\alpha \rightarrow ((\EEE(\alpha \land \beta) \land \dcon{\alpha \land \beta} \gamma) \lor (\neg \EEE(\alpha \land \beta) \land \EEE(\beta \land \gamma)))$. Then $\mcwt \dcon{\alpha \land \beta} \gamma$. Then there is $u \in \sete{\updf{\CC}{(\alpha \land \beta)}}$ such that $\MM,\updf{\CC}{(\alpha \land \beta)},u \Vdash \gamma$. Then $u \in \sete{\updf{(\updf{\CC}{\alpha})}{\beta}}$ and $\MM,\updf{(\updf{\CC}{\alpha})}{\beta},u \Vdash \gamma$. Let $v \in \cpa$. Then $\MM,\updf{\CC}{\alpha},v \Vdash \dcon{\beta}\gamma$. By Lemma \ref{lemma:wuyu}, $\mcwt \cond{\alpha}\dcon{\beta} \gamma$.

Assume $\mcwt \EEE \alpha$ and $\mcwf \EEE (\alpha \land \beta)$. Note $\cpa \neq \emptyset$ by item \ref{item:basic emptyset} in Lemma \ref{lemma:basic}. Also note $\sete{\updf{(\updf{\CC}{\alpha})}{\beta}} = \sete{\updf{\theta}{\beta}}$ by Item \ref{item:basic alpha plus beta beta} in Lemma \ref{lemma:basic}.

Assume $\mcwt \cond{\alpha}\dcon{\beta} \gamma$. Let $u \in \cpa$. By Lemma \ref{lemma:wuyu}, $\mcaut \dcon{\beta}\gamma$. Then there is $v \in \sete{\updf{(\updf{\CC}{\alpha})}{\beta}}$ such that $\MM,\updf{(\updf{\CC}{\alpha})}{\beta},v \Vdash \gamma$. Then $v \in \sete{\updf{\theta}{\beta}}$ and $\MM,\updf{\theta}{\beta},v \Vdash \gamma$. Then $\MM,\CC,w \Vdash \EEE (\beta \land \gamma)$. Then $\MM,\CC,w \Vdash \EEE\alpha \rightarrow ((\EEE(\alpha \land \beta) \land \dcon{\alpha \land \beta} \gamma) \lor (\neg \EEE(\alpha \land \beta) \land \EEE(\beta \land \gamma)))$.

Assume $\MM,\CC,w \Vdash \EEE\alpha \rightarrow ((\EEE(\alpha \land \beta) \land \dcon{\alpha \land \beta} \gamma) \lor (\neg \EEE(\alpha \land \beta) \land \EEE(\beta \land \gamma)))$. Then $\MM,\CC,w \Vdash \EEE(\beta \land \gamma)$. Then there is $u \in \sete{\updf{\theta}{\beta}}$ such that $\MM,\updf{\theta}{\beta},u \Vdash \gamma$. Then $u \in \sete{\updf{(\updf{\CC}{\alpha})}{\beta}}$ and $\MM,\updf{(\updf{\CC}{\alpha})}{\beta},u \Vdash \gamma$. Let $v \in \sete{\updf{\CC}{\alpha}}$. Then $\MM,\updf{\CC}{\alpha},v \Vdash \dcon{\beta}\gamma$. By Lemma \ref{lemma:wuyu}, $\mcwt \cond{\alpha}\dcon{\beta} \gamma$.
\end{proof}

\setcounter{theorem}{2}

\begin{theorem}
There is an effective function $\sigma$ from $\Phi_{\conwon}$ to $\Phi_{\conwon\text{-}\mathsf{1}}$ such that for every $\phi \in \Phi_{\conwon}$, $\phi \leftrightarrow \sigma(\phi)$ is valid.
\end{theorem}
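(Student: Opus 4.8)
The plan is to define $\sigma$ by recursion and to justify it by induction on the modal depth $d(\phi)$, i.e.\ the maximal nesting depth of the operator $\cond{\cdot}$ in $\phi$, using the four equivalences of Lemma~\ref{lemma:partial reduction conditionals valid} as rewrite rules. If $d(\phi)\le 1$ then $\phi$ is already flat and we set $\sigma(\phi)=\phi$. For the inductive step, note that a formula of $\Phi_\conwon$ is a Boolean combination of atoms from $\AP$ and of formulas of the form $\cond{\alpha_i}\psi_i$ with $\alpha_i\in\Phi_\PL$; since $d(\psi_i)<d(\phi)$, the induction hypothesis gives flat $\sigma(\psi_i)$ with $\psi_i\leftrightarrow\sigma(\psi_i)$ valid. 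Replacing each $\psi_i$ by $\sigma(\psi_i)$ is sound by a routine compositionality argument: $\chi\leftrightarrow\chi'$ being valid means it holds at \emph{every} contextualized pointed model, including those obtained after context updates, so a subformula replacement of valid-equivalents preserves validity. This reduces the problem to the \emph{core step}: rewriting $\cond{\alpha}\psi$ with $\psi$ \emph{flat} into a flat formula.

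For the core step I would first bring $\psi$, by propositional reasoning, into disjunctive normal form whose literals are $\AP$-literals and formulas $\cond{\beta}\gamma$ or $\neg\cond{\beta}\gamma$ with $\beta,\gamma\in\Phi_\PL$ (recall $\neg\cond{\beta}\gamma$ abbreviates $\dcon{\beta}\neg\gamma$). Each disjunct then has the shape $a\wedge c$ with $a\in\Phi_\PL$ and $c$ a conjunction of formulas $\cond{\beta}\gamma,\ \dcon{\beta}\neg\gamma$, hence a \emph{closed} formula in the sense of Definition~\ref{def:Closed formulas}. Applying~\ref{validity:conditionals distribution conditionally distribution} repeatedly pushes $\cond{\alpha}$ through the disjunction; then~\ref{validity:conditionals distribution conjunction} splits $\cond{\alpha}(a\wedge c)$ into $\cond{\alpha}a\wedge\cond{\alpha}c$, where $\cond{\alpha}a$ is already flat. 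For $\cond{\alpha}c$ I would again take a DNF of $c$ over the literals $\cond{\beta}\gamma,\ \dcon{\beta}\neg\gamma$, use~\ref{validity:conditionals distribution conditionally distribution} and~\ref{validity:conditionals distribution conjunction} to reduce to formulas of the two shapes $\cond{\alpha}\cond{\beta}\gamma$ and $\cond{\alpha}\dcon{\beta}\gamma$ with $\beta,\gamma\in\Phi_\PL$, and finally eliminate each of these through~\ref{validity:conditionals conditionals} and~\ref{validity:conditionals dual conditionals} respectively. The right-hand sides of those two equivalences contain only $\cond{\cdot}$, $\dcon{\cdot}$, $\EEE$, $\AAA$ applied to $\Phi_\PL$-formulas, so they are flat; hence $\cond{\alpha}\psi$ is rewritten to a flat formula, and iterating over all conditional subformulas turns the whole formula flat. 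This yields $\sigma(\phi)$; effectiveness is clear since every step (DNF computation, pattern matching, substitution of the fixed equivalences) is algorithmic, and validity of $\phi\leftrightarrow\sigma(\phi)$ follows from Lemma~\ref{lemma:partial reduction conditionals valid} together with compositionality.

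The main obstacle will be purely organizational bookkeeping rather than a conceptual difficulty: one must check that every sub-combination over which $\cond{\alpha}$ is distributed via~\ref{validity:conditionals distribution conditionally distribution} genuinely meets the ``closed formula'' side condition (this is why the DNF is taken over exactly the literals above and the $\AP$-part is split off first with~\ref{validity:conditionals distribution conjunction}), and that no step reintroduces nesting --- in particular that the unfoldings of $\EEE$, $\AAA$ and the whole right-hand sides of~\ref{validity:conditionals conditionals}--\ref{validity:conditionals dual conditionals} stay within $\Phi_{\conwon\text{-}\mathsf{1}}$. A convenient way to make termination transparent is to run the outer induction on $d(\phi)$ exactly as above, so that the core step is always applied to $\cond{\alpha}\psi$ with $\psi$ flat and therefore produces something of modal depth $\le 1$; no more delicate complexity measure is then needed.
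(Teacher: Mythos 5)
Your overall architecture matches the paper's: an induction on modal depth, replacement of equivalent subformulas, a propositional normal form, distribution of $\cond{\alpha}$ via Items \ref{validity:conditionals distribution conjunction} and \ref{validity:conditionals distribution conditionally distribution} of Lemma \ref{lemma:partial reduction conditionals valid}, and elimination of the residual $\cond{\alpha}\cond{\beta}\gamma$ and $\cond{\alpha}\dcon{\beta}\gamma$ via Items \ref{validity:conditionals conditionals} and \ref{validity:conditionals dual conditionals}. But the core step as you describe it fails: you put $\psi$ into \emph{disjunctive} normal form with disjuncts $a_i\land c_i$ ($a_i\in\Phi_\PL$, $c_i$ closed) and then invoke Item \ref{validity:conditionals distribution conditionally distribution} to push $\cond{\alpha}$ through the outer disjunction. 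That item only licenses $\cond{\alpha}(\phi\lor\chi)\leftrightarrow(\cond{\alpha}\phi\lor\cond{\alpha}\chi)$ when $\chi$ is \emph{closed}, and a disjunct $a_i\land c_i$ is not closed (closed formulas, per Definition \ref{def:Closed formulas}, contain no bare atoms, and their whole point, via Fact \ref{fact:closed formulas}, is state-independence of truth value). Worse, the needed equivalence is simply invalid: take $W=\{u,v\}$, $V(p)=\{u\}$, $\alpha=\top$, and $\psi=(p\land\cond{p}p)\lor(\neg p\land\cond{p}p)$. Then $\sete{\updf{\CC}{\top}}=\{u,v\}$ and $\psi$ holds at both $u$ and $v$ (one disjunct at each), so $\cond{\top}\psi$ is true, while both $\cond{\top}(p\land\cond{p}p)$ and $\cond{\top}(\neg p\land\cond{p}p)$ are false. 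So ``applying Item \ref{validity:conditionals distribution conditionally distribution} repeatedly to push $\cond{\alpha}$ through the disjunction'' is not a sound step.

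The fix is local and is exactly what the paper does: use a \emph{conjunctive} normal form. Item \ref{validity:conditionals distribution conjunction} distributes $\cond{\alpha}$ over the outer conjunction unconditionally, and each clause has the shape $(\beta_1\lor\dots\lor\beta_k)\lor(\cond{\gamma_1}\lambda_1\lor\dots)\lor(\dcon{\eta_1}\theta_1\lor\dots)$, i.e.\ a single propositional disjunct together with disjuncts that are individually closed; Item \ref{validity:conditionals distribution conditionally distribution} then applies legitimately, peeling off one closed disjunct at a time. The rest of your argument (the outer induction guaranteeing the consequent is flat, the observation that the right-hand sides of Items \ref{validity:conditionals conditionals} and \ref{validity:conditionals dual conditionals} are flat, the replacement-of-equivalents step, and effectiveness) is fine, so with DNF replaced by CNF at this one point your proof goes through and coincides with the paper's.
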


\begin{proof}
~

We define the modal depth of formulas of $\Phi_{\conwon}$ with respect to $\cond{\cdot}$ in the usual way.

Pick a formula $\phi$ in $\Phi_{\conwon}$. Repeat the following steps until we cannot proceed.

\begin{itemize}
\item Pick a sub-formula $\cond{\alpha} \psi$ of $\phi$ whose modal depth with respect to $\cond{\cdot}$ is $2$ if $\phi$ has such a sub-formula.
\item Transform $\psi$ to $\chi_1 \land \dots \land \chi_n$, where all $\chi_i$ is in the form of $(\beta_1 \lor \dots \lor \beta_k) \lor (\cond{\gamma_1} \lambda_1 \lor \dots \lor \cond{\gamma_l} \lambda_l) \lor (\dcon{\eta_1} \theta_1 \lor \dots \lor \dcon{\eta_m} \theta_m)$, where all $\beta_i, \gamma_i, \lambda_i, \eta_i$ and $\theta_i$ are in $\Phi_\PL$.
\item Note $\cond{\alpha} \psi \leftrightarrow (\cond{\alpha}\chi_1 \land \dots \land \cond{\alpha}\chi_n)$ is valid by Item \ref{validity:conditionals distribution conjunction} in Lemma \ref{lemma:partial reduction conditionals valid}. Repeat the following steps until we cannot proceed:
\begin{itemize}
\item From $\cond{\alpha}\chi_1 \land \dots \land \cond{\alpha}\chi_n$, pick a conjunct $\cond{\alpha} \chi_i = \cond{\alpha} ((\beta_1 \lor \dots \lor \beta_k) \lor (\cond{\gamma_1} \lambda_1 \lor \dots \lor \cond{\gamma_l} \lambda_l) \lor (\dcon{\eta_1} \theta_1 \lor \dots \lor \dcon{\eta_m} \theta_m))$.
\item By Item \ref{validity:conditionals distribution conditionally distribution} in Lemma \ref{lemma:partial reduction conditionals valid}, $\cond{\alpha} \chi_i \leftrightarrow \xi$ is valid in $\conwon$, where $\xi = \cond{\alpha}(\beta_1 \lor \dots \lor \beta_k) \lor (\cond{\alpha}\cond{\gamma_1} \lambda_1 \lor \dots \lor \cond{\alpha}\cond{\gamma_l} \lambda_l) \lor (\cond{\alpha}\dcon{\eta_1} \theta_1 \lor \dots \lor \cond{\alpha}\dcon{\eta_m} \theta_m)$. In the ways specified by Items \ref{validity:conditionals conditionals} and \ref{validity:conditionals dual conditionals} in Lemma \ref{lemma:partial reduction conditionals valid}, transform $\xi$ to $\xi'$, whose modal depth with respect to $\cond{\cdot}$ is $1$.
\item Replace $\cond{\alpha} \chi_i$ by $\xi'$ in $\cond{\alpha}\chi_1 \land \dots \land \cond{\alpha}\chi_n$.
\end{itemize}
\end{itemize}

Define $\sigma(\phi)$ as the result. It is easy to see that $\sigma(\phi)$ is in $\Phi_{\conwon\text{-}\mathsf{1}}$ and $\phi \leftrightarrow \sigma(\phi)$ is valid.
\end{proof}

\subsection{Proofs about comparisons to the conditional logic $\logv$}
\label{subsec:Proofs about comparisons to the conditional logic V}

\newcommand{\SXztk}{X_0, \dots, X_k}
\newcommand{\SYztk}{Y_0, \dots, Y_k}
\newcommand{\Cztk}{X_0 \cap \dots \cap X_k}
\newcommand{\Cztl}{X_0 \cap \dots \cap X_l}
\newcommand{\Cotl}{X_1 \cap \dots \cap X_l}
\newcommand{\Czzl}{Z \cap X_0 \cap \dots \cap X_l}

\subsubsection{Finite universal relational models for $\Phi_\logv$}
\label{subsubsec:Finite universal relational models for phi v}

Note that $\Phi_{\logv\text{-}\mathsf{1}}$ contains no nested conditionals. This implies that we can just consider universal relational models without changing the set of valid formulas in $\Phi_{\logv\text{-}\mathsf{1}}$.

\begin{definition}[Universal relational models for $\Phi_\logv$] \label{def:relational models for Phi V-1}
A tuple $M = (W, <, V)$ is a universal relational model for $\Phi_\logv$ if

\begin{itemize}
\item $W$ and $V$ are as usual;
\item $<$ is a well-founded, irreflexive, transitive, almost connected binary relation on $W$.
\end{itemize}
\end{definition}

\begin{lemma} \label{lemma:sphere semantics is equivalent to relational semantics}
The class of universal relational models and the class of relational models determine the same set of valid formulas in $\Phi_{\logv\text{-}\mathsf{1}}$.
\end{lemma}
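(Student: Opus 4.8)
The plan is to prove Lemma \ref{lemma:sphere semantics is equivalent to relational semantics} by establishing the two inclusions between the sets of valid formulas separately. One direction is essentially trivial: every universal relational model $(W,<,V)$ is, up to the reindexing in Definition \ref{def:Relational models for Phi V}, a relational model, namely the one where $\Gamma(w) = (W,<)$ for every $w \in W$. Hence if $\phi \in \Phi_{\logv\text{-}\mathsf{1}}$ fails at some universal relational model, it fails at a relational model, so validity over all relational models implies validity over all universal relational models.

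The substantive direction is the converse: I would show that if $\phi \in \Phi_{\logv\text{-}\mathsf{1}}$ is valid over all universal relational models, then it is valid over all relational models. So suppose $M = (W,\Gamma,V)$ is a relational model and $M,w \not\Vdash \phi$ for some world $w$; the goal is to manufacture a universal relational model falsifying $\phi$. The key observation is that $\phi$ has no nested conditionals, so the truth value of $\phi$ at $w$ depends only on the truth values at $w$ of its propositional atoms and of the conditional subformulas $\psi \cona \chi$ occurring in it, and each such conditional is evaluated purely inside the structure $\Gamma(w) = (W_w, <_w)$ (with $\psi,\chi$ propositional, hence their truth sets do not depend on the point of evaluation). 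So I would simply restrict attention to the single sphere attached to $w$: let $M_w = (W_w, <_w', V\!\restriction\! W_w)$, where $<_w'$ is $<_w$ if it is already well-founded, and otherwise — invoking Theorem \ref{theorem:finite model property}, the finite model property — I would first replace the original falsifying model by a finite one, so that $<_w$ is automatically well-founded. Then $M_w$ is a universal relational model (irreflexivity, transitivity and almost-connectedness of $<_w$ are inherited, well-foundedness is free in the finite case), and a routine induction on the structure of $\phi \in \Phi_{\logv\text{-}\mathsf{1}}$ shows $M_w, u \Vdash \psi \iff M, u \Vdash \psi$ for every subformula $\psi$ of $\phi$ and every $u \in W_w$ — the conditional clause matching exactly because both semantics quantify over $W_w$ and compare with $<_w$. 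In particular $M_w, w \not\Vdash \phi$, contradicting validity over universal relational models.

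The step I expect to be the main obstacle — or at least the one requiring care rather than cleverness — is making the reduction to a \emph{single} sphere airtight. For a flat formula this is fine because no conditional sits inside another conditional, so we never need the spheres at worlds other than $w$; but one must be careful that the propositional subformulas $\psi, \chi$ appearing as antecedents and consequents really are evaluated the same way in $M$ and in $M_w$, which is immediate since their truth sets are determined pointwise by $V$ and $W_w \subseteq W$. A secondary point is the appeal to the finite model property to secure well-foundedness of $<_w$: strictly one only needs that $\phi$, if satisfiable at all in a relational model, is satisfiable in a finite one, which is exactly Theorem \ref{theorem:finite model property}; alternatively one could argue directly that over $\Phi_{\logv\text{-}\mathsf{1}}$ the simplified minimal-element semantics and the original Lewisian ``limit-free'' semantics agree, but routing through the finite model property is the cleanest. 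I would then close by remarking that combining both inclusions gives the claimed equality of valid-formula sets.
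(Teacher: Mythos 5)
Your overall strategy---the trivial embedding of universal relational models into relational models, plus restriction to the single sphere $\Gamma(w) = (W_w, <_w)$ at a refuting world for the converse, with the finite model property supplying well-foundedness---is exactly the reduction the paper gestures at (the paper gives no proof beyond the remark that flat formulas never look past the sphere at the evaluation world). But your execution of the substantive direction has a genuine gap at the restriction step. Definition \ref{def:Relational models for Phi V} only requires $W_w \subseteq W$; it guarantees neither $w \in W_w$ nor $W_w \neq \emptyset$. If $w \notin W_w$, the structure you call $M_w$ does not contain the point $w$ at which you need to refute $\phi$, so the final step ``in particular $M_w, w \not\Vdash \phi$'' is unavailable; and the obvious repair of adjoining $w$ as a new maximal element can flip a conditional $\alpha \cona \beta$ that was vacuously true at $w$ (because $|\alpha| \cap W_w = \emptyset$) into a false one (when $w \in |\alpha|$ but $w \notin |\beta|$), so your ``routine induction'' does not go through for the extended model either. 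Handling this case requires a genuine re-construction of the model, not a mere restriction.

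The case $W_w = \emptyset$ is worse than a technicality: then $M_w$ is not a model at all, and at such a point every conditional is vacuously true, so $\top \cona \bot$ is satisfied there; whereas in a universal relational model the domain is nonempty and the order well-founded, so under the minimal-element reading of the semantics every model has a $<$-minimal world, $\top \cona \bot$ fails everywhere, and $\neg(\top \cona \bot)$ is valid. Thus, with the definitions taken literally, the two classes do not even agree on flat validities, and no argument can close the lemma without first imposing (and justifying) an extra normality-type assumption such as $w \in W_w \neq \emptyset$ on the relational models. You identified ``making the single-sphere reduction airtight'' as the delicate point, but the delicacy you analyzed (evaluation of the propositional antecedents and consequents) is the harmless one; the one that actually blocks the proof is the possible absence of $w$ from, or emptiness of, its own sphere.
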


\subsubsection{Rephrasing the semantics for $\Phi_\logv$}
\label{subsubsec:Rephrasing the semantics for phi v}

\begin{fact}
Let $M = (W, <, V)$ be a universal relational model for $\Phi_\logv$. Define a relation $\equiv$ on $W$ as follows: For all $w$ and $u$, $w \equiv u$ if and only if $w \not < u$ and $u \not < w$. Then $\equiv$ is an equivalence relation. Let $\Delta_W$ be the partition of $W$ under $\equiv$. Define a relation $\ll$ on $\Delta_W$ as follows: For all $X$ and $Y$, $X \ll Y$ if and only if for all $x \in X$ and $y \in Y$, $x < y$. Then $\ll$ is a well-founded strict well-ordering on $\Delta_W$.
\end{fact}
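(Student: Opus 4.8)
The plan is to verify each asserted property in turn, with almost-connectedness of $<$ doing the real work in exactly two places; everything else reduces to the corresponding property of $<$ via choosing representatives.

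First I would check that $\equiv$ is an equivalence relation. Reflexivity is immediate from irreflexivity of $<$, and symmetry is built into the definition $w \equiv u \iff (w \not< u \text{ and } u \not< w)$. For transitivity, suppose $w \equiv u$ and $u \equiv v$ but, towards a contradiction, $w < v$ (the case $v < w$ is symmetric). Applying almost-connectedness to $w < v$ with the point $u$ gives $w < u$ or $u < v$, each of which contradicts one of the hypotheses $w \equiv u$, $u \equiv v$. Hence $\equiv$ is transitive, and $\Delta_W$ is a genuine partition into nonempty blocks.

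Next I would show $\ll$ is well-defined and connex on $\Delta_W$, i.e. for distinct blocks $X, Y$, either $x < y$ for all $x \in X, y \in Y$, or $y < x$ for all such pairs. Fix representatives $x_0 \in X$, $y_0 \in Y$; since $X \neq Y$ we have $x_0 \not\equiv y_0$, so without loss of generality $x_0 < y_0$. Given an arbitrary $x \in X$, apply almost-connectedness to $x_0 < y_0$ with the point $x$: either $x_0 < x$, impossible since $x \equiv x_0$, or $x < y_0$. Then apply almost-connectedness to $x < y_0$ with an arbitrary $y \in Y$: either $x < y$, as desired, or $y < y_0$, impossible since $y \equiv y_0$. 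So $x < y$ for all $x \in X, y \in Y$, that is $X \ll Y$; thus $\ll$ is a well-defined relation and any two distinct blocks are $\ll$-comparable.

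Then irreflexivity and transitivity of $\ll$ follow routinely: blocks are nonempty, so $X \ll X$ would force $x < x$ for some $x$; and $X \ll Y \ll Z$ gives $x < y < z$ hence $x < z$ for representatives, so $X \ll Z$. Finally, for well-foundedness, an infinite descending chain $X_0 \gg X_1 \gg \cdots$ would yield, on choosing $x_i \in X_i$, an infinite descending $<$-chain $x_0 > x_1 > \cdots$, contradicting well-foundedness of $<$. Being irreflexive, transitive, connex, and well-founded, $\ll$ is a strict well-ordering on $\Delta_W$ (equivalently, every nonempty subset of $\Delta_W$ has a $\ll$-least element). I expect the well-definedness argument for $\ll$ — the double application of almost-connectedness — to be the only step requiring genuine care; the rest is bookkeeping with representatives.
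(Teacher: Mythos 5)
Your proof is correct. The paper states this as a Fact without any proof, so there is nothing to compare it against; your argument --- with almost-connectedness doing the work both for transitivity of $\equiv$ and for the comparability of distinct blocks via the two-step application to representatives --- is exactly the expected one, and all the other properties reduce to the corresponding properties of $<$ as you say. One small refinement: for well-foundedness you can avoid the descending-chain formulation (and any implicit appeal to dependent choice) by taking a nonempty $\mathcal{S} \subseteq \Delta_W$, choosing a $<$-minimal element $x$ of $\bigcup \mathcal{S}$, and checking via connexity that the block containing $x$ is the $\ll$-least element of $\mathcal{S}$.
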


\begin{definition}[Sphere models for $\Phi_\logv$] \label{def:Sphere models for Phi V$}
A tuple $M = (W, \Delta_W, \ll, V)$ is a sphere model for $\Phi_\logv$ if

\begin{itemize}
\item $W$ and $V$ are as usual;
\item $\Delta_W$ is a partition of $W$ and $\ll$ is a well-founded strict well-ordering on $\Delta_W$.
\end{itemize}
\end{definition}

\begin{definition}[Sphere semantics for $\Phi_\logv$] \label{def:Semantics for Phi V}
Let $M = (W, \Delta_W, \ll, V)$ be a sphere model.

\begin{tabular}{lll}
$M,w \Vdash \phi \cona \psi$ & $\Leftrightarrow$ & \parbox[t]{27em}{if there is $X \in \Delta_W$ such that $X \cap \stas{\phi} \neq \emptyset$, then $X' \cap \stas{\phi} \subseteq \stas{\psi}$, where $X'$ is the $\ll$-least element in $\Delta_W$ such that $X' \cap \stas{\phi} \neq \emptyset$}
\end{tabular}

\noindent where $\stas{\phi} = \{x \mid M,x \Vdash \phi\}$ and $\stas{\psi} = \{x \mid M,x \Vdash \psi\}$.
\end{definition}

It can be verified that the following result holds:

\begin{lemma} \label{lemma:sphere semantics is equivalent to relational semantics}
Sphere semantics is equivalent to relational semantics for $\Phi_{\logv\text{-}\mathsf{1}}$.
\end{lemma}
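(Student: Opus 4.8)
The plan is to first discharge the unproven assertions in the preceding Fact, then isolate the combinatorial heart of the equivalence — a correspondence between the $<$-minimal elements of a set and its intersection with the $\ll$-least sphere meeting that set — and finally run a routine induction on flat formulas, invoking the well-founded simplification of the relational clause recorded earlier in the excerpt.

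First I would check that $\equiv$ is an equivalence relation: reflexivity and symmetry are immediate from irreflexivity and the symmetric shape of the definition, and for transitivity, if $w \equiv u$ and $u \equiv v$ but (towards a contradiction) $w < v$, then almost-connectedness applied to $w<v$ with the point $u$ gives $w<u$ or $u<v$, each contradicting one of the hypotheses; the case $v<w$ is symmetric. Next I would check that $\ll$ is a strict well-ordering on $\Delta_W$: irreflexivity and transitivity are inherited from $<$, and well-foundedness of $\ll$ follows from that of $<$ (a $\ll$-descending chain of classes yields a $<$-descending chain of representatives). The only delicate point is connectedness of $\ll$: given distinct classes $X,Y$, pick $x_0\in X$, $y_0\in Y$; since they lie in different $\equiv$-classes they are $<$-comparable, say $x_0<y_0$; then for arbitrary $x\in X$, $y\in Y$, almost-connectedness applied to $x_0<y_0$ with the point $x$ gives (using $x_0\not<x$) $x<y_0$, and applied again to $x<y_0$ with the point $y$ gives (using $y\not<y_0$) $x<y$, so $X\ll Y$. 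Read in reverse, these computations also show that a sphere model $(W,\Delta_W,\ll,V)$ induces a universal relational model via $w<u$ iff $[w]\ll[u]$ and that the two constructions are mutually inverse up to the relevant structure, so it suffices to establish truth-preservation in one direction.

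The key lemma I would then prove is: for every nonempty $S\subseteq W$, writing $X_S$ for the $\ll$-least class with $X_S\cap S\neq\emptyset$ (which exists because $\ll$ well-orders the nonempty family of classes meeting $S$), the set $X_S\cap S$ is exactly the set of $<$-minimal elements of $S$. If $x$ is $<$-minimal in $S$ with class $X$, then no class $\ll X$ meets $S$ (a witness would be $<$-below $x$), so $X=X_S$ and $x\in X_S\cap S$; conversely, if $x\in X_S\cap S$ and some $y\in S$ had $y<x$, then $[y]\ll X_S$ by connectedness of $\ll$ together with $y\not< x$ failing the alternative, and $[y]$ meets $S$, contradicting minimality of $X_S$. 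I would also note that since $<$ is well-founded, $S\neq\emptyset$ iff $S$ has a $<$-minimal element iff some class meets $S$, so the ``if''-clauses of the two semantic definitions are equivalent.

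With this lemma the equivalence is bookkeeping. Atoms and Boolean connectives are interpreted identically in both semantics, so $\stas{\alpha}$ denotes the same set for propositional $\alpha$; since $\Phi_{\logv\text{-}\mathsf{1}}$ forbids nesting, a conditional has the form $\alpha \cona \beta$ with $\alpha,\beta$ propositional. By the well-founded simplification, $M,w\Vdash \alpha\cona\beta$ (relational) iff every $<$-minimal element of $\stas{\alpha}$ satisfies $\beta$, which by the key lemma is exactly ``$X_{\stas{\alpha}}\cap\stas{\alpha}\subseteq\stas{\beta}$ whenever $\stas{\alpha}\neq\emptyset$'', i.e. the sphere clause; when $\stas{\alpha}=\emptyset$ both clauses hold vacuously. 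An induction on flat formulas then closes the argument, and combined with the inter-translation of the two model classes it gives full equivalence. The main obstacle is precisely the connectedness of $\ll$ — coaxing the universally quantified definition of $\ll$ out of a single comparison via the weak almost-connectedness axiom — so that computation is the one I would write out in full detail; the rest is routine.
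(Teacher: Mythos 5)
Your proof is correct; the paper itself offers no argument for this lemma (it is stated with ``It can be verified that\dots''), and your verification --- establishing the Fact about $\equiv$ and $\ll$ via almost-connectedness, the mutually inverse model constructions, and the key observation that the $<$-minimal elements of a nonempty $S$ are exactly $X_S\cap S$ for the $\ll$-least class $X_S$ meeting $S$ --- is precisely the intended one. The only blemish is the garbled justification in the converse half of the key lemma (``$y\not<x$ failing the alternative'' should read: $X_S\ll[y]$ would give $x<y$, which with $y<x$ and transitivity contradicts irreflexivity, so totality forces $[y]\ll X_S$), but the underlying step is sound.
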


\subsubsection{Rephrasing the semantics for $\Phi_\logv$ again}
\label{subsubsec:Rephrasing the semantics for phi v}

\begin{definition}[Pseudo sphere models for $\Phi_\logv$] \label{def:Pseudo sphere models for Phi V}
A tuple $M = (W, \Pi, V)$ is a pseudo sphere model for $\Phi_\logv$ if

\begin{itemize}
\item $W$ and $V$ are as usual;
\item $\Pi = (X_0, \dots, X_n, \dots)$ is a sequence of pairwise disjoint (possibly empty) subsets of $W$ such that the union of them is $W$.
\end{itemize}
\end{definition}

\begin{definition}[Pseudo sphere semantics for $\Phi_\logv$] \label{def:Pseudo semantics for Phi V}
Let $M = (W, \Pi, V)$ is a pseudo sphere model.

\begin{tabular}{lll}
$M,w \Vdash \phi \cona \psi$ & $\Leftrightarrow$ & \parbox[t]{27em}{if there is $X$ in $\Pi$ such that $X \cap \stas{\phi} \neq \emptyset$, then $X_l \cap \stas{\phi} \subseteq \stas{\psi}$, where $l$ is the least number such that $X_l \cap \stas{\phi} \neq \emptyset$}
\end{tabular}

\noindent where $\stas{\phi} = \{x \mid M,x \Vdash \phi\}$ and $\stas{\psi} = \{x \mid M,x \Vdash \psi\}$.
\end{definition}

It can be verified that the following result holds:

\begin{lemma} \label{lemma:pseudo sphere semantics is equivalent to sphere semantics}
Pseudo sphere semantics is equivalent to sphere semantics for $\Phi_{\logv\text{-}\mathsf{1}}$.
\end{lemma}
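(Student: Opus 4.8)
The plan is to show the two semantics validate the same formulas of $\Phi_{\logv\text{-}\mathsf{1}}$ by converting a countermodel of either kind into a countermodel of the other kind that agrees on all flat formulas. Concretely, I would prove two implications: every pseudo sphere model falsifying some $\phi\in\Phi_{\logv\text{-}\mathsf{1}}$ yields a sphere model falsifying $\phi$ (so sphere-validity implies pseudo-sphere-validity), and conversely every sphere model falsifying $\phi$ yields a pseudo sphere model falsifying $\phi$ (so pseudo-sphere-validity implies sphere-validity). Since in both semantics the truth of a conditional does not depend on the evaluation point, and since in the flat fragment conditionals occur only with propositional antecedent and consequent, each conversion reduces to a short structural induction whose only real case is $\psi\cona\xi$ with $\psi,\xi\in\Phi_\PL$.

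For the first implication, let $M=(W,\Pi,V)$ with $\Pi=(X_0,X_1,\dots)$ falsify $\phi$. I would delete the empty members of $\Pi$, let $\Delta_W$ be the collection of the remaining sets — they are nonempty, pairwise disjoint, and cover $W$, hence a partition — and order them by $X_i\ll X_j$ iff $i<j$; this is a well-founded strict well-ordering, being order-isomorphic to a subset of $(\omega,<)$. Thus $M'=(W,\Delta_W,\ll,V)$ is a sphere model on the same $W$ and $V$, so $\stas{\psi}$ and $\stas{\xi}$ are unchanged. The key observation is that if some member of $\Pi$ meets $\stas{\psi}$ then the first such member $X_l$ is automatically nonempty, hence lies in $\Delta_W$, and is the $\ll$-least element of $\Delta_W$ meeting $\stas{\psi}$ (lower-indexed members of $\Pi$ are either empty, and so absent from $\Delta_W$, or disjoint from $\stas{\psi}$); and if no member of $\Pi$ meets $\stas{\psi}$ then no member of $\Delta_W$ does either. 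So the pseudo sphere clause and the sphere clause for $\psi\cona\xi$ coincide, the induction goes through, and $M'$ falsifies $\phi$.

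For the converse implication I would invoke the finite model property. Suppose $\phi\in\Phi_{\logv\text{-}\mathsf{1}}$ is not valid in sphere semantics. By the equivalence of sphere and relational semantics on the flat fragment (Lemma~\ref{lemma:sphere semantics is equivalent to relational semantics}), $\phi$ is not valid in relational semantics; by Theorem~\ref{theorem:finite model property} it is falsified in a finite relational model, and running the construction underlying that same equivalence (passing through universal relational models and the partition built from $\equiv$) produces a \emph{finite} sphere model $M=(W,\Delta_W,\ll,V)$, $W$ finite, falsifying $\phi$. Now $\Delta_W$ is a finite $\ll$-chain $X_0\ll\dots\ll X_{n-1}$, so I would set $\Pi=(X_0,\dots,X_{n-1},\emptyset,\emptyset,\dots)$ and $M''=(W,\Pi,V)$, a pseudo sphere model. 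The appended empty sets meet no $\stas{\cdot}$, so the pseudo sphere clause for $\psi\cona\xi$ reduces verbatim to the sphere clause over $X_0,\dots,X_{n-1}$, whence $M''$ falsifies $\phi$; hence $\phi$ is not valid in pseudo sphere semantics. Combining the two implications gives the lemma.

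The step I expect to be the main obstacle is this converse direction: an arbitrary sphere model may order its spheres with order type well beyond $\omega$, and a pseudo sphere model — whose spheres form a sequence indexed by the naturals — cannot in general reproduce such an order. The finite model property of $\Phi_\logv$ is exactly what removes this mismatch, after which only bookkeeping remains; the first direction, by contrast, is elementary, since a pseudo sphere model is, once its empty spheres are deleted, already a sphere model of order type at most $\omega$.
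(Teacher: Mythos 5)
Your proof is correct, and it follows the route the paper itself implicitly relies on: the lemma is stated without proof (``It can be verified\dots''), the pseudo-sphere-to-sphere direction is exactly the discard-empties-and-order-by-index observation (cf.\ Lemma~\ref{lemma:removing empty elements}), and the converse is handled by the same move the paper makes in the proof of Theorem~\ref{theorem:valid in acl valid in conwon}, namely invoking the finite model property to get a finite sphere model whose spheres can be listed as a finite sequence padded with empty sets. You have also correctly isolated the only genuinely nontrivial point -- an arbitrary sphere model's well-order need not have order type at most $\omega$, so some finiteness or countability reduction is indispensable before a sphere model can be read as a pseudo sphere model.
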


By the following result, which is easy to show, empty elements in $\Pi$ do not matter in pseudo sphere semantics.

\begin{lemma} \label{lemma:removing empty elements}
Let $(M,w)$ and $(M',w)$ be two pointed pseudo sphere models, where $M = (W, \Pi, V)$ and $M' = (W, \Pi', V)$. Assume that $\Pi$ and $\Pi'$ are identical if we remove all the empty elements in them. Then $(M,w)$ and $(M',w)$ are equivalent for $\Phi_{\logv\text{-}\mathsf{1}}$.
\end{lemma}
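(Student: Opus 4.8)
The plan is to prove, by induction on the structure of a formula $\chi \in \Phi_{\logv\text{-}\mathsf{1}}$, that for every $w' \in W$ we have $M, w' \Vdash \chi$ if and only if $M', w' \Vdash \chi$; the lemma is then the special case $w' = w$. Since $M$ and $M'$ share the same carrier $W$ and the same valuation $V$, the Boolean cases (atoms, $\bot$, $\neg$, $\land$) are immediate. The only case carrying content is a conditional $\phi \cona \psi$; because $\Phi_{\logv\text{-}\mathsf{1}}$ contains no nested conditionals, $\phi$ and $\psi$ are purely propositional here, so $\stas{\phi}$ and $\stas{\psi}$ denote the same subsets of $W$ in both models, and I write simply $\stas{\phi}$ and $\stas{\psi}$ for them.

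The key observation is that a member of the sequence $\Pi$ that is \emph{empty as a subset of $W$} contributes nothing to the truth condition of any conditional: if $X_i = \emptyset$, then $X_i \cap \stas{\phi} = \emptyset$ for every $\phi$, so $X_i$ can neither witness ``there is $X$ in $\Pi$ with $X \cap \stas{\phi} \neq \emptyset$'' nor be the least-index sphere whose intersection with $\stas{\phi}$ is nonempty. I would enumerate the nonempty members of $\Pi$ in increasing order of their index as $X_{j_0}, X_{j_1}, \dots$; by the hypothesis of the lemma, this very list is also obtained by enumerating the nonempty members of $\Pi'$ in increasing order of index. Consequently ``some member of $\Pi$ meets $\stas{\phi}$'' holds iff some $X_{j_k}$ meets $\stas{\phi}$ iff the corresponding statement holds for $\Pi'$; and when it holds, the least-index member of $\Pi$ meeting $\stas{\phi}$ is the first $X_{j_k}$ in the list with $X_{j_k} \cap \stas{\phi} \neq \emptyset$, which is the very same set as the least-index member of $\Pi'$ meeting $\stas{\phi}$. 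Hence the truth condition of $\phi \cona \psi$ at $w'$ unfolds to the same assertion in $M$ and in $M'$, and the inductive step goes through.

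I do not expect a genuine obstacle here; the lemma is a routine bookkeeping fact. The only point requiring care is to keep the two senses of ``empty'' apart — we delete the spheres that equal $\emptyset$ as sets, not those whose intersection with some particular $\stas{\phi}$ happens to be empty — and to note that deleting the empty spheres preserves the original index order, so that ``the least index with nonempty intersection'' is being computed over the same ordered list of sets in both models. Once this is made precise, the conditional case is a one-line unfolding of Definition \ref{def:Pseudo semantics for Phi V}.
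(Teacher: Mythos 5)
Your argument is correct and is exactly the routine verification the paper has in mind (the paper states this lemma without proof, calling it ``easy to show''). The two points you flag --- that an empty sphere can neither witness the existential condition nor be the least-index sphere meeting $\stas{\phi}$, and that for flat formulas $\stas{\phi}$ and $\stas{\psi}$ depend only on $W$ and $V$ so the surviving spheres are compared as the same ordered list in both models --- are precisely what makes the conditional case go through.
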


\subsubsection{Two transformation lemmas}
\label{subsubsec:Two transformation lemmas}

\begin{lemma} \label{lemma:from acl models to conwon models}
Let $W$ be a nonempty set of states. Let $\SYztk$ be a sequence of pairwise disjoint nonempty subsets of $W$ such that $Y_0 \cup \dots \cup Y_k = W$. Define a sequence $\SXztk$ as follows:

\begin{itemize}
\item $X_0 = Y_0 \cup \dots \cup Y_k$
\item $X_1 = Y_1 \cup \dots \cup Y_k$
\item[$\vdots$]
\item $X_k = Y_k$
\end{itemize}

\begin{enumerate}
\item Let $Z \subseteq W$. Let $l$ be the greatest number such that $l \leq k$ and $\Czzl \note$. Then $\Czzl = Z \cap Y_l$ and $l$ is the greatest number such that $Z \cap Y_l \note$.
\item Let $Z \subseteq W$. Let $l$ be the greatest number such that $l \leq k$ and $Z \cap Y_l \note$. Then $\Czzl = Z \cap Y_l$ and $l$ is the greatest number such that $\Czzl \note$.
\end{enumerate}
\end{lemma}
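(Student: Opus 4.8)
The plan is to prove both items by essentially the same bookkeeping argument, exploiting the two defining features of the construction: first, that the sequence $(X_0, \dots, X_k)$ is \emph{descending} under inclusion, i.e. $X_0 \supseteq X_1 \supseteq \dots \supseteq X_k$, which is immediate from the definitions $X_i = Y_i \cup \dots \cup Y_k$; and second, that $X_i \setminus X_{i+1} = Y_i$ for each $i < k$, with $X_k = Y_k$, because the $Y_j$ are pairwise disjoint. From the descending property, for any $Z$ the sets $Z \cap X_0 \supseteq Z \cap X_1 \supseteq \dots \supseteq Z \cap X_k$ also form a descending chain, so the finite intersection collapses: $Z \cap X_0 \cap \dots \cap X_l = Z \cap X_l$ for every $l \le k$. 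This observation does almost all the work and should be stated first.

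For Item 1: let $l$ be the greatest index with $Z \cap X_0 \cap \dots \cap X_l \ne \emptyset$. By the collapse just noted, this says $Z \cap X_l \ne \emptyset$ and hence also $Z \cap X_0 \cap \dots \cap X_l = Z \cap X_l$; it remains to identify $Z \cap X_l$ with $Z \cap Y_l$ and to show $l$ is the largest index with $Z \cap Y_l \ne \emptyset$. Here I would split on whether $l = k$: if $l = k$, then $X_k = Y_k$ gives $Z \cap X_l = Z \cap Y_l$ directly, and there is no index beyond $k$. If $l < k$, then maximality of $l$ gives $Z \cap X_{l+1} = \emptyset$; combined with $X_l = Y_l \cup X_{l+1}$ this yields $Z \cap X_l = Z \cap Y_l$. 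Moreover $Z \cap X_{l+1} = \emptyset$ together with the descending chain forces $Z \cap X_j = \emptyset$, hence $Z \cap Y_j \subseteq Z \cap X_j = \emptyset$, for all $j$ with $l < j \le k$; and $Z \cap Y_l = Z \cap X_l \ne \emptyset$. So $l$ is indeed the greatest index with $Z \cap Y_l \ne \emptyset$.

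Item 2 is the converse reading and follows by running the same equivalences backwards: given that $l$ is the greatest index with $Z \cap Y_l \ne \emptyset$, one shows $Z \cap X_j = \emptyset$ for $j > l$ (since $X_j = Y_j \cup \dots \cup Y_k$ and all those $Y$'s meet $Z$ trivially) and $Z \cap X_l \ne \emptyset$ (since $Z \cap Y_l \ne \emptyset$ and $Y_l \subseteq X_l$), whence by the collapse $Z \cap X_0 \cap \dots \cap X_l = Z \cap X_l = Z \cap Y_l \ne \emptyset$ while $Z \cap X_0 \cap \dots \cap X_{l+1} = Z \cap X_{l+1} = \emptyset$, so $l$ is the greatest index making the initial intersection nonempty. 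I do not anticipate a real obstacle here; the only thing to be careful about is the boundary case $l = k$ (no "$l+1$" index available) and, trivially, the case $Z \cap W = \emptyset$, where no such $l$ exists and the statement is vacuous — it may be cleanest to note at the outset that since $X_0 = W$, the hypothesis "$\exists l$" in each item is equivalent to $Z \ne \emptyset$, so both items carry the standing assumption $Z \cap W \ne \emptyset$ implicitly through the phrase "the greatest number such that".
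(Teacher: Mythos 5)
Your proof is correct and follows essentially the same route as the paper's: both arguments rest on the chain $X_0 \supseteq X_1 \supseteq \dots \supseteq X_k$ collapsing the intersections to $Z \cap X_l$, the identity $X_l = Y_l \cup X_{l+1}$ with the $Y_j$ disjoint, and the same case split on $l = k$ versus $l < k$. Your version merely states the collapse explicitly up front and replaces the paper's element-chasing with the set identity $Z \cap X_l = (Z \cap Y_l) \cup (Z \cap X_{l+1})$, which is a cosmetic difference.
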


\begin{proof}
~

1. Assume $l = k$. Note $\Cztk = X_k = Y_k$. It is easy to see that the result holds.

Assume $l < k$.

We first show that $l$ is the greatest number such that $Z \cap Y_l \note$.

Note $\Cztl = X_l$. Then $Z \cap X_l \note$. Note $Z \cap X_0 \cap \dots \cap X_{l+1} \eque$ and $X_0 \cap \dots \cap X_{l+1} = X_{l+1}$. Then $Z \cap X_{l+1} = \emptyset$. Note $X_l = Y_l \cup \dots \cup Y_k$ and $X_{l+1} = Y_{l+1} \cup \dots \cup Y_k$. Then $Z \cap Y_{l+1} \eque, \dots, Z \cap Y_k \eque$. Then $Z \cap Y_l \note$. Then $l$ is the greatest number such that $Z \cap Y_l \note$.

Then we show $\Czzl = Z \cap Y_l$.

Let $a \in \Czzl$. Then $a \in Z \cap X_l$. Note $X_l = Y_l \cup \dots \cup Y_k$. Then $a \in Z \cap (Y_l \cup \dots \cup Y_k)$. We claim $a \notin Y_{l+1}, \dots, a \notin Y_k$. Why? Suppose $a \in Y_{l+1}$. Note $X_{l+1} = Y_{l+1} \cup \dots \cup Y_k$. Then $a \in X_{l+1}$. Then $a \in Z \cap X_0 \cap \dots \cap X_{l+1}$. Then $l$ is not the greatest number such that $\Czzl \note$. We have a contradiction. Similarly, we know $a \notin Y_{l+2}, \dots, a \notin Y_k$. Then $a \in Y_l$. Then $a \in Z \cap Y_l$.

Let $a \in Z \cap Y_l$. By the definitions of $X_0, \dots, X_l$, we know $a \in X_0, \dots, a \in X_l$. Then $a \in Z \cap \Cztl$.

2. Note $\Cztl = X_l = Y_l \cup \dots \cup Y_k$ and $X_0 \cap \dots \cap X_{l+1} = X_{l+1} = Y_{l+1} \cup \dots \cup Y_k$. Also note $Z \cap Y_{l+1} = \emptyset$, \dots, $Z \cap Y_k = \emptyset$.

We first show that $l$ is the greatest number such that $\Czzl \note$.

Note $l$ is the greatest number such that $Z \cap Y_l \note$. Then $Z \cap \Cztl \note$. Then $Z \cap X_{l+1} \eque$. Then $Z \cap \Cztl \cap X_{l+1} = \emptyset$. Then $l$ is the greatest number such that $\Czzl \note$.

Then we show $\Czzl = Z \cap Y_l$.

Let $a \in Z \cap \Cztl$. Then $a \in Z \cap X_l$. Then $a \in Z \cap (Y_l \cup \dots \cup Y_k)$. Note $l$ is the greatest number such that $Z \cap Y_l \note$. Then $a \in Z \cap Y_l$.

Let $a \in Z \cap Y_l$. Then $a \in Y_l \cup \dots \cup Y_k = X_l = \Cztl$. Then $a \in Z \cap \Cztl$.
\end{proof}

\begin{lemma} \label{lemma:from conwon models to acl models}
Let $W$ be a nonempty set of states. Let $\SXztk$ be a sequence of subsets of $W$, where $X_0 = W$. Define a sequence $\SYztk$ as follows:

\begin{itemize}
\item $Y_0 = X_0-X_1$
\item $Y_1 = X_0 \cap X_1 - X_2$ 
\item[$\vdots$]
\item $Y_k = \Cztk$
\end{itemize}

\begin{enumerate}
\item Then $\SYztk$ are pairwise disjoint and $Y_0 \cup \dots \cup Y_k = W$.
\item Let $Z \subseteq W$. Let $l$ be the greatest number such that $l \leq k$ and $\Czzl \note$. Then $\Czzl = Z \cap Y_l$ and $l$ is the greatest number such that $Z \cap Y_l \note$.
\item Let $Z \subseteq W$. Let $l$ be the greatest number such that $l \leq k$ and $Z \cap Y_l \note$. Then $\Czzl = Z \cap Y_l$ and $l$ is the greatest number such that $\Czzl \note$.
\end{enumerate}
\end{lemma}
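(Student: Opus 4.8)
The plan is to run the argument through the descending chain of prefix intersections, in close parallel with the companion Lemma~\ref{lemma:from acl models to conwon models}. Write $C_i := X_0 \cap \dots \cap X_i$ for $0 \le i \le k$. Since $X_0 = W$ we have $C_0 = W$, and $C_{i+1} = C_i \cap X_{i+1} \subseteq C_i$, so $C_0 \supseteq C_1 \supseteq \dots \supseteq C_k$. With this notation the defining clauses of the $Y_i$ become $Y_i = C_i - X_{i+1} = C_i - C_{i+1}$ for $i < k$ and $Y_k = C_k$; in particular $Y_i \subseteq C_i$ for every $i$. The one identity I will lean on throughout is, for each $i \le k$,
\[ C_i = Y_i \cup Y_{i+1} \cup \dots \cup Y_k, \]
proved by downward induction on $i$ from $Y_k = C_k$ together with $Y_j \cup C_{j+1} = (C_j - C_{j+1}) \cup C_{j+1} = C_j$ (using $C_{j+1} \subseteq C_j$).

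Part~1 is then immediate: taking $i = 0$ in the identity gives $Y_0 \cup \dots \cup Y_k = C_0 = W$, and for $i < j$ we have $Y_i \cap C_{i+1} = \emptyset$ while $Y_j \subseteq C_j \subseteq C_{i+1}$, hence $Y_i \cap Y_j = \emptyset$.

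For Parts~2 and~3 I would dispose of the case $l = k$ at once ($Y_k = C_k$ and $k$ is the maximal index, so both conclusions hold trivially) and concentrate on $l < k$. In that case the whole argument hinges on a single fact, $Z \cap C_{l+1} = \emptyset$: once this is known, $Z \cap Y_l = (Z \cap C_l) - C_{l+1} = Z \cap C_l$ because $C_{l+1} \subseteq C_l$, and moreover for every $j$ with $l < j \le k$ both $Z \cap Y_j \subseteq Z \cap C_j \subseteq Z \cap C_{l+1} = \emptyset$ and $Z \cap C_j \subseteq Z \cap C_{l+1} = \emptyset$, which yields simultaneously the intersection equality and the transfer of maximality between the two sequences. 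So everything reduces to establishing $Z \cap C_{l+1} = \emptyset$. In Part~2 this is exactly the maximality hypothesis, since $l$ is by assumption the greatest index with $Z \cap C_l \neq \emptyset$ and $l+1 \le k$. In Part~3 it follows from the displayed identity, which gives $Z \cap C_{l+1} = \bigcup_{j=l+1}^{k} (Z \cap Y_j)$, and each summand is empty because $l$ is the greatest index with $Z \cap Y_l \neq \emptyset$.

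I do not expect a genuine obstacle. The only place where care is needed is the telescoping identity $C_i = Y_i \cup \dots \cup Y_k$ and the monotonicity of the chain $C_0 \supseteq \dots \supseteq C_k$; granting those, the rest is routine set algebra, structurally identical to the proof already given for Lemma~\ref{lemma:from acl models to conwon models}.
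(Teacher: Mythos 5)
Your proof is correct and follows essentially the same route as the paper's: both arguments rest on the facts that the prefix intersections $C_i = X_0 \cap \dots \cap X_i$ form a decreasing chain, that $Y_i = C_i - C_{i+1}$ with $Y_i \subseteq C_i$, and that maximality of $l$ in either sequence amounts to $Z \cap C_{l+1} = \emptyset$. Your version merely packages the paper's element-chasing into the telescoping identity $C_i = Y_i \cup \dots \cup Y_k$, which lets Parts 2 and 3 share a single reduction; this is a tidier write-up of the same argument rather than a different one.
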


\begin{proof}
~

1. Let $i, j \leq k$ be such that $i \neq j$. We want to show $Y_i \cap Y_j = \emptyset$. Without loss of any generality, assume $i < j$. Assume $Y_i \cap Y_j \neq \emptyset$. Then there is $a$ such that $a \in Y_i$ and $a \in Y_j$. Note $Y_i = X_0 \cap \dots \cap X_i - X_{i+1}$ and $Y_j = X_0 \cap \dots \cap X_j - X_{j+1}$. Then $a \notin X_{i+1}$ and $a \in X_{i+1}$. There is a contradiction.

Let $a \in W$. Let $l$ be the the greatest number such that $a \in \Cztl$. Note $l$ exists, as $X_0 = W$. Suppose $l = k$. Then $a \in Y_k$. Suppose $l < k$. Then $a \notin X_{l+1}$. Then by the definition of $Y_l$, $a \in Y_l$.

2. We first show $\Czzl = Z \cap Y_l$.

Let $a \in \Czzl$. As $l$ is the greatest number such that $\Czzl \note$, $a \notin X_{l+1}$. Note $Y_l = \Cztk - X_{l+1}$. Then $a \in Y_l$. Then $a \in Z \cap Y_l$.

Let $a \in Z \cap Y_l$. By the definition of $Y_l$, $a \in \Cztl$. Then $a \in \Czzl$.

Then we show that $l$ is the greatest number such that $Z \cap Y_l \note$.

Assume that there is a natural number $l' > l$ such that $l' \leq k$ and $Z \cap Y_{l'} \note$. Let $a \in Z \cap Y_{l'}$. By the definition of $Y_{l'}$, $a \in Z \cap X_0 \cap \dots \cap X_{l'}$. Then $l$ is not the greatest number such that $\Czzl \note$. There is a contradiction.

3. We first show that $l$ is the greatest number such that $\Czzl \note$.

Assume that there is a natural number $l' > l$ such that $l' \leq k$ and $l'$ is the greatest number such that $Z \cap X_0 \cap \dots \cap X_{l'} \note$. Assume $l' = k$. Note $Y_k = \Cztk$. Then $Z \cap Y_k \note$. Then $l$ is not the greatest number such that $Z \cap Y_l \note$. We have a contradiction. Assume $l' < k$. Let $a \in Z \cap X_0 \cap \dots \cap X_{l'}$. As $l'$ is the greatest number such that $Z \cap X_0 \cap \dots \cap X_{l'} \note$, we know $a \notin X_{l' + 1}$. By the definition of $Y_{l'}$, $a \in Z \cap Y_{l'}$. Then $l$ is not the greatest number such that $Z \cap Y_l \note$. We have a contradiction.

Then we show $\Czzl = Z \cap Y_l$.

Let $a \in \Czzl$. As $l$ is the greatest number such that $\Czzl \note$, $a \notin X_{l+1}$. By the definition of $Y_{l}$, $a \in Y_l$. Then $a \in Z \cap Y_l$.

Let $a \in Z \cap Y_l$. By the definition of $Y_{l}$, $a \in \Cztl$. Then $a \in \Czzl$.
\end{proof}

\subsubsection{The equivalence theorem}
\label{subsubsec:The equivalence theorem}

\setcounter{theorem}{7}

\begin{theorem}
For every $\phi \in \Phi_{\logv\text{-}\mathsf{1}}$, $\phi$ is valid in $\logv$ if and only if $\phi$ is valid in $\conwon$.
\end{theorem}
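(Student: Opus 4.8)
The plan is to prove the two implications in contrapositive form by transforming countermodels back and forth, with Lemmas~\ref{lemma:from acl models to conwon models} and~\ref{lemma:from conwon models to acl models} serving as the bridge. First I would put both sides into a convenient normal form. On the $\logv$ side, by Theorem~\ref{theorem:finite model property} and the chain of semantic equivalences in Section~\ref{subsec:Proofs about comparisons to the conditional logic V} (universal relational $\equiv$ relational, sphere $\equiv$ relational, pseudo sphere $\equiv$ sphere) together with Lemma~\ref{lemma:removing empty elements}, a formula of $\Phi_{\logv\text{-}\mathsf{1}}$ is falsifiable in $\logv$ iff it is false at $w$ in some finite pseudo sphere model $M=(W,\Pi,V)$ whose shells $\Pi=(X_0,\dots,X_k)$ are all nonempty. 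On the $\conwon$ side I would use the sequence-based semantics (Definitions~\ref{def:Contexts S}--\ref{def:Update of contexts S}) and observe that placing $W$ at the front of a context changes the truth value of no flat formula: $\updf{\CC}{\alpha}$ always prepends $\defg{\alpha}$, and when $\defg{\alpha}\neq\emptyset$ inserting the ``transparent'' set $W$ immediately after $\defg{\alpha}$ alters neither the longest initial segment with nonempty intersection nor that intersection. So a flat formula is falsifiable in $\conwon$ iff it is false at some $(\MM,\CC,w)$ whose context, as a sequence, is $\CC=(\DD_0,\dots,\DD_n)$ with $\DD_0=W$.

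The heart of the argument is a dictionary between pseudo sphere shells and $\conwon$ contexts on a fixed state space $(W,V)$. From a finite pseudo sphere model with nonempty shells $X_0,\dots,X_k$ (ordered from the innermost outward), I would reverse the list and feed it to Lemma~\ref{lemma:from acl models to conwon models}; its output is a descending sequence $W = X'_0 \supseteq X'_1 \supseteq \dots \supseteq X'_k$, which I take to be the $\conwon$ context $\CC=(X'_0,\dots,X'_k)$. Conversely, from $(\MM,\CC,w)$ with $\CC=(\DD_0,\dots,\DD_n)$ and $\DD_0=W$, I would feed $\DD_0,\dots,\DD_n$ to Lemma~\ref{lemma:from conwon models to acl models}, obtaining pairwise disjoint shells $Y_0,\dots,Y_n$ with union $W$, reverse them to form a pseudo sphere model (empty shells are allowed there, and discardable via Lemma~\ref{lemma:removing empty elements}), and then carry the countermodel across to a genuine $\logv$ relational model through the equivalence lemmas. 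In each direction the claim to establish is: for every $\phi\in\Phi_{\logv\text{-}\mathsf{1}}$ (read with $\cona$ as $\cond{\cdot}$), $M,w\Vdash\phi$ iff $\MM,\CC,w\Vdash\phi$. This goes by induction on $\phi$; the cases $p,\bot,\neg,\land$ are immediate because neither semantics lets the context or the spheres interfere with them --- in particular $\stas{\alpha}=\defg{\alpha}$ for $\alpha\in\Phi_\PL$, a fact I would use freely.

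The only substantive case is $\phi=\cond{\alpha}\beta$ (equivalently $\alpha\cona\beta$) with $\alpha,\beta\in\Phi_\PL$; put $A:=\stas{\alpha}=\defg{\alpha}$. On the $\conwon$ side $\cond{\alpha}\beta$ holds iff $\sete{\updf{\CC}{\alpha}}\subseteq\defg{\beta}$, and $\sete{\updf{\CC}{\alpha}}$ is $\emptyset$ if $A=\emptyset$ and otherwise is $A$ intersected with the \emph{smallest} nonempty partial intersection of $\CC$ that still meets $A$. On the $\logv$ side $\alpha\cona\beta$ holds iff $A=\emptyset$ or $X_l\cap A\subseteq\stas{\beta}$, where $X_l$ is the \emph{first} shell meeting $A$; and since lower shells miss $A$, $X_l\cap A$ is $A$ intersected with the \emph{smallest sphere} (union of tail/initial shells) meeting $A$. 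The order reversal built into the construction, together with the relevant parts of Lemma~\ref{lemma:from acl models to conwon models} (resp. Lemma~\ref{lemma:from conwon models to acl models}) applied with $Z:=A$, says precisely that these two sets coincide and that the degenerate ``$A=\emptyset$'' cases line up; hence the two truth conditions agree and the induction closes. Chaining this with the reductions of the first paragraph yields the theorem in both directions.

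The delicate point --- and the reason the two transformation lemmas are isolated in advance --- is exactly this last case: the $\logv$ spheres grow from the most-similar worlds outward, whereas a $\conwon$ context is used by installing $\defg{\alpha}$ as the top-priority default and then intersecting with as many remaining defaults as stay consistent, so the natural orderings run opposite and ``prepending $\defg{\alpha}$'' must be matched against ``zoom in to the innermost sphere meeting $\stas{\alpha}$''. Once Lemmas~\ref{lemma:from acl models to conwon models}--\ref{lemma:from conwon models to acl models} are available this matching is mechanical, but making their hypotheses fit the $\sete{\cdot}$-after-update computation in every case, including the degenerate ones ($A=\emptyset$, a single shell, empty shells), is where the care has to go.
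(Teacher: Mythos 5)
Your proposal is correct and follows essentially the same route as the paper: normalize the $\logv$ side to finite pseudo sphere models and the $\conwon$ side to contexts headed by $W$, then transfer countermodels in both directions via Lemmas~\ref{lemma:from acl models to conwon models} and~\ref{lemma:from conwon models to acl models}, with the order reversal absorbed into the shell-to-context dictionary exactly as you describe. Your explicit induction over all flat formulas is if anything slightly more careful than the paper's reduction to the satisfiability of a single conditional $\alpha \cona \beta$, but the underlying model transformations are identical.
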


\begin{proof}
Let $\alpha$ and $\beta$ be in $\Phi_\PL$. It suffices to show that $\alpha \cona \beta$ is satisfiable in $\logv$ if and only if $\cond{\alpha} \beta$ is satisfiable in $\conwon$.

Assume $\alpha \cona \beta$ is satisfiable in $\logv$. Then $\alpha \cona \beta$ is true at a pointed sphere model for $\Phi_\logv$. As $\Phi_\logv$ has the finite model property, $\alpha \cona \beta$ is true at a pointed finite sphere model $(M,w)$ for $\Phi_\logv$. Let $M = (W,\Pi,V)$, where $\Pi = (Y_k, \dots, Y_0)$.

Define $\MM' = (W, V)$, which is a model for $\conwon$. From the sequence $(Y_0, \dots, K_k)$, define a context $\CC = (\SXztk)$ for $\MM'$ as follows:

\begin{itemize}
\item $X_0 = Y_0 \cup \dots \cup Y_k$
\item $X_1 = Y_1 \cup \dots \cup Y_k$
\item[$\vdots$]
\item $X_k = Y_k$
\end{itemize}

Assume $\defg{\alpha} \eque$. Let $x \in W$. Then $\MM',W,x \Vdash \cond{\alpha} \beta$. Here $W$ indicates a special context. Then $\cond{\alpha} \beta$ is satisfiable in $\conwon$.

Assume $\defg{\alpha} \note$. As $Y_0 \cup \dots \cup Y_k = W$, there is $Y_i$ such that $\defg{\alpha} \cap Y_i \note$. Let $l$ be the greatest number such that $l \leq k$ and $\defg{\alpha} \cap Y_l \note$. Note $M,w \Vdash \alpha \cona \beta$. Then $\defg{\alpha} \cap Y_l \subseteq \defg{\beta}$. By Lemma \ref{lemma:from acl models to conwon models}, $\defg{\alpha} \cap \Cztl = \defg{\alpha} \cap Y_l$ and $l$ is the greatest number such that $\defg{\alpha} \cap \Cotl \note$. Then $\sete{\updf{\CC}{\alpha}} = \defg{\alpha} \cap \Cotl$. Then $\sete{\updf{\CC}{\alpha}} \subseteq \defg{\beta}$. Let $x \in \sete{\updf{\CC}{\alpha}}$. Then $\MM',\CC,x \Vdash \cond{\alpha} \beta$. Then $\cond{\alpha} \beta$ is satisfiable in $\conwon$.

Assume $\cond{\alpha} \beta$ is satisfiable in $\conwon$. Then $\cond{\alpha} \beta$ is true at a contextualized pointed model $(\MM,\CC,w)$ for $\conwon$. Then $\sete{\updf{\CC}{\alpha}} \subseteq \defg{\beta}$. Let $\MM = (W,V)$. Define $\CC' = \updd{\CC}{W}$. Let $\CC' = (\SXztk)$. Note $X_0 = W$. It can be verified that $\sete{\updf{\CC}{\alpha}} = \sete{\updf{\CC'}{\alpha}}$. Then $\sete{\updf{\CC}{\alpha'}} \subseteq \defg{\beta}$.

Define a sequence $(\SYztk)$ as follows:

\begin{itemize}
\item $Y_0 = X_0-X_1$
\item $Y_1 = X_0 \cap X_1 - X_2$ 
\item[$\vdots$]
\item $Y_k = \Cztk$
\end{itemize}

\noindent By Lemma \ref{lemma:from conwon models to acl models}, $\SYztk$ are pairwise disjoint and $Y_0 \cup \dots \cup Y_k = W$. Let $\Pi = (Y_k, \dots, Y_0)$. Define $M' = (W,\Pi,V)$, which is a pseudo sphere model for $\logv$.

Assume $\defg{\alpha} \eque$. Then there is no $Y_i$ in $\Pi$ such that $\defg{\alpha} \cap Y_i \neq \emptyset$. Let $x \in W$. Then $M',x \Vdash \alpha \cona \beta$. Then $\alpha \cona \beta$ is satisfiable in $\logv$.

Assume $\defg{\alpha} \note$. Then $\sete{\updf{\CC'}{\alpha}} \note$. Let $\sete{\updf{\CC'}{\alpha}} = \defg{\alpha} \cap \Cztl$. By Lemma \ref{lemma:from conwon models to acl models}, $\defg{\alpha} \cap \Cztl = \defg{\alpha} \cap Y_l$ and $l$ is the greatest number such that $l \leq k$ and $\defg{\alpha} \cap Y_l \note$. Note $\sete{\updf{\CC}{\alpha'}} \subseteq \defg{\beta}$. Then $\defg{\alpha} \cap Y_l \subseteq \defg{\beta}$. Let $x \in W$. Then $M',x \Vdash \alpha \cona \beta$. Then $\alpha \cona \beta$ is satisfiable in $\logv$.
\end{proof}

\end{document}